\def\@ACM@checkaffil{% Only warnings
    \if@ACM@instpresent\else
    \ClassWarningNoLine{\@classname}{No institution present for an affiliation}%
    \fi
    \if@ACM@citypresent\else
    \ClassWarningNoLine{\@classname}{No city present for an affiliation}%
    \fi
    \if@ACM@countrypresent\else
        \ClassWarningNoLine{\@classname}{No country present for an affiliation}%
    \fi
}
\begin{document}
%\title{ChebyPush: Chebyshev Accelerated Graph Propagation}
\title{Scaling Up Graph Propagation Computation on Large Graphs: A Local Chebyshev Approximation Approach}

%Accelerated and Local Algorithms for Graph Propagation
\makeatletter
\newenvironment{breakablealgorithm}
{% \begin{breakablealgorithm}
	\begin{center}
		\refstepcounter{algorithm}% New algorithm
		\hrule height.8pt depth0pt \kern2pt% \@fs@pre for \@fs@ruled
		\renewcommand{\caption}[2][\relax]{% Make a new \caption
			{\raggedright\textbf{\ALG@name~\thealgorithm} ##2\par}%
			\ifx\relax##1\relax % #1 is \relax
			\addcontentsline{loa}{algorithm}{\protect\numberline{\thealgorithm}##2}%
			\else % #1 is not \relax
			\addcontentsline{loa}{algorithm}{\protect\numberline{\thealgorithm}##1}%
			\fi
			\kern2pt\hrule\kern2pt
		}
	}{% \end{breakablealgorithm}
	\kern2pt\hrule\relax% \@fs@post for \@fs@ruled
\end{center}
}
%\makeatother

%\comment{
\author{Yichun Yang}
\affiliation{%
  \institution{Beijing Institute of Technology}
  %\streetaddress{P.O. Box 1212}
  \city{Beijing}
  \state{China}
  %\postcode{43017-6221}
}\email{3120241475@bit.edu.cn} %\and
\author{Rong-Hua Li}
%\orcid{0000-0002-1825-0097}
\affiliation{%
  \institution{Beijing Institute of Technology}
  %\streetaddress{1 Th{\o}rv{\"a}ld Circle}
  \city{Beijing}
  \country{China}
}\email{lironghuabit@126.com} %\and
\author{Meihao Liao}
%\orcid{0000-0001-5109-3700}
\affiliation{%
  \institution{Beijing Institute of Technology}
  %\streetaddress{1 Th{\o}rv{\"a}ld Circle}
  \city{Beijing}
  \country{China}
}\email{mhliao@bit.edu.cn}
\author{Longlong Lin}
\affiliation{%
  \institution{Southwest University}
  \city{Chongqing}
  \country{China}
}\email{longlonglin@swu.edu.cn} %\and
\author{Guoren Wang}
\affiliation{%
  \institution{Beijing Institute of Technology}
  %\streetaddress{1 Th{\o}rv{\"a}ld Circle}
  \city{Beijing}
  \country{China}
}\email{wanggrbit@gmail.com}
%}

%\input{command}
\newcommand{\ignore}[1]{}
\newcommand{\nop}[1]{}
\newcommand{\eat}[1]{}
\newcommand{\kw}[1]{{\ensuremath{\mathsf{#1}}}\xspace}
\newcommand{\kwnospace}[1]{{\ensuremath {\mathsf{#1}}}}
\newcommand{\stitle}[1]{\vspace{1ex} \noindent{\bf #1}}
\long\def\comment#1{}
\newcommand{\eop}{\hspace*{\fill}\mbox{$\Box$}}

\newtheorem{property}{Property}
\newtheorem{assumption}{Assumption}

\newcommand{\rank}{\kw{rank}}
\newcommand{\push}{\kw{Push}}
\newcommand{\truncatepush}{\kw{Push}}
\newcommand{\hkrelax}{\kw{Hk\ Relax}}
\newcommand{\hkpush}{\kw{Hk\ Push}}
\newcommand{\agp}{\kw{AGP}}
\newcommand{\tea}{\kw{TEA}}
\newcommand{\teaplus}{\kw{TEA+}}
\newcommand{\ppr}{\kw{PPR}}
\newcommand{\ssppr}{\kw{SSPPR}}
\newcommand{\hkpr}{\kw{HKPR}}
\newcommand{\powerpush}{\kw{PwPush}}
\newcommand{\pwpush}{\kw{PowerPush}}
\newcommand{\powerpushsor}{\kw{PwPushSOR}}
\newcommand{\ltwocheb}{\kw{ChebyPower}}

\newcommand{\chebpush}{\kw{ChebyPush}}
\newcommand{\powermethod}{\kw{PwMethod}}
\newcommand{\rw}{\kw{RW}}
\newcommand{\lv}{\kw{LV}}
\newcommand{\bippr}{\kw{Bippr}}
\newcommand{\fora}{\kw{FORA}}
\newcommand{\speedppr}{\kw{SpeedPPR}}
\newcommand{\foralv}{\kw{FORALV}}
\newcommand{\speedlv}{\kw{SpeedLV}}
\newcommand{\setpush}{\kw{SetPush}}
\newcommand{\chopper}{\kw{CHOPPER}}

\newcommand{\dblp}{\kw{Dblp}}
\newcommand{\asskitter}{\kw{As\textrm{-}Skitter}}
\newcommand{\orkut}{\kw{Orkut}}
\newcommand{\youtube}{\kw{Youtube}}
\newcommand{\livejournal}{\kw{LiveJournal}}
\newcommand{\pokec}{\kw{Pokec}}
\newcommand{\twitter}{\kw{Twitter}}
\newcommand{\friendster}{\kw{Friendster}}

%\author{Anonymous authors}
\iffalse
\affiliation{
\institution{Paper under double-blind review}
}
\fi
%\renewcommand{\shortauthors}{Anonymous author, et al.}

\begin{abstract}

%\begin{comment}
%    In various graph mining and learning tasks, analyzing the similarity between vertices is of paramount importance, with many applications such as link prediction, anomalous link detection, recommendation and clustering. In past works, Katz score and Exponential communicability have been proposed for analyzing the communicability between nodes. However, the approximation algorithms for Katz score are still imperfect, and the calculation time often depends on the selection of nodes; while  there is currently no effective approximation algorithm for exponential communicability on large-scale graphs.
%
%    In this paper, we propose a novel "scaling paint" technique which can effectively calculate the Katz score and Exponential communicability by automatically adjusting the parameters of the algorithm for different cases. This operation can greatly improve the computational efficiency for "difficult" nodes. Experiments on real-world datasets show that our algorithm is more efficient than previous methods.
%\end{comment}

Graph propagation (GP) computation plays a crucial role in graph data analysis, supporting various applications such as graph node similarity queries, graph node ranking, graph clustering, and graph neural networks. Existing methods, mainly relying on power iteration or push computation frameworks, often face challenges with slow convergence rates when applied to large-scale graphs. To address this issue, we propose a novel and powerful approach that accelerates power iteration and push methods using Chebyshev polynomials. Specifically, we first present a novel Chebyshev expansion formula for general GP functions, offering a new perspective on GP computation and achieving accelerated convergence. Building on these theoretical insights, we develop a novel Chebyshev power iteration method (\ltwocheb) and a novel Chebyshev push method (\chebpush). Our \ltwocheb method demonstrates an approximate acceleration of $O(\sqrt{N})$ compared to existing power iteration techniques for both personalized PageRank and heat kernel PageRank computations, which are well-studied GP problems. For \chebpush, we propose an innovative subset Chebyshev recurrence technique, enabling the design of a push-style local algorithm with provable error guarantee and reduced time complexity compared to existing push methods. We conduct extensive experiments using 5 large real-world datasets to evaluate our proposed algorithms, demonstrating their superior efficiency  compared to state-of-the-art approaches.

\end{abstract}

\maketitle
%%% do not modify the following VLDB block %%
%%% VLDB block start %%%
%\pagestyle{\vldbpagestyle}
%\begingroup\small\noindent\raggedright\textbf{PVLDB Reference Format:}\\
%\vldbauthors. \vldbtitle. PVLDB, \vldbvolume(\vldbissue): \vldbpages, \vldbyear.\\
%\href{https://doi.org/\vldbdoi}{doi:\vldbdoi}
%\endgroup
%\begingroup
%\renewcommand\thefootnote{}\footnote{\noindent
%This work is licensed under the Creative Commons BY-NC-ND 4.0 International License. Visit \url{https://creativecommons.org/licenses/by-nc-nd/4.0/} to view a copy of this license. For any use beyond those covered by this license, obtain permission by emailing \href{mailto:info@vldb.org}{info@vldb.org}. Copyright is held by the owner/author(s). Publication rights licensed to the VLDB Endowment. \\
%\raggedright Proceedings of the VLDB Endowment, Vol. \vldbvolume, No. \vldbissue\ %
%ISSN 2150-8097. \\
%\href{https://doi.org/\vldbdoi}{doi:\vldbdoi} \\
%}\addtocounter{footnote}{-1}\endgroup
%%% VLDB block end %%%

%%% do not modify the following VLDB block %%
%%% VLDB block start %%%
%\ifdefempty{\vldbavailabilityurl}{}{
%\vspace{.3cm}
%\begingroup\small\noindent\raggedright\textbf{PVLDB Artifact Availability:}\\
%The source code, data, and/or other artifacts have been made available at \url{https://anonymous.4open.science/r/ChebyPush-2E94}.
%\endgroup
%}
%%% VLDB block end %%%

%\maketitle

\section{Introduction}\label{sec:intro}
Graph propagation (GP) computation is a fundamental operation in graph data analysis with diverse applications, such as graph node similarity queries \cite{lofgren2013personalized, lofgren16bidirection, wang2017fora, wei2018topppr, wu2021unifying, liao2022efficient, 23sigmodepvr}, graph node ranking \cite{page1999pagerank, ding2009pagerank}, graph clustering \cite{andersen2006local, chung2007heat, DBLP:journals/eswa/HeLYLJW24, DBLP:journals/pvldb/LinYLZQJJ24, DBLP:conf/aaai/LinLJ23}, and graph neural networks \cite{guo2021subset, DBLP:conf/mir/YuLLWOJ24, bojchevski2019pagerank, gasteiger2018predict, tsitsulin2018verse, bojchevski2020scaling, wu2019simplifying, gasteiger2019diffusion}. Consequently, numerous GP computation problems have been extensively studied in the literature \cite{yang2024efficient}. Among them, two representative and well-investigated GP models are single-source Personalized PageRank (\ssppr) \cite{andersen2006local} and heat kernel PageRank (\hkpr) \cite{chung2007heat}. Informally, these two GP models are defined as concise summaries of an infinite number of random walks, featuring elegant structural properties and strong interpretability for the aforementioned graph tasks.   

%Therefore, many GP models have been proposed in the literature \cite{yang2024efficient}, for example, single-source Personalized PageRank (\ssppr) \cite{andersen2006local} and heat kernel PageRank (\hkpr) \cite{chung2007heat}. Informally, these models are defined as concise summaries of an infinite number of random walks, featuring elegant structural properties and strong interpretability for the aforementioned graph tasks.   

%Additionally, Chung et al. \cite{chung2007heat} proposed heat kernel PageRank (\hkpr), and experiments have shown that \hkpr is better at finding smaller and more realistic clusters in local clustering tasks compared to \ssppr \cite{kloster2014heat}.in the field of graph machine learning, GP is widely used in semi-supervised learning on graphs, especially in node classification tasks ~\cite{wu2019simplifying,gasteiger2019diffusion,bojchevski2019pagerank}.

%Numerous efficient GP algorithms  have been studied in recent years ~\cite{wang2021approximate,yang2019efficient,wu2021unifying,chen2023accelerating}. 

%In this paper, we mainly focus on the computation of single-source GP $\mathbf{y}=f(\mathbf{P})\mathbf{e}_s$, where the random walk matrix is $\mathbf{P}=\mathbf{AD}^{-1}$ and $\mathbf{e}_s$ is a one-hot vector. In practice,

In an undirected graph $\mathcal{G}=(\mathcal{V},\mathcal{E})$ with $n$ nodes and $m$ edges, state-of-the-art (SOTA) GP computation algorithms generally fall into two categories: Power-iteration based method and Push-based approaches (details in Section ~\ref{sec:preliminaries}). Perhaps the most fundamental and representative power-iteration based method is \powermethod \cite{page1999pagerank}. However, due to its slow convergence rate and the $O(m+n)$ number of operations required for each iteration, \powermethod faces significant efficiency challenges when handling large graphs. To illustrate, when the parameter $\alpha$ is small or $t$ is large (i.e. the case when \powermethod is hard to converge), the running time for an \ssppr/\hkpr query using \powermethod may exceed $10^3$ seconds even on moderately sized graphs. While subsequent studies \cite{wu2021unifying,chen2023accelerating} have enhanced the efficiency of \powermethod, their approaches are heavily dependent on the structure of \ssppr and cannot be used for general GP computation (e.g., \hkpr), thus limiting their applicability. 

In contrast, push-based algorithms (such as \push \cite{kloster2014heat,yang2019efficient}) implement \powermethod locally on the graph. The most notable characteristic of push-based algorithms is their local time complexity (i.e., independent of $n$ and $m$), making them highly scalable for massive graphs. However, push-based algorithms are primarily effective for low-precision GP computation. When high-precision GP computation is necessary, similar to \powermethod, push-based approaches also face challenges with slow convergence rates. Moreover, as reported in \cite{wu2021unifying,chen2023accelerating}, push-based algorithms perform even worse than \powermethod in practice for high-precision \ssppr computation. Therefore, developing efficient and provable  algorithms for GP computation remains a challenging task.

To overcome these challenges, we propose several novel approaches to accelerate the computation of general GP using Chebyshev polynomials. Specifically, we observe that the core of both \powermethod and push-based algorithms lies in expanding the GP function using the Taylor series. However, since the polynomial series $\{x^k\}$ from Taylor expansion lacks orthogonality, it often fails to provide the optimal approximation for GP functions, as established in approximation theory \cite{sachdeva2014faster}. Consequently, this results in poor convergence rates when computing the GP function via Taylor expansion. In contrast, approximation theory demonstrates that Chebyshev polynomials, which are orthogonal, offer the best approximations \cite{sachdeva2014faster}. This leads to a natural question: can we leverage Chebyshev polynomials to accelerate the convergence rates of \powermethod or push-based algorithms? In this paper, we answer this question affirmatively. Specifically, we present a novel Chebyshev expansion formula for general GP functions, providing a new perspective on GP computation. Theoretically, we prove that this new Chebyshev expansion achieves faster convergence rates compared to traditional Taylor expansion methods. For instance, for both \ssppr and \hkpr computations, we prove that the convergence rate is accelerated by roughly $O(\sqrt{N})$ times compared to that achievable through Taylor expansion (details in Lemma \ref{lem:converge_rate_cheb}).

Based on our Chebyshev expansion technique, we propose two efficient algorithms: \ltwocheb and \chebpush, tailored to accelerate \powermethod and \push methods, respectively. \ltwocheb utilizes a new Chebyshev power-iteration principle to enhance the convergence rate of \powermethod, while \chebpush leverages a novel subset Chebyshev recurrence technique to obtain a local algorithm. \chebpush works similarly to \push but converges faster than \push. Theoretically, we conduct a thorough analysis of the time complexity and error bounds of our proposed algorithms, demonstrating that \chebpush achieves faster convergence compared to existing methods. Extensive experiments on 5 large real-world graphs confirm the high efficiency of the proposed approaches for both \ssppr and \hkpr computations. In summary, the main contributions of this paper are as follows:

%For example, on the \dblp dataset with small $\alpha$ or large $t$, our algorithms take $10^0\sim 10^1$ seconds for a \ssppr/\hkpr query, while  \powermethod needs  $10^3$ seconds as stated earlier. In summary, the main contributions of this paper are as follows:

%Extensive experiments on 5 large real-world graphs for both \ssppr and \hkpr computations confirm the high efficiency of the proposed approaches. 

\stitle{New theoretical results.} 
We establish a new Chebyshev expansion formula for approximating general graph propagation functions, demonstrating a faster convergence rate compared to the traditional Taylor expansion. Based on this, we also derive exact Chebyshev coefficients for both \ssppr and \hkpr, which provide new interpretations of \ssppr and \hkpr. We also propose a novel subset Chebyshev recurrence technique and establish theoretical connections between subset Chebyshev recurrence and the standard Chebyshev method. We believe that our subset Chebyshev recurrence technique may prove beneficial for solving other numerical computation problems, thereby may attract independent interests.

\stitle{Novel algorithms.} 
Based on the Chebyshev expansion, we propose a novel Chebyshev power-iteration algorithm, named \ltwocheb. We show that \ltwocheb achieves an approximate acceleration of $O(\sqrt{N})$ over existing power iteration methods for both \ssppr and \hkpr computations. Together with our subset Chebyshev recurrence, we develop a new push-style algorithm, called \chebpush, with time complexity independent of the graph size. We conduct a comprehensive theoretical analysis of \chebpush regarding its accuracy and running time. Our results show that, compared to the SOTA \push algorithm for general graph propagation computations, \chebpush generally exhibits lower time complexity while maintaining similar error bounds. 

\stitle{Extensive experiments.} 
We conduct extensive experiments on 5 real-world graphs. The results show that: (1) For \ssppr computation, our \chebpush algorithm achieves superior performance across all methods in large graphs, while in relatively small graphs, both \ltwocheb and \chebpush perform comparably to SOTA baselines; (2) For \hkpr computation, \chebpush substantially outperforms all other algorithms over all datasets, and \ltwocheb surpass all SOTA baselines on most datasets. In particular,  \chebpush often achieves $3 \sim 8$ times faster speeds than other methods while maintaining the same approximation accuracy.

%\stitle{Paper Organization} In section ~\ref{sec:preliminaries}, we give the basic notations and discuss the limitations of the existing methods. Section ~\ref{sec:interpretation} we briefly introduce Chebyshev polynomials and use it to expand graph propagation. Section ~\ref{sec:global-algo} we introduce the \ltwocheb algorithm based on Chebyshev polynomials to accelerate \powermethod. Section ~\ref{sec:local-algo} we introduce the novel technique, subset Chebyshev recurrence and use it to implement Chebyshev recurrence locally on graph, to accelerate \truncatepush. Section ~\ref{sec:problem_variants} we propose some variants of the proposed algorithm, including general GP matrix computation and bidirectional algorithm for \ssppr. Finally, in section ~\ref{sec:experiment} we conduct extensive experiments for \ssppr and \hkpr computation.

\section{Preliminaries}\label{sec:preliminaries} 
\subsection{Notations} \label{subsec:notations}
Consider an undirected graph \(\mathcal{G} = (\mathcal{V}, \mathcal{E})\), where \(\mathcal{V}\) denotes the set of nodes with \(|\mathcal{V}| = n\) and \(\mathcal{E}\) represents the set of edges with \(|\mathcal{E}| = m\). For any node $u\in \mathcal{V}$, we denote $\mathcal{N}(u)=\{v|(u,v)\in \mathcal{E}\}$ as the neighbor set of $u$ and $d_u=|N(u)|$ is the degree of $u$. Let $\mathbf{d}\in \mathbb{R}^n$ be the degree vector with $\mathbf{d}(u)=d_u$. Let \(\mathbf{A} \in \mathbb{R}^{n \times n}\) be the adjacency matrix, where \(\mathbf{A}_{i,j} = \mathbf{A}_{j,i} = 1\) if \((i,j) \in \mathcal{E}\), otherwise \(\mathbf{A}_{i,j} = 0\). Denote by \(\mathbf{D} \in \mathbb{R}^{n \times n}\) the diagonal degree matrix, where the diagonal element \(\mathbf{D}_{i,i} = d_i\), and the off-diagonal elements are 0. The classic random walk matrix is defined as \(\mathbf{P} = \mathbf{AD}^{-1}\), where each element represents the transition probability of 
a random walk on the graph. Let \(\mathbf{e}_s \in \mathbb{R}^n\) be a one-hot vector, which has a value of 1 at node \(s\) and 0 elsewhere. Then, the random-walk related GP vector can be defined as follows. Before we define the GP vector, we first define the matrix function.

\begin{definition}[Matrix Function \cite{DBLP:books/daglib/0086372}]
    Given a matrix $\mathbf{X}\in \mathbb{R}^{n\times n}$ with eigenvalues $\lambda(\mathbf{X})\in [-a,a]$, and an infinitely-differentiable (or smooth) function $f$ on the interval $[-a,a]$ (denote by $f\in C_{\infty}[-a,a]$ for short) with Taylor expansion $f(x)=\sum_{k=0}^{+\infty}{\zeta_kx^k}$ ($\zeta_k$ denotes the Taylor expansion coefficient). Suppose the Taylor expansion of $f$ converges in the range $[-a, a]$. Then, the matrix function $f(\mathbf{X})$ is defined as $f(\mathbf{X}) \triangleq \sum_{k=0}^{+\infty}{\zeta_k\mathbf{X}^k}$.
\label{def:matrix-fun}
\end{definition}

By Definition~\ref{def:matrix-fun}, the random-walk related GP function~\cite{wang2021approximate} (GP function for short) can be defined as follows.  

\begin{definition}[Graph Propagation (GP) Function] 
Given a function $f\in C_{\infty}[-1,1]$ for $x\in[-1,1]$, the Taylor expansion coefficient $\{\zeta_k\}$, the GP function is defined as $f\left(\mathbf{P}\right) \triangleq \sum_{k=0}^{\infty}{\zeta_k\mathbf{P}^k}$.
\label{def:gp-fun}
\end{definition}

Given a one-hot vector $\mathbf{e}_s$, the GP vector w.r.t.\ $\mathbf{e}_s$ is defined as 
\begin{equation}\label{equ:graph diffusion}
\begin{aligned}
\mathbf{y}=f\left(\mathbf{P}\right)\mathbf{e}_s=\sum_{k=0}^{\infty}{\zeta_k\mathbf{P}^k\mathbf{e}_s}.
\end{aligned}
\end{equation}   

Based on Eq.~(\ref{equ:graph diffusion}), the well-known single-source personalized PageRank (\ssppr) vector \cite{andersen2006local} and Heat Kernel PageRank (\hkpr) vector \cite{chung2007heat,kloster2014heat} can be represented as the GP vector. 

\begin{definition} [Single-source Personalized PageRank (\ssppr) vector \cite{andersen2006local}]
    Given a parameter $\alpha\in(0,1)$ and a source node $s$, let $\zeta_k=\alpha (1-\alpha)^k$, an \ssppr vector is defined as 
    \begin{equation}
\bm{\pi}_s=\alpha(\mathbf{I}-(1-\alpha)\mathbf{P})^{-1}\mathbf{e}_s=\alpha\sum_{k=0}^{\infty}{(1-\alpha)^k\mathbf{P}^k\mathbf{e}_s}.
    \end{equation}
\end{definition}
\begin{definition} [Heat Kernel PageRank (\hkpr) vector ~\cite{chung2007heat}]
    Given a parameter $t\in(1,\infty)$ and a source node $s$, let $\zeta_k=e^{-t}\frac{t^k}{k!}$, an \hkpr vector is defined as 
    \begin{equation}
    \bm{\rho}_s=e^{-t(\mathbf{I}-\mathbf{P})}\mathbf{e}_s=e^{-t}\sum_{k=0}^{\infty}{\frac{t^k}{k!}\mathbf{P}^k\mathbf{e}_s}.
    \end{equation}
\end{definition}

The two GP vectors mentioned above are extensively employed in graph analysis tasks \cite{andersen2006local, wang2017fora, kloster2014heat, wang2021approximate}. However, computing these GP vectors on large graphs can be costly. Therefore, similar to previous studies \cite{andersen2006local, wang2017fora, kloster2014heat, wang2021approximate}, we primarily focus on approximate computation of these vectors. To evaluate the approximation quality, we use several widely adopted 
error metrics ~\cite{wang2021approximate,yang2024efficient}.

\begin{definition} \label{def:matrics}
   Consider a positive real-number vector $\mathbf{y}\in \mathbb{R}^n$ with its estimate $\hat{\mathbf{y}}$, several commonly-used error measures are defined as follows:
\begin{itemize} 
\item[$\bullet$] $l_1$-error: $\Vert \mathbf{y}-\hat{\mathbf{y}} \Vert_1 =\sum_{u\in \mathcal{V}}{|\mathbf{y}(u)-\hat{\mathbf{y}}(u)|}$;
\item[$\bullet$] $l_2$-error: $\Vert \mathbf{y}-\hat{\mathbf{y}} \Vert_2 =\sqrt{\sum_{u\in \mathcal{V}}{\left(\mathbf{y}(u)-\hat{\mathbf{y}}(u)\right)^2}}$;
\item[$\bullet$] degree-normalized error: $\Vert \mathbf{D}^{-1}(\mathbf{y}-\hat{\mathbf{y}}) \Vert_\infty =\mathop{\max}\limits_{u\in \mathcal{V}}{\frac{|\mathbf{y}(u)-\hat{\mathbf{y}}(u)|}{d_u}}$;
 
 %%\item[$\bullet$] $l_2$ degree-normalized error: $\Vert \mathbf{D}^{-1}(\mathbf{y}-\hat{\mathbf{y}})^2 \Vert_\infty =\mathop{\max}\limits_{u\in \mathcal{V}}{\frac{|\mathbf{y}(u)-\hat{\mathbf{y}}(u)|^2}{d_u}}$.  %(a.k.a. additive error or relative error)

\end{itemize} 
\end{definition} 

While our focus is on computing the \ssppr and \hkpr vectors, it is worth noting that the proposed techniques can be used to compute more general GP vectors (see Section~\ref{subsec:more-general-gp}).

\subsection{Existing Solutions and Their Limitations} \label{subsec:exist-alg}
The state-of-the-art (SOTA) algorithms for approximately computing GP vectors can be roughly categorized into two types: Power-iteration based methods and Push-based approaches.

\comment{
\begin{table*}[t!]
	\centering
	\caption{Comparison of different algorithms for computing \ssppr, \hkpr and general graph propagation. For the truncation step $N$ of \powermethod, $N=\frac{1}{\alpha}\log \frac{1}{\epsilon}$ for \ssppr, $N=2t\log \frac{1}{\epsilon}$ for \hkpr. } 
	\scalebox{1}{
		\begin{tabular}{c|c|c|c|c}
			\toprule
			\multicolumn{1}{c|}{Problems} & \multicolumn{1}{c|}{Algorithms}&
			\multicolumn{1}{c|}{Accuracy Guarantee}&\multicolumn{1}{c|}{Time Complexity}&
			\multicolumn{1}{c}{Remark}\\
			\midrule
            \multirow{5}{*}	{General GP} & \powermethod & $\Vert \mathbf{y}-\hat{\mathbf{y}} \Vert_1<\epsilon$ & $O(Nm)$ & PowerMethod-based\\
            & \ltwocheb(Our) & $\Vert \mathbf{y}-\hat{\mathbf{y}} \Vert_2<\epsilon$ & $O(Km)$ & Chebyshev-based \\
            \cline{2-5}
            & \multirow{2}{*}{\agp ~\cite{wang2021approximate}} & $|\mathbf{y}(u)-\hat{\mathbf{y}}(u)|<\epsilon \mathbf{y}(u)$ & \multirow{2}{*}{$O(\frac{N^3}{\epsilon^2\delta})$}& Push-based, $N$ is a truncation  \\
            & &for any $\mathbf{y}(u)>\delta$ w.h.p.& & step of Taylor expansion\\
            & \chebpush (Our) & $\Vert \mathbf{D}^{-1}(\mathbf{y}-\hat{\mathbf{y}}) \Vert_\infty <\epsilon$ & $O(\frac{C\sqrt{|\bar{S}|}}{\epsilon})$& $C=O(N)$ for \ssppr and \hkpr\\
    \midrule
            \multirow{5}{*}	{\ssppr} & \powermethod \cite{page1999pagerank}& $\Vert \bm{\pi}_s-\hat{\bm{\pi}}_s \Vert_1 <\epsilon$ & $O(\frac{m}{\alpha}\log \frac{1}{\epsilon})$ & PowerMethod-based \\
            & \pwpush~\cite{wu2021unifying}& 	$\Vert \bm{\pi}_s-\hat{\bm{\pi}}_s \Vert_1 <\epsilon$& $O(\frac{m}{\alpha}\log\frac{1}{\epsilon})$&	\push-based\\
	       & \ltwocheb (Our) & $\Vert \bm{\pi}_s-\hat{\bm{\pi}}_s \Vert_2 <\epsilon$ & $O(\frac{m}{\sqrt{\alpha}}\log \frac{1}{\epsilon})$ & Chebyshev-based \\
            \cline{2-5}
            & \push ~\cite{andersen2006local} & $\Vert \mathbf{D}^{-1}(\bm{\pi}_s-\hat{\bm{\pi}}_s) \Vert_\infty <\epsilon$ & $O(\frac{1}{\alpha\epsilon})$ &\push-based\\
	       & \chebpush (Our) & $\Vert \mathbf{D}^{-1}(\bm{\pi}_s-\hat{\bm{\pi}}_s) \Vert_\infty <\epsilon$ & $\tilde{O}(\frac{\sqrt{|\bar{S}|}}{\alpha\epsilon})$ & Subset Chebyshev-based\\
   \midrule
			\multirow{6}{*}	{\hkpr} & \powermethod & $\Vert \bm{\rho}_s-\hat{\bm{\rho}}_s \Vert_1 <\epsilon$ & $O(tm\log \frac{1}{\epsilon})$ & PowerMethod-based \\
                & \ltwocheb (Our) & $\Vert \bm{\rho}_s-\hat{\bm{\rho}}_s \Vert_2 <\epsilon$ & $O(\sqrt{t}m\log \frac{1}{\epsilon})$& Chebyshev-based \\
                \cline{2-5}
                & \agp ~\cite{wang2021approximate} &$|\bm{\rho}_s(u)-\hat{\bm{\rho}}_s(u)|<\epsilon \bm{\rho}_s(u)$  &$\tilde{O}(\frac{t^3}{\epsilon^2\delta})$ &  Randomized Push \\
			& \teaplus ~\cite{yang2019efficient} & for any $\bm{\rho}_s(u)>\delta$ w.h.p.& $\tilde{O}(\frac{t}{\epsilon^2\delta})$& Bidirectional Methods \\
            & \hkrelax ~\cite{kloster2014heat} & $\Vert \mathbf{D}^{-1}(\bm{\rho}_s-\hat{\bm{\rho}}_s) \Vert_\infty <\epsilon$ & $O(\frac{te^t}{\epsilon}\log \frac{1}{\epsilon})$& \push-based \\
	       & \chebpush (Our) & $\Vert \mathbf{D}^{-1}(\bm{\rho}_s-\hat{\bm{\rho}}_s) \Vert_\infty <\epsilon$ & $\tilde{O}(\frac{t\sqrt{|\bar{S}|}}{\epsilon})$ & Subset Chebyshev-based\\
    \bottomrule	
		\end{tabular}
	}%\vspace{-0.3cm}
	\label{tab:alg} 
\end{table*}
}

\stitle{Power-iteration based method (\powermethod)} is a fundamental method for GP computation \cite{page1999pagerank}. Specifically, the \powermethod procedure (Algorithm ~\ref{algo:powermethod}) is stated as follows:  First, it sets the initial estimate $\hat{\mathbf{y}} = 0$ and $\mathbf{r}_0 = \mathbf{e}_s$ (Line 1). Then, in each iteration (Lines 2-6), it performs the following updates: (i) $\hat{\mathbf{y}} \leftarrow \hat{\mathbf{y}} + \zeta_k\mathbf{r}_k$ (Line 3), (ii) $\mathbf{r}_{k+1} \leftarrow \mathbf{P}\mathbf{r}_k$ (Line 4), until the procedure stops after the $N$-th iteration (Lines 2 and 6). Finally, it outputs $\hat{\mathbf{y}}$ as the approximation of $\mathbf{y}=f(\mathbf{P})\mathbf{e}_s$. It is easy to show that by setting $ N $ (the truncation step of Taylor expansion) sufficiently large, \powermethod can always produce a high-accurate approximation vector \cite{page1999pagerank,wang2021approximate}. The time complexity of \powermethod is $O(N(m+n))$ since it necessitates exploring the entire graph in each iteration. A significant limitation of \powermethod is its slow convergence to the exact solution, with a convergence speed of $O(N)$. Therefore, to achieve a highly accurate GP vector, often a large truncation step $N$ is required \cite{page1999pagerank, wang2021approximate}, rendering \powermethod very costly for large graphs.

\begin{algorithm}[t!]
\small
	\SetAlgoLined
	\KwIn{A graph $\mathcal{G}=(\mathcal{V},\mathcal{E})$, GP function $f$ with its Taylor expansion coefficients $\{\zeta_k\}$, a source node $s$, and the truncation step $N$}
        $\hat{\mathbf{y}}=\mathbf{0}$; $\mathbf{r}_0=\mathbf{e}_s$;  $k=0$\;
		\While{$k< N$}{
         $\hat{\mathbf{y}}\leftarrow \hat{\mathbf{y}}+ \zeta_k \mathbf{r}_k$\;
		$\mathbf{r}_{k+1}\leftarrow \mathbf{P}\mathbf{r}_k$\;
        $k\leftarrow k+1$\;
		}
	\KwOut{$\hat{\mathbf{y}}$ as the approximation of $\mathbf{y}=f(\mathbf{P})\mathbf{e}_s$}
	\caption{\powermethod\cite{page1999pagerank}}\label{algo:powermethod}
\end{algorithm}

\stitle{Push-based method (\truncatepush)} is a \textit{local} algorithm (Algorithm~\ref{algo:truncate-push}) that can asynchronously prune the ``unimportant'' propagation of \powermethod \cite{andersen2006local}. The key feature of this algorithm is that it only needs to explore a small portion of the graph when computing approximate \ssppr \cite{andersen2006local} or \hkpr \cite{yang2019efficient} vectors. Specifically, \truncatepush first sets the initial approximate solution $\hat{\mathbf{y}}=\mathbf{0}$ and the residual $\mathbf{r}_0=\mathbf{e}_s$ (Line 1). Then, in each iteration (Lines 2-10), it only performs the matrix-vector multiplication $\mathbf{P}\mathbf{r}_k$ on ``important'' nodes. That is, if there exists a node $u$ such that $\mathbf{r}_k(u)> \epsilon_k d_u$ (such $u$ is regarded as an ``important'' node), it performs the following \textit{push} operation: (i) adds $\zeta_k \mathbf{r}_k(u)$ to $\hat{\mathbf{y}}(u)$,  (ii) uniformly distributes $\mathbf{r}_k(u)$ to each neighbor of $u$, namely, $\mathbf{r}_{k+1}(v)\leftarrow \mathbf{r}_{k+1}(v)+\mathbf{r}_k(u)/d_u$ for all $u$'s neighbor $v$, and (iii) sets $\mathbf{r}_k(u)$ to $0$. When the iteration stops (i.e., the number of iterations reached $N$ in Line 2 or there is no ``important'' node to \textit{push} in Line 3), it outputs $\hat{\mathbf{y}}$ as an approximate solution. Although \truncatepush can significantly accelerate \powermethod, its convergence speed remains the same when aiming for a highly accurate GP vector. Indeed, as shown in \cite{wu2021unifying}, \powermethod is equivalent to \truncatepush for achieving high-accuracy \ssppr vector computations. 

In addition, it is worth mentioning that several randomized techniques have also been proposed to speed up \powermethod and \truncatepush for \textit{specific} GP vector computations. Most of these randomized algorithms are based on a bidirectional framework that combines \push (or \powermethod \cite{23sigmodepvr}) with a Monte Carlo sampling technique \cite{wang2017fora, yang2019efficient, wu2021unifying, liao2022efficient, 23sigmodepvr, yang2024efficient, wei2024approximating}. The core idea of these bidirectional randomized algorithms is to use \powermethod or \truncatepush to reduce the variance of the Monte Carlo estimator. Although these algorithms can theoretically achieve relative error guarantees, they often perform worse than \truncatepush when high accuracy is required. Furthermore, these bidirectional algorithms are generally designed for computing either \ssppr or \hkpr, and typically cannot handle the general GP vectors. A more versatile approach is the randomized push algorithm~\cite{wang2021approximate,wang2023singlenode}, which can approximate the computation of general GP vectors. However, similar to \truncatepush, this randomized push algorithm still faces the challenge of low convergence speed. For instance, for GP vector computation, the time complexity of \agp ~\cite{wang2021approximate} is $O(\frac{N^3}{\epsilon^2 \delta})$ to achieve an $\epsilon$-relative error guarantee, where 
$N$ denotes the truncation step in the Taylor expansion (typically requiring a large value for accurate GP vector computation).

To overcome the limitations of existing SOTA methods, in this work, we propose a novel approach based on Chebyshev polynomials to compute general GP vectors, which can significantly boost the convergence speed of both \powermethod and \truncatepush. Similar to \truncatepush, we devise a local and more efficient algorithm using a newly developed subset Chebyshev technique. Moreover, our solution can also be integrated into the bidirectional framework \cite{wang2017fora, yang2019efficient, wu2021unifying, liao2022efficient} (see Section~\ref{subsec:bidirect-method}). Below, we will first introduce a new power iteration method accelerated by Chebyshev polynomials. This method relies on a novel Chebyshev expansion of the GP function. Then, we propose the subset Chebyshev technique, utilizing which we will develop a novel push-style local algorithm.

\begin{algorithm}[t!]
%\caption{70\% of the column width!}
\small
	\SetAlgoLined
	\KwIn{A graph $\mathcal{G}=(\mathcal{V},\mathcal{E})$, GP function $f$ with its Taylor expansion coefficients $\{\zeta_k\}$, a source node $s$, the truncation step $N$, a set of thresholds $\{\epsilon_k\}$}
        $\hat{\mathbf{y}}=\mathbf{0}$; $\mathbf{r}_0=\mathbf{e}_s$ ;  $k=0$ \;
		\While{$k< N$}{
        \For{$u\in \mathcal{V}$ with $\mathbf{r}_k(u)>\epsilon_k d_u$}{
         $\hat{\mathbf{y}}(u)\leftarrow \hat{\mathbf{y}}(u)+ \zeta_k \mathbf{r}_k(u)$ \;
            \For{$v\in \mathcal{N}(u)$}{
		      $\mathbf{r}_{k+1}(v)\leftarrow \mathbf{r}_{k+1}(v)+\mathbf{r}_k(u)/d_u$ \;
            }
            $\mathbf{r}_k(u)\leftarrow 0$ \;
        }
        $k\leftarrow k+1$ \;
		}

	\KwOut{$\hat{\mathbf{y}}$ as the approximation of $\mathbf{y}=f(\mathbf{P})\mathbf{e}_s$}
	\caption{\truncatepush ~\cite{kloster2014heat,yang2019efficient}}\label{algo:truncate-push}
\end{algorithm}

\section{A Chebyshev Power Method}\label{sec:interpretation}
In this section, we first introduce a new concept called Chebyshev expansion of GP function and then give an interpretation of why Chebyshev expansion can boost the convergence speed. Finally, we will develop a new Chebyshev power iteration method based on the Chebyshev expansion, namely \ltwocheb.

\subsection{Chebyshev Expansion of GP Function}\label{subsec:cheby-expan}
Unlike existing methods for computing GP vectors, which are based on Taylor expansions of the GP function, we introduce a novel technique called Chebyshev expansion, leveraging Chebyshev polynomials. Below, we first define Chebyshev polynomials~\cite{mason2002chebyshev}.

\begin{definition}[Chebyshev polynomials]\label{def:chebyshev}
  For any $x\in \mathbb{R}$, the (first kind of) Chebyshev polynomials $\{T_k(x)\}$  is defined  as the following recurrence: (i) $T_0(x)=1$, $T_1(x)=x$; (ii) $T_{k+1}(x)=2xT_k(x)-T_{k-1}(x)$ for $k\geq 1$.
\end{definition}

Since Chebyshev polynomials $\{T_k(x)\}$ form an orthogonal polynomial basis, they are capable of representing any real-valued function \cite{mason2002chebyshev}. 
Therefore, rather than employing Taylor expansion to represent the GP function, we utilize Chebyshev polynomials for this purpose. This representation of the GP function using Chebyshev polynomials is termed as Chebyshev expansion. The advantage of employing Chebyshev expansion is that it can derive a new power iteration method with a faster convergence speed. Since the GP function is a matrix function (not a traditional real-valued function), deriving the Chebyshev expansion is non-trivial. Below, we address this challenge by employing the eigen-decomposition technique. Due to space limits, all the missing proofs can be found in the full version of this paper~\cite{full-version}.

\comment{
Since the Chebyshev polynomials $\{T_k(x)\}$ form an orthogonal polynomial basis (e.g. see Chapter 4.2 in \cite{mason2002chebyshev}), the natural idea is to replace $\mathbf{P}^k\mathbf{e}_s$ in graph propagation with $T_k(\mathbf{P})\mathbf{e}_s$. Let's first get some intuition as to why this operation accelerates the convergence rate. Consider a $50\times 50$ grid graph with source node $s=(25,25)$, Figure ~\ref{fig:power} and ~\ref{fig:chebyshev} depicts the difference behavior between $\mathbf{P}^k\mathbf{e}_s$ and $T_k(\mathbf{P})\mathbf{e}_s$. There are several key observations: (i) $\mathbf{P}^k\mathbf{e}_s$ is a probability distribution, satisfying $\sum_{u\in \mathcal{V}}{\mathbf{P}^k\mathbf{e}_s(u)}=1$ and $\mathbf{P}^k\mathbf{e}_s(u)\geq 0$. But $T_k(\mathbf{P})\mathbf{e}_s$ is not a probability distribution, since $T_k(\mathbf{P})\mathbf{e}_s(u)$ maybe smaller than $0$. (ii) The distribution of $\mathbf{P}^k\mathbf{e}_s$ is more concentrated on the source node $s$, while the distribution of $T_k(\mathbf{P})\mathbf{e}_s$ will concentrated on nodes far away from $s$. Based on these in-depth observations, we use Chebyshev polynomials to devise the following another novel explanation of graph propagation.

\begin{figure}
    \centering
    \subfigure{
		\includegraphics[scale=0.24]{illustrative exp/P^k value distribution.png}}
     \subfigure{
     \includegraphics[scale=0.24]{illustrative exp/P^k distribution on grid.png}}
    
    \caption{$50\times 50$ grid with source node $s=(25,25)$, the distribution of $\mathbf{P}^k \mathbf{e}_s$ with $k=30$. For each node $u=(i,j)$ with $i,j\in[50]$, node index is defined as $idx_u=50*i+j$}\label{fig:power}
\end{figure}

\begin{figure}
    \centering
    \subfigure{
		\includegraphics[scale=0.24]{illustrative exp/T_k(P) value distribution.png}}
     \subfigure{
     \includegraphics[scale=0.24]{illustrative exp/T_k(P) distribution on grid.png}}
    
    \caption{$50\times 50$ grid with source node $s=(25,25)$, the distribution of $T_k(\mathbf{P}) \mathbf{e}_s$ with $k=30$. For each node $u=(i,j)$ with $i,j\in[50]$, node index is defined as $idx_u=50*i+j$}\label{fig:chebyshev}
\end{figure}
}

\begin{lemma}[chebyshev expansion for general GP function]\label{lem:cheby-exapnas}
    The following Chebyshev expansion holds for any graph propagation function $f$:
    \begin{equation}
f(\mathbf{P})=\sum_{k=0}^{+\infty}{c_kT_k(\mathbf{P})},
\end{equation}\label{equ:chebyshev expansion}
    where $c_k=\langle f,T_k\rangle=\frac{2}{\pi}\int_{-1}^{1}{\frac{1}{\sqrt{1-x^2}}f(x)T_k(x)dx}$ if $k\geq 1$; $c_0=\langle f,T_0\rangle =\frac{1}{\pi}\int_{-1}^{1}{\frac{1}{\sqrt{1-x^2}}f(x)T_0(x)dx}$ .
\end{lemma}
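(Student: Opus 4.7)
The plan is to reduce the matrix-valued identity to the scalar Chebyshev series on $[-1,1]$ by diagonalizing $\mathbf{P}$, and then transfer the resulting expansion back to the matrix setting. The first step is to establish that $\mathbf{P}=\mathbf{A}\mathbf{D}^{-1}$ is diagonalizable with real spectrum contained in $[-1,1]$: since $\mathbf{P}=\mathbf{D}^{1/2}\bigl(\mathbf{D}^{-1/2}\mathbf{A}\mathbf{D}^{-1/2}\bigr)\mathbf{D}^{-1/2}$, it is similar to the symmetric normalized adjacency matrix $\tilde{\mathbf{P}}=\mathbf{D}^{-1/2}\mathbf{A}\mathbf{D}^{-1/2}$, whose eigenvalues lie in $[-1,1]$ by standard spectral graph theory. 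I would write $\mathbf{P}=\mathbf{U}\mathbf{\Lambda}\mathbf{U}^{-1}$ with $\mathbf{\Lambda}=\mathrm{diag}(\lambda_1,\ldots,\lambda_n)$ and $\lambda_i\in[-1,1]$.

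Next, I would invoke the classical scalar Chebyshev expansion from approximation theory: the family $\{T_k\}$ is orthogonal on $[-1,1]$ with respect to the weight $w(x)=\tfrac{1}{\sqrt{1-x^2}}$, with $\langle T_0,T_0\rangle=\pi$ and $\langle T_k,T_k\rangle=\pi/2$ for $k\geq 1$. Expanding $f\in C_{\infty}[-1,1]$ in this orthogonal basis gives $f(x)=\sum_{k=0}^{\infty}c_kT_k(x)$ with precisely the coefficients stated in the lemma; the factor-of-two discrepancy between $c_0$ and $c_k$ for $k\geq 1$ comes directly from these two different norms. Smoothness of $f$ together with convergence of its Taylor series on $[-1,1]$ (the hypothesis encoded in Definition~\ref{def:matrix-fun}) yields uniform convergence of this Chebyshev series on $[-1,1]$.

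Finally, I would lift the scalar identity to the matrix level. Applying it eigenvalue-wise yields $f(\mathbf{\Lambda})=\sum_{k=0}^{\infty}c_kT_k(\mathbf{\Lambda})$, where the series on the right converges entrywise because each diagonal entry is a convergent scalar Chebyshev series at a point of $[-1,1]$. Since $T_k$ is a polynomial and similarity commutes with polynomial evaluation, $T_k(\mathbf{P})=\mathbf{U}T_k(\mathbf{\Lambda})\mathbf{U}^{-1}$, and by Definition~\ref{def:matrix-fun} applied through the Taylor series also $f(\mathbf{P})=\mathbf{U}f(\mathbf{\Lambda})\mathbf{U}^{-1}$. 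Conjugating both sides of the diagonal Chebyshev identity by $\mathbf{U}$ then produces the claimed matrix Chebyshev expansion $f(\mathbf{P})=\sum_{k=0}^{\infty}c_kT_k(\mathbf{P})$.

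The main obstacle I anticipate is rigorously justifying the convergence of the matrix series in an operator-level sense, rather than merely entrywise on the diagonal after conjugation. This should be handled by observing that uniform convergence of the scalar Chebyshev series on $[-1,1]$, combined with boundedness of $\mathbf{U}$ and $\mathbf{U}^{-1}$ and the containment $\lambda(\mathbf{P})\subset[-1,1]$, forces the partial sums to converge in spectral (or Frobenius) norm. A secondary point of care is bookkeeping the asymmetric normalization of $c_0$ relative to $c_k$ ($k\geq 1$), which has to be tracked consistently through both the orthogonality computation and the statement of the final series.
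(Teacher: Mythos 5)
Your proposal is correct and follows essentially the same route as the paper's proof: exploit the similarity $\mathbf{P}=\mathbf{D}^{1/2}\tilde{\mathbf{A}}\mathbf{D}^{-1/2}$ with the symmetric normalized adjacency matrix, apply the classical scalar Chebyshev expansion of $f$ on $[-1,1]$ eigenvalue-wise, and conjugate back to obtain the matrix identity. The extra care you take about operator-norm convergence of the partial sums is a welcome refinement but does not change the substance of the argument.
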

%\comment{
\begin{proof}
By the definition of Chebyshev expansion of any function $g\in L_2[-1,1]$ ~\cite{mason2002chebyshev} (where $g\in L_2[-1,1]$ means $\int_{-1}^1{|g|^2}<+\infty$), we have:
    $$g(x)\sim \sum_{k=0}^{+\infty}{c_kT_k(x)}.$$
    Note that the graph propagation function $f$ is infinitely differentiable and continuous. Since $f$ is a continuous function, by Weierstrass approximation theorem (e.g. see Theorem 3.2 in ~\cite{mason2002chebyshev}), $f$ can be arbitrarily approximated by polynomials. Therefore, the equality holds: $f(x)= \sum_{k=0}^{+\infty}{c_kT_k(x)}$. By the property of random-walk matrix, we have $\mathbf{P}=\mathbf{AD}^{-1}=\mathbf{D}^{1/2}\tilde{\mathbf{A}}\mathbf{D}^{-1/2}$, where $\tilde{\mathbf{A}}=\mathbf{D}^{-1/2}\mathbf{A}\mathbf{D}^{-1/2}$. Further, by the property of matrix function, we have $f(\mathbf{P})=\mathbf{D}^{1/2}f(\tilde{\mathbf{A}})\mathbf{D}^{-1/2}$.

    Since $\tilde{\mathbf{A}}$ is a real symmetric matrix, it has eigen-decomposition $\tilde{\mathbf{A}}=\mathbf{U\Lambda}\mathbf{U}^T$ with orthogonal matrix $\mathbf{U}$ and diagonal matrix $\mathbf{\Lambda}$. Thus, 
    by the property of matrix function, we have $f(\tilde{\mathbf{A}})=\mathbf{U}f(\mathbf{\Lambda})\mathbf{U}^T$. As a consequence, we only need to focus on the expansion of $f(\mathbf{\Lambda})$. Since the eigenvalues $\lambda(\mathbf{P})\in [-1,1]$ for undirected graphs, the Chebyshev expansion holds: $f(\mathbf{\Lambda})=\sum_{k=0}^{\infty}{c_kT_k(\mathbf{\Lambda})}$. Therefore, the following equation holds:
    \begin{equation*}
    \begin{aligned}
f(\mathbf{P})&=\mathbf{D}^{1/2}\mathbf{U}f(\mathbf{\Lambda})\mathbf{U}^T\mathbf{D}^{-1/2}\\
&=\mathbf{D}^{1/2}\mathbf{U}\left(\sum_{k=0}^{+\infty}{c_kT_k(\mathbf{\Lambda})}\right)\mathbf{U}^T\mathbf{D}^{-1/2}=\sum_{k=0}^{+\infty}{c_kT_k(\mathbf{P})}.
\end{aligned}
\end{equation*}
This completes the proof.
\end{proof}
%}

%Now back to the graph propagation. We substituting $\mathbf{P}$ for $x$, and then we directly get the Chebyshev $L_2$ expansion of the graph propagation $f(\mathbf{P})\mathbf{e}_s$.

Based on Lemma~\ref{lem:cheby-exapnas}, we are able to derive the Chebyshev expansions for both \ssppr and \hkpr vectors. Specifically, by setting  $f(x)=\alpha(1-(1-\alpha)x)^{-1}$ ($f(x)=e^{-t(1-x)}$), we can obtain the Chebyshev expansion of the \ssppr (\hkpr) vector. 

%In particular, if we use the Personalized PageRank function $f(x)=\alpha(1-(1-\alpha)x)^{-1}$ and \hkpr function $f(x)=e^{-t(1-x)}$ to substitute to the GP function $f$ in Lemma~\ref{lem:cheby-expansion}, we can derive the following chebyshev expansion for \ssppr and \hkpr.

\begin{lemma}[chebyshev expansion for \ssppr]\label{lem:pagerank_expansion} 
    Let $\gamma=\frac{\alpha}{\sqrt{2\alpha-\alpha^2}}$, $\beta=\frac{1-\sqrt{2\alpha-\alpha^2}}{1-\alpha}$. Then, the \ssppr vector can be represented as follows:
    \begin{equation}
    \begin{aligned}
\bm{\pi}_s=\alpha[\mathbf{I}-(1-\alpha)\mathbf{P}]^{-1}\mathbf{e}_s=\gamma\mathbf{e}_s+2\gamma\sum_{k=1}^{\infty}{\beta^k}T_k(\mathbf{P})\mathbf{e}_s.
    \end{aligned}
\end{equation}\label{pagerank:l2aprox}
\end{lemma}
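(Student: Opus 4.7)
The plan is to specialize the general Chebyshev expansion of \Cref{lem:cheby-exapnas} to the scalar function $f(x)=\alpha/(1-(1-\alpha)x)$. Rather than evaluate the coefficient integrals $c_k=\tfrac{2}{\pi}\int_{-1}^1 f(x)T_k(x)/\sqrt{1-x^2}\,dx$ directly, I would exploit the classical Chebyshev generating function
\[
\sum_{k=0}^{\infty} t^k T_k(x) \;=\; \frac{1-tx}{1-2tx+t^2}, \qquad |t|<1,
\]
which, after a one-line rearrangement, yields the Poisson-kernel form
\[
1 + 2\sum_{k=1}^{\infty} t^k T_k(x) \;=\; \frac{1-t^2}{1-2tx+t^2}.
\]

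The next step is to find scalars $\gamma$ and $\beta\in(-1,1)$ so that
\[
\frac{\alpha}{1-(1-\alpha)x} \;=\; \gamma\cdot\frac{1-\beta^2}{1-2\beta x+\beta^2}
\]
as an identity of rational functions in $x$. Cross-multiplying and matching the coefficient of $x$ gives the quadratic $(1-\alpha)(1+\beta^2)=2\beta$, whose unique root inside $(-1,1)$ is $\beta=(1-\sqrt{2\alpha-\alpha^2})/(1-\alpha)$. Matching the constant term, together with the auxiliary identity $(1-\alpha)^2(1-\beta^2)=2\sigma(1-\sigma)$ where $\sigma=\sqrt{2\alpha-\alpha^2}$, collapses to $\gamma=\alpha/\sqrt{2\alpha-\alpha^2}$. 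These are precisely the constants in the statement, so the scalar identity $f(x)=\gamma+2\gamma\sum_{k=1}^\infty \beta^kT_k(x)$ holds on $[-1,1]$.

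To lift the identity to matrices I would reuse the eigen-decomposition device from the proof of \Cref{lem:cheby-exapnas}: write $\mathbf{P}=\mathbf{D}^{1/2}\tilde{\mathbf{A}}\mathbf{D}^{-1/2}$ with $\tilde{\mathbf{A}}=\mathbf{D}^{-1/2}\mathbf{A}\mathbf{D}^{-1/2}$ symmetric, diagonalize $\tilde{\mathbf{A}}=\mathbf{U}\mathbf{\Lambda}\mathbf{U}^\top$, apply the scalar expansion eigenvalue-wise (all eigenvalues of $\mathbf{P}$ on an undirected graph lie in $[-1,1]$, so every term converges), and conjugate back to obtain $\alpha[\mathbf{I}-(1-\alpha)\mathbf{P}]^{-1}=\gamma\mathbf{I}+2\gamma\sum_{k=1}^\infty \beta^k T_k(\mathbf{P})$. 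Right-multiplying by $\mathbf{e}_s$ yields the claimed Chebyshev expansion of $\bm{\pi}_s$.

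The main obstacle is not conceptual but algebraic bookkeeping. Two points deserve care: verifying that the smaller root $\beta=(1-\sqrt{2\alpha-\alpha^2})/(1-\alpha)$ indeed lies strictly in $(-1,1)$ for every $\alpha\in(0,1)$ (needed both for convergence of the Chebyshev series and for the matrix-function lift), and carrying out the simplifications that collapse the constant-term matching into the compact closed form $\gamma=\alpha/\sqrt{2\alpha-\alpha^2}$. Once these are verified, the rest is a direct application of \Cref{lem:cheby-exapnas}.
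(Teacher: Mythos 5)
Your proposal is correct, and it reaches the stated coefficients by a genuinely different route from the paper. The paper instantiates the coefficient formula of Lemma~\ref{lem:cheby-exapnas} directly: it substitutes $x=\cos\theta$, evaluates the integrals $\alpha\int_0^\pi \frac{\cos(n\theta)}{1-(1-\alpha)\cos\theta}\,d\theta$ in closed form for $n=0,1$, and then establishes the geometric form of $c_k$ for all $k$ by induction, using the three-term recurrence $\cos((k+1)\theta)=2\cos\theta\cos(k\theta)-\cos((k-1)\theta)$ inside the integral. You instead avoid all integral evaluation by matching the resolvent $\alpha/(1-(1-\alpha)x)$ against the Poisson-kernel form of the generating function, $(1-\beta^2)/(1-2\beta x+\beta^2)=1+2\sum_{k\geq 1}\beta^k T_k(x)$, and solving a two-by-two system for $\beta$ and $\gamma$; the matrix lift through $\mathbf{P}=\mathbf{D}^{1/2}\tilde{\mathbf{A}}\mathbf{D}^{-1/2}$ is then the same device already used in Lemma~\ref{lem:cheby-exapnas}. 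Your approach is more compact and delivers all coefficients at once, with convergence handled by checking $\beta\in(0,1)$ (which indeed holds for every $\alpha\in(0,1)$, since $\alpha<\sqrt{2\alpha-\alpha^2}<1$); the paper's integral-plus-induction argument is heavier here, but it is the template that also covers GP functions with no convenient closed-form generating function, e.g.\ the \hkpr case, where the same integrals yield Bessel-function coefficients. One small bookkeeping remark: the quadratic $(1-\alpha)(1+\beta^2)=2\beta$ does not follow from the $x$-coefficient alone; it comes from eliminating $\gamma$ between the $x$-coefficient equation $2\alpha\beta=\gamma(1-\beta^2)(1-\alpha)$ and the constant-term equation $\alpha(1+\beta^2)=\gamma(1-\beta^2)$. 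The resulting values of $\beta$ and $\gamma$ are exactly those in the statement, so the argument goes through.
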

%\comment{
\begin{proof}
    
    Since the PageRank function follows $f(x)=\alpha(1-(1-\alpha)x)^{-1}$ and Chebyshev polynomials follows $T_n(x)=\cos(n\arccos(x))$, by definition ~\ref{lem:cheby-exapnas}, we can express the coefficients $c_n$ as follows:
    $$\frac{\pi}{2}c_n=\int_{-1}^{1}{\frac{1}{\sqrt{1-x^2}}T_n(x)f(x)dx}=\alpha \int_{0}^{\pi}{\frac{\cos(n\theta)}{1-(1-\alpha)\cos\theta}d\theta}, n\geq1$$
with $x=\cos\theta$. For $n=0$, we have:
\begin{equation*}
\begin{aligned}
        \alpha \int_{0}^{\pi}{\frac{1}{1-(1-\alpha)\cos\theta}d\theta}&=\alpha \int_{0}^{\pi}{\frac{1}{\alpha +2(1-\alpha)\sin^2(\frac{\theta}{2})}d\theta} \\
        &=\frac{\alpha\pi}{\sqrt{2\alpha-\alpha^2}}=\pi c_0.
        \end{aligned}
        \end{equation*}
        Since $\int_{0}^{\pi}{\frac{1}{p +\sin^2(\frac{\theta}{2})}d\theta}=\frac{\pi}{\sqrt{p(p+1)}}$ by basic calculus. For $n=1$, similarly, we have: 
        \begin{equation*}
            \begin{aligned}
                \alpha\int_{0}^{\pi}{\frac{\cos\theta}{1-(1-\alpha)\cos\theta}}&= \alpha\int_{0}^{\pi}{\frac{1-\sin^2(\frac{\theta}{2})}{\alpha+2(1-\alpha)\sin^2(\frac{\theta}{2})}d\theta}\\
                &=-\frac{\alpha}{1-\alpha}\pi+\alpha\int_{0}^{\pi}{\frac{1+\alpha/(1-\alpha)}{\alpha+2(1-\alpha)\sin^2(\frac{\theta}{2})}d\theta}\\
            &=    \frac{\alpha}{1-\alpha}\pi\left[-1+\frac{1}{\sqrt{2\alpha-\alpha^2}}  \right]=\frac{\pi}{2}c_1.
            \end{aligned}
        \end{equation*}
        
We use the induction argument to prove the case of $n\geq2$. Suppose that for $n=k$ we have
$$\alpha\int_{0}^{\pi}{\frac{\cos(k\theta)}{1-(1-\alpha)\cos\theta}d\theta}=\frac{\alpha\pi}{\sqrt{2\alpha-\alpha^2}}\left( \frac{1-\sqrt{2\alpha-\alpha^2}}{1-\alpha} \right)^k =\frac{\pi}{2}c_k.$$
Then, for $n=k+1$, by the Chebyshev recurrence, we have
$$\alpha\int_{0}^{\pi}{\frac{\cos((k+1)\theta)}{1-(1-\alpha)\cos\theta}d\theta}=\alpha\int_{0}^{\pi}{\frac{2\cos\theta\cos(k\theta)-\cos((k-1)\theta)}{1-(1-\alpha)\cos\theta}d\theta}.$$
We consider the following integral:
\begin{equation*}
    \begin{aligned}
        &\int_{0}^{\pi}{\frac{\cos\theta\cos(k\theta)}{1-(1-\alpha)\cos\theta}d\theta}\\
        &=-\frac{1}{1-\alpha}\int_0^{\pi}{\cos(k\theta)d\theta}+\left(1+\frac{\alpha}{1-\alpha}\right)\int_0^\pi{\frac{\cos(k\theta)}{1-(1-\alpha)\cos\theta}d\theta}\\
        &=\left(1+\frac{\alpha}{1-\alpha}\right)\frac{\pi}{\sqrt{2\alpha-\alpha^2}}\left( \frac{1-\sqrt{2\alpha-\alpha^2}}{1-\alpha} \right)^k.
    \end{aligned}
\end{equation*}
Based on this, we have:
\begin{equation*}
    \begin{aligned}
        &\alpha\int_{0}^{\pi}{\frac{\cos((k+1)\theta)}{1-(1-\alpha)\cos\theta}d\theta}\\
        &=\frac{\alpha\pi}{\sqrt{2\alpha-\alpha^2}}\left( \frac{1-\sqrt{2\alpha-\alpha^2}}{1-\alpha} \right)^{k-1} \left( \frac{2}{1-\alpha}\frac{1-\sqrt{2\alpha-\alpha^2}}{1-\alpha}-1\right)\\
        &= \frac{\alpha\pi}{\sqrt{2\alpha-\alpha^2}}\left( \frac{1-\sqrt{2\alpha-\alpha^2}}{1-\alpha} \right)^{k+1}=\frac{\pi}{2}c_{k+1}.
    \end{aligned}
\end{equation*}
This completes the proof. 
\end{proof}
%}

\begin{lemma}[chebyshev expansion for \hkpr]\label{lem:hkpr_expansion}
Let $I_n(t)$ be the first type of modified Bessel functions ~\cite{olver2010bessel}, namely, $I_n(t)=\frac{1}{\pi}\int_0^\pi{e^{t\cos\theta}\cos(n\theta)d\theta},n\geq0$. Then, \hkpr can be expanded as follows:
\begin{equation}
    \begin{aligned}
    \bm{\rho}_s=e^{-t(\mathbf{I}-\mathbf{P})}\mathbf{e}_s=e^{-t}I_0(t)\mathbf{e}_s+2e^{-t}\sum_{k=1}^{\infty}{I_k(t)T_k(\mathbf{P})\mathbf{e}_s}.
    \end{aligned}
\end{equation}\label{hk:l2approx}
\end{lemma}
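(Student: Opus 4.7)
The plan is to apply Lemma \ref{lem:cheby-exapnas} directly to the heat kernel GP function $f(x) = e^{-t(1-x)} = e^{-t}e^{tx}$, which is infinitely differentiable on $[-1,1]$ and whose Taylor series converges on all of $\mathbb{R}$. By Lemma \ref{lem:cheby-exapnas}, it suffices to compute the Chebyshev coefficients $c_k = \frac{2}{\pi}\int_{-1}^{1}\frac{f(x)T_k(x)}{\sqrt{1-x^2}}\,dx$ for $k\geq 1$ and $c_0 = \frac{1}{\pi}\int_{-1}^{1}\frac{f(x)T_0(x)}{\sqrt{1-x^2}}\,dx$, and then substitute into $f(\mathbf{P}) = \sum_{k=0}^\infty c_k T_k(\mathbf{P})$ before multiplying by $\mathbf{e}_s$.

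First I would perform the standard trigonometric substitution $x=\cos\theta$, under which $dx = -\sin\theta\,d\theta$, $\sqrt{1-x^2} = \sin\theta$ on $[0,\pi]$, and $T_k(\cos\theta) = \cos(k\theta)$. This converts the weighted integral over $[-1,1]$ into an integral over $[0,\pi]$ with the weight cancelling cleanly, giving
\begin{equation*}
c_k \;=\; \frac{2}{\pi}\int_0^\pi e^{-t(1-\cos\theta)}\cos(k\theta)\,d\theta \;=\; \frac{2e^{-t}}{\pi}\int_0^\pi e^{t\cos\theta}\cos(k\theta)\,d\theta
\end{equation*}
for $k\geq 1$, and the analogous expression with a prefactor $\frac{1}{\pi}$ instead of $\frac{2}{\pi}$ for $k=0$.

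Next I would recognize the resulting integral as the integral representation of the modified Bessel function of the first kind, $I_n(t) = \frac{1}{\pi}\int_0^\pi e^{t\cos\theta}\cos(n\theta)\,d\theta$ stated in the lemma hypothesis. This immediately yields $c_k = 2e^{-t}I_k(t)$ for $k\geq 1$ and $c_0 = e^{-t}I_0(t)$. Substituting these coefficients into the Chebyshev expansion of Lemma \ref{lem:cheby-exapnas} and applying both sides to $\mathbf{e}_s$ gives the claimed identity.

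The main technical issue is not the coefficient computation itself (which reduces to recognizing the Bessel integral), but justifying the termwise application of the matrix series to $\mathbf{e}_s$. Since $|I_k(t)|\leq I_0(t)\,e^{|t|}/k!$ grows only as $(t/2)^k/k!$ for large $k$ and $\|T_k(\mathbf{P})\mathbf{e}_s\|_2$ is uniformly bounded (because $\mathbf{P}$ is similar to the symmetric matrix $\tilde{\mathbf{A}}$ with spectrum in $[-1,1]$, so $T_k(\tilde{\mathbf{A}})$ has operator norm at most $1$), the series converges absolutely in norm; this lets the equality $f(\mathbf{P}) = \sum_k c_k T_k(\mathbf{P})$ established in Lemma \ref{lem:cheby-exapnas} at the matrix level be applied vectorwise to $\mathbf{e}_s$, completing the proof.
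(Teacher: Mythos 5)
Your proposal is correct and follows essentially the same route as the paper: apply Lemma~\ref{lem:cheby-exapnas} to $f(x)=e^{-t(1-x)}$, use the substitution $x=\cos\theta$ to reduce the coefficient integrals to $\frac{2e^{-t}}{\pi}\int_0^\pi e^{t\cos\theta}\cos(k\theta)\,d\theta$, and identify them with the modified Bessel integral representation, giving $c_k=2e^{-t}I_k(t)$ for $k\geq 1$ and $c_0=e^{-t}I_0(t)$. Your extra remarks on absolute convergence when applying the series to $\mathbf{e}_s$ are a harmless refinement beyond what the paper records.
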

%\comment{
\begin{proof}
See the full version of this paper.
 Since the \hkpr function is $f(x)=e^{-t(1-x)}$,  so the coefficients of the Chebyshev expansion satisfy:
\begin{equation*}
\begin{aligned}
c_n&=\frac{2}{\pi}\int_{-1}^{1}{\frac{1}{\sqrt{1-x^2}}T_n(x)f(x)dx}\\
&=\frac{2}{\pi}e^{-t}\int_0^\pi{e^{t\cos\theta}\cos(n\theta)d\theta}, n\geq1.
\end{aligned}
\end{equation*}
with $x=\cos\theta$. By the definition of the first type of modified Bessel function,  we have:
$$I_n(t)=\frac{1}{\pi}\int_0^\pi{e^{t\cos\theta}\cos(n\theta)d\theta},n\geq0.$$
Therefore, the coefficients of Chebyshev expansion for \hkpr is just $c_n=2e^{-t}I_n(t)$ for $n\geq 1$ and $c_0=e^{-t}I_0(t)$.
\end{proof}
%}

Note that Lemma~\ref{lem:pagerank_expansion} and Lemma~\ref{lem:hkpr_expansion} explicitly provide the Chebyshev coefficients for representing both \ssppr and \hkpr vectors, respectively. These coefficients will be useful in designing our algorithms. It is worth mentioning that we are the first to derive such Chebyshev coefficients for both \ssppr and \hkpr vectors. Below, we give a theoretical interpretation of why Chebyshev expansion can enhance the convergence speed of the algorithm. 

\subsection{Why Chebyshev Expansion} \label{subsec:why-cheby-expan}
Here we prove that by utilizing Chebyshev expansion to represent the GP function, the polynomial degree is lower compared to traditional Taylor expansion methods. This indicates that power iteration and push-style algorithms will achieve faster convergence speeds by using Chebyshev expansion. Below, we first present a useful property of Chebyshev polynomials. Specifically, this property asserts that the $l_2$-norm of $T_k(\mathbf{P})$ is bounded by 1, and the element-wise summation of $T_k(\mathbf{P}) \mathbf{e}_s$ equals 1.

\begin{property}\label{prop:l2-expansion-sum}
    $\Vert T_k(\mathbf{P}) \Vert_2\leq 1$, and $\sum_{u\in \mathcal{V}}{T_k(\mathbf{P}) \mathbf{e}_s(u)}=1$.
\end{property}
%\comment{
\begin{proof}
    First, we prove $\Vert T_k(\mathbf{P})  \Vert_2\leq 1$. By the classical properties of Chebyshev polynomials~\cite{mason2002chebyshev}, we have $\mathop{\max}\limits_{x\in [-1,1]}{|T_k(x)|}=1$. Since $\mathbf{P}$ is a random-walk matrix for undirected graph $\mathcal{G}$, we have $\lambda(\mathbf{P})=\lambda(\mathbf{AD}^{-1})=\lambda(\tilde{\mathbf{A}})\in [-1,1]$, where $\tilde{\mathbf{A}}=\mathbf{D}^{-1/2}\mathbf{A}\mathbf{D}^{-1/2}$ is the normalized Adjacency matrix, and $\lambda(\tilde{\mathbf{A}})$ is the set of eigen-values of $\tilde{\mathbf{A}}$. Therefore, $\Vert T_k(\mathbf{P})  \Vert_2 = \Vert T_k(\tilde{\mathbf{A}})  \Vert_2= \max_{\lambda_i \in \lambda(\tilde{\mathbf{A}})}{|T_k(\lambda_i)|} \leq 1$.

    For $\sum_{u\in \mathcal{V}}{T_k(\mathbf{P}) \mathbf{e}_s(u)}=1$, we prove this by induction. Since $T_0(\mathbf{P})\mathbf{e}_s=\mathbf{e}_s$, $T_1(\mathbf{P})\mathbf{e}_s=\mathbf{P}\mathbf{e}_s$, the property is clearly true for $k=0$ and $1$. We suppose that the property is true for every $l\leq k$. Then, for $l=k+1$, by the Chebyshev recurrence, we have
    $$T_{k+1}(\mathbf{P})\mathbf{e}_s=2\mathbf{P}T_k(\mathbf{P})\mathbf{e}_s - T_{k-1}(\mathbf{P})\mathbf{e}_s.$$

    Since $\mathbf{P}$ is a random-walk matrix, $\mathbf{P}T_k(\mathbf{P})\mathbf{e}_s$ does not change the element-wise summation of $T_k(\mathbf{P})\mathbf{e}_s$:
    $$\sum_{u\in \mathcal{V}}{T_k(\mathbf{P}) \mathbf{e}_s(u)}=\sum_{u\in \mathcal{V}}{[\mathbf{P}T_k(\mathbf{P}) \mathbf{e}_s](u)}.$$
    Therefore, for $l=k+1$, we still have $\sum_{u\in \mathcal{V}}{T_{k+1}(\mathbf{P}) \mathbf{e}_s(u)}=1$. This completes the proof.
\end{proof}
%}

Let $N$ denote the truncation step of the Taylor expansion of the GP function, and $K$ be the truncation step of the Chebyshev expansion. For a general GP function $f$, since Chebyshev expansion is the best $l_2$-approximation of $f$ (e.g., see Theorem 4.1 in ~\cite{mason2002chebyshev}), $K$ is strictly smaller than $N$. For both \ssppr and \hkpr, we prove that $K$ is approximately $O(\sqrt{N})$.

Note that for traditional Taylor expansion, we need to set $N=O(\frac{1}{\alpha}\log{\frac{1}{\epsilon}})$ to achieve $\Vert\hat{\bm{\pi}}_s-\bm{\pi}_s\Vert_1< \epsilon$ for the \ssppr vector ~\cite{wu2021unifying,chen2023accelerating}, and set $N=O(2t\log{\frac{1}{\epsilon}})$ to achieve $\Vert\hat{\bm{\rho}}_s-\bm{\rho_s}\Vert_1< \epsilon$ for the \hkpr vector~\cite{kloster2014heat}. The following lemma shows that $K$ is approximately $O(\sqrt{N})$ for \ssppr and \hkpr to achieve a comparable  error bound. %These findings suggest that we can obtain around $O(\sqrt{N})$ acceleration of convergence  for \ssppr and \hkpr vectors computation.

\begin{lemma}\label{lem:converge_rate_cheb}
    Let $K= O\left(\frac{1}{\sqrt{\alpha}}\log{\frac{1}{\epsilon}}\right)$, we define the $K$ step truncation of the Chebyshev expansion of \ssppr vector as follows:
    \begin{equation}
\hat{\bm{\pi}}_s=\gamma \mathbf{e}_s+2\gamma\sum_{k=1}^{K}{\beta^k}T_k(\mathbf{P})\mathbf{e}_s.
    \end{equation}
    Then, we have $\Vert\hat{\bm{\pi}}_s-\bm{\pi}_s\Vert_2< \epsilon$.
    
    Similarly, let $K= O\left(\sqrt{ t}\log{\frac{1}{\epsilon}}\right)$, we define the $K$ step truncation of the Chebyshev expansion of \hkpr vector as follows:
    \begin{equation}
\hat{\bm{\rho}}_s=e^{-t}I_0(t)\mathbf{e}_s+2e^{-t}\sum_{k=1}^{K}{I_k(t)T_k(\mathbf{P})\mathbf{e}_s}.
    \end{equation}
    Then, we have $\Vert\hat{\bm{\rho}}_s-\bm{\rho}_s\Vert_2< \epsilon$.
\end{lemma}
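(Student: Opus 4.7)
The plan is to write the truncation error as the tail of the Chebyshev expansion, and then bound it by the tail of the (scalar) Chebyshev coefficients by invoking Property \ref{prop:l2-expansion-sum}. Concretely, for both cases the error has the form
\begin{equation*}
\mathbf{y} - \hat{\mathbf{y}} \;=\; \sum_{k=K+1}^{\infty} c_k\, T_k(\mathbf{P})\mathbf{e}_s,
\end{equation*}
and the triangle inequality together with $\|T_k(\mathbf{P})\mathbf{e}_s\|_2 \le \|T_k(\mathbf{P})\|_2 \cdot \|\mathbf{e}_s\|_2 \le 1$ yields $\|\mathbf{y}-\hat{\mathbf{y}}\|_2 \le \sum_{k>K}|c_k|$. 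The problem is therefore reduced to a purely scalar tail-bound question on the coefficients from Lemmas \ref{lem:pagerank_expansion} and \ref{lem:hkpr_expansion}.

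For the \ssppr case the coefficients $c_k = 2\gamma\beta^k$ form a geometric series, so
\begin{equation*}
\|\bm{\pi}_s - \hat{\bm{\pi}}_s\|_2 \;\le\; 2\gamma\sum_{k=K+1}^{\infty}\beta^k \;=\; \frac{2\gamma\beta^{K+1}}{1-\beta}.
\end{equation*}
A short algebraic calculation gives $1-\beta = \frac{\sqrt{2\alpha-\alpha^2}-\alpha}{1-\alpha} = \Theta(\sqrt{\alpha})$ for small $\alpha$, hence $\beta \le 1 - c\sqrt{\alpha}$ for some absolute constant $c>0$ and $\beta^{K+1} \le \exp(-c\sqrt{\alpha}\,K)$. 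Choosing $K = O\bigl(\tfrac{1}{\sqrt{\alpha}}\log\tfrac{1}{\epsilon}\bigr)$ makes the right-hand side at most $\epsilon$, giving the stated bound. This step is essentially routine once the $\Theta(\sqrt{\alpha})$ expansion of $1-\beta$ is in hand.

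For the \hkpr case the coefficients are $c_k = 2e^{-t}I_k(t)$, so the remaining task is to bound $e^{-t}\sum_{k>K}I_k(t)$. I would invoke the classical sharp estimate for Chebyshev approximation of the matrix exponential (Hochbruck--Lubich), which gives
\begin{equation*}
e^{-t}\sum_{k>K} I_k(t) \;\le\; C\exp\!\bigl(-K^2/(4t)\bigr) \quad\text{for } K \le 2t,
\end{equation*}
together with the stronger super-geometric decay from $I_k(t)\le (t/2)^k/k! \le (et/(2k))^k$ for $K>2t$. Setting $K = \Theta(\sqrt{t}\log(1/\epsilon))$ ensures $K^2/(4t) \gtrsim \log(1/\epsilon)$ in the first regime and immediately handles the second, producing the claimed tail bound. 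The main obstacle here is that the naive bound $I_k(t)\le (et/(2k))^k$ alone only yields the weaker rate $K=O(t+\log(1/\epsilon))$; the improved $\sqrt{t}$ scaling really comes from the saddle-point behavior of the integral $I_k(t)=\tfrac{1}{\pi}\int_0^\pi e^{t\cos\theta}\cos(k\theta)\,d\theta$, captured by the $e^{-K^2/(4t)}$ estimate above, so the cleanest presentation is to cite this known Bessel tail bound rather than reproduce its derivation.
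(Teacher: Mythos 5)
Your reduction to a scalar coefficient-tail bound (triangle inequality plus $\Vert T_k(\mathbf{P})\mathbf{e}_s\Vert_2\le 1$ from Property~\ref{prop:l2-expansion-sum}) is exactly the paper's reduction, and your \ssppr half then coincides with the paper's argument: geometric tail $\frac{2\gamma\beta^{K+1}}{1-\beta}$ with $\frac{2\gamma}{1-\beta}=\Theta(1)$ and $1-\beta=\frac{\sqrt{2\alpha-\alpha^2}-\alpha}{1-\alpha}=\Theta(\sqrt{\alpha})$, hence $K=O\bigl(\tfrac{1}{\sqrt{\alpha}}\log\tfrac{1}{\epsilon}\bigr)$. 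Where you genuinely diverge is the \hkpr half. The paper argues informally: the coefficients $e^{-t}\bigl[I_0(t)+2\sum_{k\ge1}I_k(t)\bigr]$ sum to $1$, and since $e^{-t}I_n(t)\sim\frac{1}{\sqrt{2\pi t}}$ for $n\ll t$, the mass is concentrated in the first $O(\sqrt{2\pi t})$ terms; it does not really track the $\log\frac{1}{\epsilon}$ dependence or quantify the tail beyond $K$. You instead import the rigorous Gaussian-type tail bound $e^{-t}\sum_{k>K}I_k(t)\le C\,e^{-K^2/(4t)}$ for $K\le 2t$ (Hochbruck--Lubich style, reflecting the saddle-point decay of $I_k(t)$), plus super-geometric decay for $K>2t$. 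This buys you a cleaner, fully quantitative derivation of the $K=\Theta(\sqrt{t}\log\tfrac{1}{\epsilon})$ scaling at the cost of citing an external estimate, whereas the paper's version is shorter but essentially heuristic on this point.

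One small correction: the inequality you quote for the large-$K$ regime, $I_k(t)\le (t/2)^k/k!$, is reversed ($I_k(t)\ge (t/2)^k/k!$ termwise). What you actually need, and what is true, is the bound with the exponential prefactor: since $(k+j)!\ge k!\,j!$ one gets $I_k(t)\le \frac{(t/2)^k}{k!}I_0(t)\le \frac{(t/2)^k}{k!}e^{t}$, i.e.\ $e^{-t}I_k(t)\le (t/2)^k/k!\le (et/(2k))^k$, which is exactly the quantity appearing in the coefficient tail $2e^{-t}\sum_{k>K}I_k(t)$. With that one-line fix the argument goes through.
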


 %\comment{
\begin{proof}
   First, we prove the case for \ssppr. By Property ~\ref{prop:l2-expansion-sum}, we have $\sum_{u\in V}{T_k(\mathbf{P})\mathbf{e}_s(u)}=1$ for every $k\geq 0$. By the property of \ssppr, we have $\Vert\bm{\pi}_s\Vert_1=1$ and $\bm{\pi}_s(u)\geq 0$ for $u\in \mathcal{V}$. Thus, the coefficients of the Chebyshev expansion sum to 1:        $$\gamma+2\gamma\sum_{k=1}^{\infty}{\beta^k}=1.$$
    Hence, to achieve $\Vert\hat{\bm{\pi}}_s-\bm{\pi}_s\Vert_2<\epsilon$, we just need $2\gamma\sum_{k=K+1}^{\infty}{\beta^k}<\epsilon$. Note that $2\gamma\sum_{k=K+1}^{\infty}{\beta^k}=\beta^{K+1}\frac{2\gamma}{1-\beta}$. When $\alpha$ is sufficiently small, we have :
    $$\frac{2\gamma}{1-\beta}=\frac{\frac{\alpha}{\sqrt{2\alpha-\alpha^2}}}{1-\frac{1-\sqrt{2\alpha-\alpha^2}}{1-\alpha}}=\frac{\alpha(1-\alpha)}{\sqrt{2\alpha-\alpha^2}\left(-\alpha+\sqrt{2\alpha-\alpha^2}\right)}\sim \frac{1}{2}.$$
    
    Therefore, we just need to determine $K$ such that $\beta^{K+1}<2\epsilon$. We take the logarithm on both sides. By the fact that $1-\beta=\frac{\sqrt{2\alpha-\alpha^2}-\alpha}{1-\alpha}\sim \sqrt{2\alpha}$ when $\alpha $ is sufficiently small, we have $K=O\left(\frac{\log{\epsilon}}{\log{\beta}}\right)=O\left(\frac{1}{1-\beta}\log{\frac{1}{\epsilon}}\right)=O\left(\frac{1}{\sqrt{\alpha}}\log{\frac{1}{\epsilon}}\right)$.

    The proof for the case of \hkpr is similar. Similar to \ssppr, by Property ~\ref{prop:l2-expansion-sum} and the definition of \hkpr, the coefficients of Chebyshev expansion for \hkpr sums to 1: $e^{-t}\left[I_0(t)+2\sum_{k=1}^{\infty}{I_k(t)}\right]=1$. Therefore, to determine the order of the truncation step $K$, we just need the partial sum of the coefficients $e^{-t}\left[I_0(t)+2\sum_{k=1}^{K}{I_k(t)}\right]\geq 1-\epsilon$. On the other hand, by the asymptotic property of the Bessel function \cite{olver2010bessel}, we have $e^{-t}I_n(t)\sim \frac{1}{\sqrt{2\pi t}}$ when $n\ll t$. Therefore, for $t\gg 1$ , set the truncation step $K=O\left(\sqrt{2\pi t}\right)$ is enough. This completes the proof.
\end{proof}
%}

Note that since $\log{\frac{1}{\epsilon}}$ is often a small constant, Lemma~\ref{lem:converge_rate_cheb} indicates that $K$ is roughly $O(\sqrt{N})$. These findings suggest that we can achieve an approximately $O(\sqrt{N}) $ acceleration of convergence for the computation of \ssppr and \hkpr by Chebyshev expansion.

\subsection{The Proposed \ltwocheb Algorithm}\label{sec:global-algo}
In this subsection, we develop a new power iteration algorithm based on Chebyshev expansion, called \ltwocheb. Specifically, consider the GP vector  with the following Chebyshev expansion:
\begin{equation}\label{equ:propagation_l2_expansion} \mathbf{y}=f(\mathbf{P})\mathbf{e}_s=\sum_{k=0}^{\infty}{c_kT_k(\mathbf{P})\mathbf{e}_s}.
\end{equation}

Suppose there exists a truncation step $K$ such that $\Vert \mathbf{y}-\hat{\mathbf{y}}\Vert_2<\epsilon$, where $\hat{\mathbf{y}}$ is the $K$ step truncated Chebyshev expansion. We define $\mathbf{r}_k=T_k(\mathbf{P})\mathbf{e}_s$, then the recurrence of $\mathbf{r}_k$ follows the \textit{three-term recurrence} of Chebyshev polynomials:
\begin{eqnarray}\label{equ:res-recurrence}
        \left\{
        \begin{aligned}
        \mathbf{r}_0&=\mathbf{e}_s; \mathbf{r}_1=\mathbf{Pe}_s;\\
        \mathbf{r}_{k+1}&=2\mathbf{P}\mathbf{r}_k-\mathbf{r}_{k-1}, & &  k\geq 1.
        \end{aligned}
        \right.
    \end{eqnarray} 
As a result, we can devise a power-iteration style algorithm to compute the approximate vector $\hat{\mathbf{y}}$ of  $\mathbf{y}=f(\mathbf{P})\mathbf{e}_s$ based on Eq.~(\ref{equ:res-recurrence}). Specifically, in each iteration, we do the following two steps: (i) compute the new $\mathbf{r}_{k+1}$ according to Eq.~(\ref{equ:res-recurrence}); (ii) Update $\hat{\mathbf{y}}\leftarrow \hat{\mathbf{y}}+ c_k \mathbf{r}_k$ until the algorithm stops at step $K$. The detailed pseudocode is shown in Algorithm ~\ref{algo:accurate-chebyshev}.

    \begin{algorithm}[t!]
%\caption{70\% of the column width!}
\small
	\SetAlgoLined
	\KwIn{A graph $\mathcal{G}=(\mathcal{V},\mathcal{E})$, a GP function $f$ with its Chebyshev expansion coefficients$\{c_k\}$, a node $s$, truncation step $K$}
        $\hat{\mathbf{y}}=c_0\mathbf{e}_s$; $\mathbf{r}_0=\mathbf{e}_s$; $\mathbf{r}_1=\mathbf{Pe}_s$;
        $k=1$\;
		\While{$k< K$}{
         $\hat{\mathbf{y}}\leftarrow \hat{\mathbf{y}}+ c_k \mathbf{r}_k$;
		$\mathbf{r}_{k+1}\leftarrow 2\mathbf{P}\mathbf{r}_k-\mathbf{r}_{k-1}$;
        $k\leftarrow k+1$;
		}
$\hat{\mathbf{y}}\leftarrow \hat{\mathbf{y}}+c_K\mathbf{r}_{K}$ \;
	\KwOut{$\hat{\mathbf{y}}$ as the approximation of $\mathbf{y}=f(\mathbf{P})\mathbf{e}_s$}
	\caption{\ltwocheb}\label{algo:accurate-chebyshev}
\end{algorithm}

\stitle{Analysis of Algorithm ~\ref{algo:accurate-chebyshev}}. The error bound of Algorithm ~\ref{algo:accurate-chebyshev} is analyzed in the following theorem.  
\begin{theorem}\label{thm:err-ltwocheb}
    Let $K$ be the truncation step of Chebyshev expansion such that $\sum_{k=K+1}^{+\infty}{|c_k|}<\epsilon$, then the approximation $\hat{\mathbf{y}}$ calculated by Algorithm ~\ref{algo:accurate-chebyshev} satisfy the $l_2$-error bound: $\Vert \mathbf{y}-\hat{\mathbf{y}}\Vert_2<\epsilon$.
\end{theorem}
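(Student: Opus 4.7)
The plan is to realize the error $\mathbf{y}-\hat{\mathbf{y}}$ as precisely the tail of the Chebyshev expansion of the GP vector starting at index $K+1$, and then bound this tail in $l_2$ norm by the sum of coefficient magnitudes using Property~\ref{prop:l2-expansion-sum}.

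First, I would verify by induction on $k$ that the vectors $\mathbf{r}_k$ produced inside Algorithm~\ref{algo:accurate-chebyshev} satisfy $\mathbf{r}_k = T_k(\mathbf{P})\mathbf{e}_s$ for all $0 \le k \le K$. The base cases $k=0$ and $k=1$ hold by the explicit initialization $\mathbf{r}_0=\mathbf{e}_s=T_0(\mathbf{P})\mathbf{e}_s$ and $\mathbf{r}_1=\mathbf{P}\mathbf{e}_s=T_1(\mathbf{P})\mathbf{e}_s$. The inductive step is immediate because the algorithm's recurrence $\mathbf{r}_{k+1}\leftarrow 2\mathbf{P}\mathbf{r}_k - \mathbf{r}_{k-1}$ is exactly the three-term Chebyshev recurrence from Definition~\ref{def:chebyshev} applied to the matrix $\mathbf{P}$ and then to $\mathbf{e}_s$. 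Summing the updates $\hat{\mathbf{y}} \leftarrow \hat{\mathbf{y}} + c_k\mathbf{r}_k$ over $k=0,\ldots,K$ then yields the closed form $\hat{\mathbf{y}} = \sum_{k=0}^{K} c_k T_k(\mathbf{P})\mathbf{e}_s$.

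Next, I would invoke Lemma~\ref{lem:cheby-exapnas} to write the exact GP vector as $\mathbf{y}=f(\mathbf{P})\mathbf{e}_s=\sum_{k=0}^{+\infty} c_k T_k(\mathbf{P})\mathbf{e}_s$ and subtract, so that $\mathbf{y} - \hat{\mathbf{y}} = \sum_{k=K+1}^{+\infty} c_k T_k(\mathbf{P})\mathbf{e}_s$. Applying the triangle inequality in $l_2$ norm and then Property~\ref{prop:l2-expansion-sum} (which gives $\|T_k(\mathbf{P})\|_2 \le 1$) together with $\|\mathbf{e}_s\|_2 = 1$ produces the chain $\|\mathbf{y}-\hat{\mathbf{y}}\|_2 \le \sum_{k=K+1}^{+\infty} |c_k|\,\|T_k(\mathbf{P})\mathbf{e}_s\|_2 \le \sum_{k=K+1}^{+\infty} |c_k| < \epsilon$, where the last inequality is exactly the hypothesis on $K$.

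The only subtlety, and hence the step I would take a bit of care on, is the legitimacy of passing the infinite sum through the $l_2$ norm. This is justified because the assumption $\sum_{k=K+1}^{+\infty} |c_k|<\epsilon$ implies absolute summability of the coefficients beyond $K$, and Property~\ref{prop:l2-expansion-sum} gives a uniform bound $\|T_k(\mathbf{P})\mathbf{e}_s\|_2 \le 1$; hence the series $\sum_{k=K+1}^{+\infty} c_k T_k(\mathbf{P})\mathbf{e}_s$ converges absolutely in $(\mathbb{R}^n, \|\cdot\|_2)$, which validates the termwise application of the triangle inequality. No other obstacles arise, and the theorem follows.
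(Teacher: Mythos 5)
Your proposal is correct and follows essentially the same route as the paper: identify $\mathbf{r}_k=T_k(\mathbf{P})\mathbf{e}_s$, so that $\hat{\mathbf{y}}$ is the $K$-step truncation of the Chebyshev expansion, and bound the tail $\bigl\Vert \sum_{k=K+1}^{+\infty}c_kT_k(\mathbf{P})\mathbf{e}_s\bigr\Vert_2\leq\sum_{k=K+1}^{+\infty}|c_k|<\epsilon$ via Property~\ref{prop:l2-expansion-sum}. Your added care about the induction for $\mathbf{r}_k$ and the absolute convergence justifying the termwise triangle inequality only makes explicit what the paper leaves implicit.
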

%\comment{
\begin{proof}
     Since $\mathbf{r}_k=T_k(\mathbf{P})\mathbf{e}_s$ by our definition, the approximation $\hat{\mathbf{y}}=\sum_{k=0}^{K}{c_k\mathbf{r}_k}=\sum_{k=0}^{K}{c_kT_k(\mathbf{P})\mathbf{e}_s}$ is the $K$ step truncation of Chebyshev expansion. Since $\Vert T_k(\mathbf{P})\mathbf{e}_s \Vert_2\leq 1$ by Property ~\ref{prop:l2-expansion-sum}, we have:
    $$\Vert \mathbf{y}-\hat{\mathbf{y}}\Vert_2=\Vert \sum_{k=K+1}^{+\infty}{c_k T_k(\mathbf{P})\mathbf{e}_s} \Vert_2\leq \sum_{k=K+1}^{+\infty}{|c_k|}<\epsilon.$$
    Thus, the $l_2$-error bound $\Vert \mathbf{y}-\hat{\mathbf{y}}\Vert_2<\epsilon$ is established.
\end{proof}
%}
According to Theorem~\ref{thm:err-ltwocheb}, we can easily derive the time complexity of Algorithm ~\ref{algo:accurate-chebyshev}.
\begin{theorem}\label{thm:time-ltwocheb}
    To achieve $\Vert \mathbf{y}-\hat{\mathbf{y}}\Vert_2<\epsilon$, the time overhead of  Algorithm ~\ref{algo:accurate-chebyshev} is $O(K(m+n))$, where $K$ satisfies $\sum_{k=K+1}^{+\infty}{|c_k|}<\epsilon$.
\end{theorem}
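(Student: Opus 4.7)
The plan is straightforward complexity bookkeeping for Algorithm~\ref{algo:accurate-chebyshev}; correctness of the $l_2$-error bound is already established by Theorem~\ref{thm:err-ltwocheb}, so all that remains is to count operations under the stated choice of $K$.

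First, I would identify the per-iteration cost inside the while loop. Each iteration performs three operations: (i) the scalar-vector update $\hat{\mathbf{y}} \leftarrow \hat{\mathbf{y}} + c_k \mathbf{r}_k$, which is a scalar multiplication followed by a vector addition and costs $O(n)$; (ii) the three-term recurrence $\mathbf{r}_{k+1} \leftarrow 2\mathbf{P}\mathbf{r}_k - \mathbf{r}_{k-1}$, whose dominant cost is the sparse matrix-vector product with $\mathbf{P} = \mathbf{A}\mathbf{D}^{-1}$; since $\mathbf{P}$ has exactly $2m$ nonzero entries stored in a standard sparse (CSR or adjacency-list) representation, this product takes $O(m+n)$ time, and the subsequent scalar scaling and vector subtraction add only $O(n)$; (iii) the counter increment, which is $O(1)$. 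Thus each iteration costs $O(m+n)$.

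Second, I would multiply through by the number of iterations. The initialization computes $\mathbf{r}_0 = \mathbf{e}_s$, the sparse matrix-vector product $\mathbf{r}_1 = \mathbf{P}\mathbf{e}_s$, and the initial $\hat{\mathbf{y}} = c_0 \mathbf{e}_s$, totaling $O(m+n)$. The main loop then executes for $k = 1, \ldots, K-1$, contributing $O((K-1)(m+n))$, and the final post-loop update $\hat{\mathbf{y}} \leftarrow \hat{\mathbf{y}} + c_K \mathbf{r}_K$ adds another $O(n)$. Summing gives a total running time of $O(K(m+n))$. The truncation depth $K$ itself is supplied by the hypothesis, namely the smallest integer satisfying $\sum_{k=K+1}^{+\infty}|c_k| < \epsilon$, which by Theorem~\ref{thm:err-ltwocheb} is precisely what guarantees the $l_2$-error $\Vert \mathbf{y}-\hat{\mathbf{y}}\Vert_2 < \epsilon$.

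There is essentially no real obstacle here; the only subtlety worth stating explicitly is that the sparse matrix-vector multiplication with $\mathbf{P}$ runs in $O(m+n)$ rather than $O(n^2)$, which relies on using a sparse storage format so that each row access only visits its nonzero entries. This is the same assumption implicit in the $O(N(m+n))$ bound for \powermethod and holds for every realistic graph-algorithmic setting considered in this paper.
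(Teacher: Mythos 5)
Your proposal is correct and follows essentially the same route as the paper: count the $O(m+n)$ cost of the three-term recurrence (sparse multiplication by $\mathbf{P}$) per iteration, multiply by the $K$ iterations, and invoke Theorem~\ref{thm:err-ltwocheb} to tie the choice of $K$ satisfying $\sum_{k=K+1}^{+\infty}|c_k|<\epsilon$ to the $l_2$-error guarantee. The extra bookkeeping you add (initialization, final update, sparse-format remark) is fine but does not change the argument.
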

%\comment{
\begin{proof}
In each iteration, Algorithm ~\ref{algo:accurate-chebyshev} performs the three-term recurrence $\mathbf{r}_{k+1}\leftarrow 2\mathbf{P}\mathbf{r}_k-\mathbf{r}_{k-1}$ which takes $O(m+n)$ time. There are $K$ iterations in total, thus the time overhead of Algorithm ~\ref{algo:accurate-chebyshev} is $O(K(m+n))$. Note that by Theorem~\ref{thm:err-ltwocheb}, $K$ must satisfy $\sum_{k=K+1}^{+\infty}{|c_k|}<\epsilon$ to achieve the desired $l_2$-error.
\end{proof}
%}

%\stitle{Discussions.}
For \ssppr, according to Lemma ~\ref{lem:converge_rate_cheb}, the truncation step $K$ is set to $O\left(\frac{1}{\sqrt{\alpha}}\log{\frac{1}{\epsilon}}\right)$ , thus the time complexity of \ltwocheb is $O\left(\frac{m+n}{\sqrt{\alpha}}\log{\frac{1}{\epsilon}}\right)$.  Similarly, for \hkpr, the truncation step $K$ is set to $O\left(\sqrt{ t}\log{\frac{1}{\epsilon}}\right)$, thus the time complexity of \ltwocheb is $O\left(\sqrt{t}(m+n)\log{\frac{1}{\epsilon}}\right)$. Compared to the traditional power iteration methods \cite{wu2021unifying}, our \ltwocheb algorithm improves the time complexity by a factor $\sqrt{\alpha}$ and $\sqrt{t}$ for \ssppr and \hkpr respectively.

%\stitle{Compared to other Chebyshev-polynomial Methods.} 
%\stitle{Remarks}. 
\stitle{Compared to other Chebyshev-polynomial Methods.} We note that Chebyshev polynomial methods have been previously employed to accelerate the computation of ``random walk with restart'' based proximity measures in graphs \cite{coskun2016efficient,cocskun2018indexed}. However, our techniques differ fundamentally from these studies in several aspects. First, the methods proposed in \cite{coskun2016efficient, cocskun2018indexed} are primarily designed for top-$k$ proximity queries. These methods cannot be applied to general GP vector computation (e.g., \hkpr), and also lack detailed theoretical analysis of an $O(\sqrt{N})$ faster convergence speed. Second, our Chebyshev power method differs from those in \cite{coskun2016efficient, cocskun2018indexed}; specifically, their methods require iterative computation of Chebyshev polynomial coefficients. In contrast, our approach, as detailed in Lemma~\ref{lem:pagerank_expansion}, allows for explicit coefficient expressions without iteration. These explicit coefficients facilitate the implementation of power iteration or push-style algorithms for general GP vector computation. Third, the algorithms in \cite{coskun2016efficient, cocskun2018indexed} are not local, whereas based on our Chebyshev expansion technique, we can devise a push-style local algorithm that significantly accelerates computation compared to power iteration methods (see Section~\ref{sec:local-algo}).

\section{A Novel Chebyshev Push Method}\label{sec:local-algo}
In this section, we propose a novel push-style local algorithm, called \chebpush, based on a newly-developed subset Chebyshev recurrence technique. Before introducing our techniques, let us closely examine the time complexity of \ltwocheb. Note that the primary time complexity arises from the matrix-vector multiplication in each iteration: computing $\mathbf{Pr}_k$ takes $O(m)$ operations, and updating $\mathbf{r}_{k+1}=2\mathbf{P}\mathbf{r}_k-\mathbf{r}_{k-1}$ along with $\hat{\mathbf{y}}\leftarrow \hat{\mathbf{y}}+ c_k \mathbf{r}_k$ takes $O(n)$ operations. Thus, \ltwocheb is inherently a \textit{global} algorithm since it necessitates exploring the entire graph in every iteration.

To transform \ltwocheb into a local algorithm, a natural question arises: can we execute the Chebyshev recurrence $\mathbf{r}_{k+1}=2\mathbf{P}\mathbf{r}_k-\mathbf{r}_{k-1}$ \emph{partially}? Alternatively, can we implement the Chebyshev recurrence with fewer than $O(m+n)$ operations while maintaining rapid convergence? One potential approach involves mimicking the structure of \truncatepush by truncating certain ``less significant'' nodes during the matrix-vector multiplication $\mathbf{P}\mathbf{r}_k$. However, due to the inherent structural differences between $\mathbf{P}^k\mathbf{e}_s$ and $T_k(\mathbf{P})\mathbf{e}_s$, achieving a local implementation of \ltwocheb appears considerably more challenging than that of \powermethod. To address these challenges, we develop a novel and powerful technique termed subset Chebyshev recurrence, demonstrating how to locally implement Chebyshev recurrence on graphs.

\subsection{Subset Chebyshev Recurrence} \label{subsec:subset-cheby-recur}
As discussed previously, executing the following Chebyshev recurrence takes $O(m+n)$ time:
\begin{equation}\label{eq:residual_accurate_recurrence}
\mathbf{r}_{k+1}=2\mathbf{P}\mathbf{r}_k-\mathbf{r}_{k-1}.
\end{equation}

To derive a local algorithm, we aim for the operations of the three-term recurrence to be independent of $m$ and $n$ while ensuring that the vector $\mathbf{r}_k$ remains sparse at each iteration. To achieve this objective, we propose the following subset Chebyshev recurrence.

\begin{definition}{(Subset Chebyshev Recurrence)}{}\label{def:purning-chebyshev}
Given a source node $s$, the random walk matrix $\mathbf{P}$, a series of node sets $\{S_k\}_k$ with $S_k\subseteq \mathcal{V}$. The subset Chebyshev recurrence is defined as follows:
\begin{eqnarray}\label{eq:subset-cheby-recurrence}
        \left\{
        \begin{aligned}
        \hat{\mathbf{r}}_0&=\mathbf{e}_s;  \hat{\mathbf{r}}_1=\mathbf{Pe}_s; \\
        \hat{\mathbf{r}}_{k+1}&=2\mathbf{P}  (\hat{\mathbf{r}}_k|_{S_k})-\hat{\mathbf{r}}_{k-1}|_{S_{k-1}} +  \hat{\mathbf{r}}_{k-1}|_{\mathcal{V}-S_{k-1}}, & & k\geq 1,
        \end{aligned}
        \right.
    \end{eqnarray}
    where $\hat{\mathbf{r}}_k|_{S_k}\in \mathbb{R}^n$ represents the vector $\hat{\mathbf{r}}_k$ constrained to the node subset $S_k$. That is, for any $u\in S_k$ , $\hat{\mathbf{r}}_k|_{S_k}(u)=\hat{\mathbf{r}}_k(u)$; for any $u\in \mathcal{V}- S_k$, $\hat{\mathbf{r}}_k|_{S_k}(u)=0$.
\end{definition}

Note that by Definition~\ref{def:purning-chebyshev}, the computation of the Subset Chebyshev Recurrence is constrained to the node subsets $S_k$, enabling its local implementation (see Section~\ref{subsec:chebypush}). Compared to the Chebyshev recurrence (Eq.~(\ref{eq:residual_accurate_recurrence})), the subset Chebyshev recurrence includes an additional term $\hat{\mathbf{r}}_{k-1}|_{\mathcal{V}-S_{k-1}}$, which plays a crucial role in bounding the approximation error of our algorithm. Clearly, when $S_k=\mathcal{V}$ for all $k$, the subset Chebyshev recurrence is equivalent to the exact Chebyshev recurrence.  

\stitle{Analysis of the subset Chebyshev recurrence.}
Here, we establish a connection between $\{\hat{\mathbf{r}}_k\}$ derived from the subset Chebyshev recurrence and $\{\mathbf{r}_k\}$ obtained from the exact Chebyshev recurrence. For our analysis, we define the \textit{deviation} produced at step $k$ as $\delta_k \triangleq \hat{\mathbf{r}}_k-\hat{\mathbf{r}}_k|_{S_k}=\hat{\mathbf{r}}_k|_{\mathcal{V}-S_k}$. Then, the three-term subset Chebyshev recurrence can be reformulated as:
\begin{equation} \label{eq:residual-prop}
\hat{\mathbf{r}}_{k+1}=2\mathbf{P}(\hat{\mathbf{r}}_k-\delta_k)-\hat{\mathbf{r}}_{k-1} +  2\delta_{k-1}.   
\end{equation}

Note that by Eq.~(\ref{eq:residual-prop}), the deviation can propagate across different iterations. Figure ~\ref{fig:subset-chebyshev} illustrates the deviation propagation of Eq.~(\ref{eq:residual-prop}). More specifically, the first two terms of the subset Chebyshev recurrence are defined as $\hat{\mathbf{r}}_0=\mathbf{e}_s$ and $\hat{\mathbf{r}}_1=\mathbf{Pe}_s$. For $k=2$, we obtain the three-term recurrence: $\hat{\mathbf{r}}_{2}=2\mathbf{P}(\hat{\mathbf{r}}_1-\delta_1)-\hat{\mathbf{r}}_{0}$ due to $\delta_0=0$. This means that for the deviation $\delta_1$, there are $-2\mathbf{P}\delta_1$ propagated  from $\hat{\mathbf{r}}_1$ to $\hat{\mathbf{r}}_2$. For $k=3$, we derive the three-term recurrence: $\hat{\mathbf{r}}_{3}=2\mathbf{P}(\hat{\mathbf{r}}_2-\delta_2)-\hat{\mathbf{r}}_{1} + 2 \delta_1$. This indicates that for the deviation $\delta_2$, $-2\mathbf{P}\delta_2$ is propagated from $\hat{\mathbf{r}}_2$ to $\hat{\mathbf{r}}_3$, and at the same time we ``\textbf{compensate}'' $2\delta_1$ from $\hat{\mathbf{r}}_1$ to $\hat{\mathbf{r}}_3$. Note that this ``compensate'' operation is important to establish the connection between $\{\hat{\mathbf{r}}_k\}$ and $\{\mathbf{r}_k\}$. The following lemma demonstrates that after performing this step, the difference between $\{\hat{\mathbf{r}}_k\}$ and $\{\mathbf{r}_k\}$ can be characterized by using Chebyshev polynomials.

\begin{figure}
    \centering
    \includegraphics[scale=0.27]{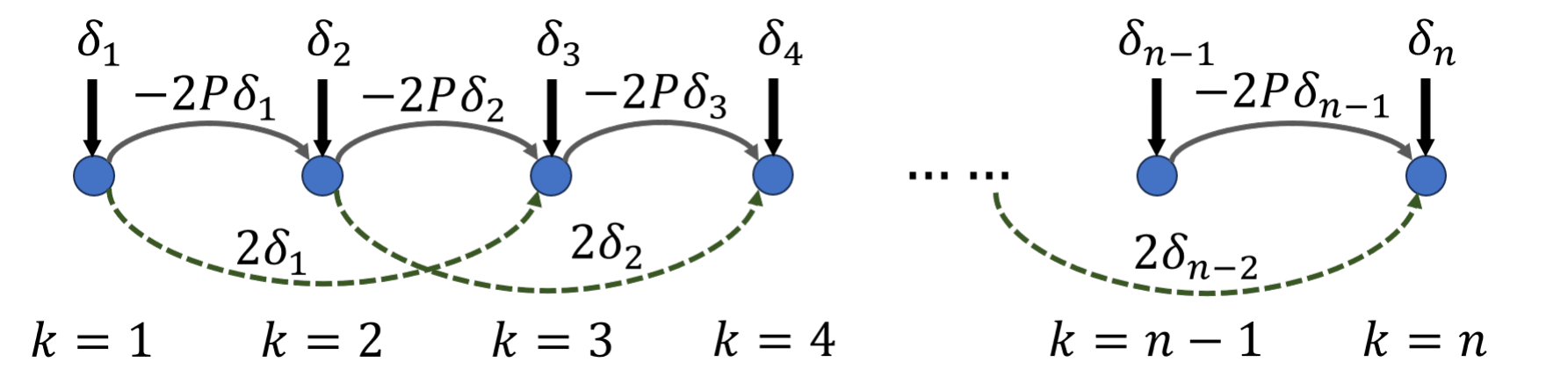} \vspace{-0.3cm}
    \caption{Illustration of the deviation propagation}\vspace{-0.5cm}
    \label{fig:subset-chebyshev}
\end{figure}

\begin{lemma}\label{lem:residual-propagation}
Let $\{\hat{\mathbf{r}}_k\}$ be derived by the subset Chebyshev recurrence (Definition  ~\ref{def:purning-chebyshev}) and $\{\mathbf{r}_k\}$ be obtained by exact Chebyshev recurrence. Then, the relationship between $\mathbf{r}_k$ and $\hat{\mathbf{r}}_k$ is:
\begin{equation}
\mathbf{r}_k=\hat{\mathbf{r}}_k+2\sum_{l=1}^{k-1}{T_{k-l}(\mathbf{P})\mathbf{\delta}_l},
    \end{equation}
    where $\delta_l:=\hat{\mathbf{r}}_l|_{\mathcal{V}-S_l}$ is the deviation produced at step $l$ with $l<k$.
\end{lemma}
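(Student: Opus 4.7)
The natural strategy is to work with the error sequence $\mathbf{e}_k \triangleq \mathbf{r}_k - \hat{\mathbf{r}}_k$ and show by induction on $k$ that $\mathbf{e}_k = 2\sum_{l=1}^{k-1} T_{k-l}(\mathbf{P})\,\delta_l$, where the empty sum handles $k=0,1$ (for which $\mathbf{e}_0=\mathbf{e}_1=\mathbf{0}$ by the shared initialization).

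\textbf{Step 1: derive a clean recurrence for $\mathbf{e}_k$.} Starting from the exact recurrence $\mathbf{r}_{k+1}=2\mathbf{P}\mathbf{r}_k-\mathbf{r}_{k-1}$ and the reformulated subset recurrence in Eq.~(\ref{eq:residual-prop}), namely $\hat{\mathbf{r}}_{k+1}=2\mathbf{P}(\hat{\mathbf{r}}_k-\delta_k)-\hat{\mathbf{r}}_{k-1}+2\delta_{k-1}$, I subtract to obtain
\begin{equation*}
\mathbf{e}_{k+1} \;=\; 2\mathbf{P}\,\mathbf{e}_k - \mathbf{e}_{k-1} \;+\; 2\mathbf{P}\delta_k - 2\delta_{k-1}.
\end{equation*}
This exhibits $\mathbf{e}_k$ as a Chebyshev-type three-term recurrence with a known source term $2\mathbf{P}\delta_k - 2\delta_{k-1}$. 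The ``compensation'' term $2\delta_{k-1}$ in the definition of the subset recurrence is exactly what is needed here to make the driving term match the Chebyshev structure.

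\textbf{Step 2: induction.} Assuming the claim for $k-1$ and $k$, I substitute both formulas into the recurrence from Step 1:
\begin{equation*}
\mathbf{e}_{k+1} \;=\; 4\mathbf{P}\!\sum_{l=1}^{k-1}\! T_{k-l}(\mathbf{P})\delta_l \;-\; 2\!\sum_{l=1}^{k-2}\! T_{k-1-l}(\mathbf{P})\delta_l \;+\; 2\mathbf{P}\delta_k - 2\delta_{k-1}.
\end{equation*}
For each common index $1\le l\le k-2$, the Chebyshev recurrence $T_{k+1-l}(x)=2x\,T_{k-l}(x)-T_{k-1-l}(x)$ collapses the corresponding pair of terms to $2T_{k+1-l}(\mathbf{P})\delta_l$. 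The boundary $l=k-1$ contributes $4\mathbf{P}^2\delta_{k-1}-2\delta_{k-1}=2T_2(\mathbf{P})\delta_{k-1}$, and the isolated $2\mathbf{P}\delta_k = 2T_1(\mathbf{P})\delta_k$ gives the $l=k$ term. Collecting everything yields $\mathbf{e}_{k+1}=2\sum_{l=1}^{k} T_{k+1-l}(\mathbf{P})\delta_l$, completing the induction.

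\textbf{Anticipated obstacle.} The one delicate step is the bookkeeping in Step 2: the two sums inherited from the inductive hypothesis have different upper limits, and the ``leftover'' terms at $l=k-1$ and $l=k$ must combine to exactly $T_2(\mathbf{P})\delta_{k-1}$ and $T_1(\mathbf{P})\delta_k$ respectively. This is purely a matter of careful index shifting, but it is where the inclusion of $+2\delta_{k-1}$ in the definition of the subset recurrence pays off: without it, the $-2\delta_{k-1}$ residue at the $l=k-1$ boundary would not cancel against $4\mathbf{P}^2\delta_{k-1}$ to produce a clean $T_2(\mathbf{P})$ coefficient, and the closed-form identity would fail. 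Once the induction closes, the statement of the lemma follows.
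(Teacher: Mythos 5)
Your proof is correct and takes essentially the same route as the paper's: an induction on $k$ obtained by combining the exact recurrence with the reformulated subset recurrence and collapsing terms via $T_{k+1-l}(\mathbf{P})=2\mathbf{P}T_{k-l}(\mathbf{P})-T_{k-1-l}(\mathbf{P})$, with the boundary terms handled by $T_1(\mathbf{P})=\mathbf{P}$ and $T_2(\mathbf{P})=2\mathbf{P}^2-\mathbf{I}$. Packaging the argument through the error sequence $\mathbf{e}_k=\mathbf{r}_k-\hat{\mathbf{r}}_k$ is just a cleaner bookkeeping of the same algebra the paper performs directly on $\mathbf{r}_{k+1}$.
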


\begin{proof}
    We prove this lemma by induction. For the basic case $n=1$, by Definition ~\ref{def:purning-chebyshev}, $\mathbf{r}_1=\hat{\mathbf{r}}_1=\mathbf{Pe}_s$, the lemma clearly holds. For $n=2$, by the three-term subset Chebyshev recurrence, we have:
    \begin{equation}
    \hat{\mathbf{r}}_{2}=2\mathbf{P}(\hat{\mathbf{r}}_1-\delta_1)-\hat{\mathbf{r}}_{0}= \mathbf{r}_2- 2\mathbf{P}\delta_1= \mathbf{r}_2- 2T_1(\mathbf{P})\delta_1.
    \end{equation}
    Thus, the lemma also holds. Similarly, for $n=3$, by the three-term subset Chebyshev recurrence, we have:
   \begin{equation}
   \begin{aligned}
    \hat{\mathbf{r}}_{3}&=2\mathbf{P}(\hat{\mathbf{r}}_2-\delta_2)-\hat{\mathbf{r}}_{1}+2\delta_1\\
    &= 2\mathbf{P}(\mathbf{r}_2-2T_1(\mathbf{P})\delta_1-\delta_2)-\mathbf{r}_{1}+2\delta_1 \\
    &=\mathbf{r}_3-(2\mathbf{P}(2T_1(\mathbf{P}))-2\mathbf{I})\delta_1-2\mathbf{P}\delta_2 \\
    &=\mathbf{r}_3-2T_2(\mathbf{P})\delta_1-2T_1(\mathbf{P})\delta_2.
    \end{aligned}
    \end{equation}
    Thus, the lemma also holds for $n=3$. Suppose for $n\leq k$ with $k\geq 4$, the following equation holds: $\mathbf{r}_k=\hat{\mathbf{r}}_k+2\sum_{l=1}^{k-1}{T_{k-l}(\mathbf{P})\mathbf{\delta}_l}$. Then, for $n=k+1$, according to the Chebyshev recurrence $\mathbf{r}_{k+1}=2\mathbf{P}\mathbf{r}_k-\mathbf{r}_{k-1}$, we have
    \begin{equation}\label{equ:r_k+1}\small
    \begin{aligned}
            \mathbf{r}_{k+1}=2\mathbf{P}\left[\hat{\mathbf{r}}_k+2\sum_{l=1}^{k-1}{T_{k-l}(\mathbf{P})\mathbf{\delta}_l} \right] 
            -\hat{\mathbf{r}}_{k-1}-2\sum_{l=1}^{k-2}{T_{k-l-1}(\mathbf{P})\mathbf{\delta}_l}.\\
        \end{aligned}
    \end{equation}

    We use the subset Chebyshev recurrence $\hat{\mathbf{r}}_{k+1}=2\mathbf{P}(\hat{\mathbf{r}}_k-\delta_k)-\hat{\mathbf{r}}_{k-1} +  2\delta_{k-1}$ to replace the term $\{\hat{\mathbf{r}}_{k}\}$ in the right hand side of Eq.~(\ref{equ:r_k+1}). Then, we have
    \begin{equation}
        \begin{aligned}
            \mathbf{r}_{k+1}&=\hat{\mathbf{r}}_{k+1}+2\mathbf{P}\left[\delta_k+2\sum_{l=1}^{k-1}{T_{k-l}(\mathbf{P})\mathbf{\delta}_l} \right] 
           -2\sum_{l=1}^{k-2}{T_{k-l-1}(\mathbf{P})\mathbf{\delta}_l}-2\delta_{k-1}\\
           &=\hat{\mathbf{r}}_{k+1}+2\mathbf{P}\delta_k + (2\mathbf{P}(2T_1(\mathbf{P}))-2\mathbf{I})\delta_{k-1}\\
           &+2\sum_{l=1}^{k-2}{(2\mathbf{P}T_{k-l}(\mathbf{P})-T_{k-l-1}(\mathbf{P}))\delta_l}\\
           &=\hat{\mathbf{r}}_{k+1}+2T_1(\mathbf{P})\delta_k+2T_2(\mathbf{P})\delta_{k-1}+2\sum_{l=1}^{k-2}{T_{k+1-l}(\mathbf{P}))\delta_l}\\
           &=\hat{\mathbf{r}}_{k+1}+2\sum_{l=1}^{k}{T_{k+1-l}(\mathbf{P})\mathbf{\delta}_l},
        \end{aligned}
    \end{equation}
    where the third equality holds due to $T_1(\mathbf{P})=\mathbf{P}$ and $T_2(\mathbf{P})=2\mathbf{P}T_1(\mathbf{P})-\mathbf{I}$.
    This completes the proof.
\end{proof}

\comment{
\begin{proof}
By Eq.~(\ref{eq:residual-prop}), we can clearly see that $\hat{\mathbf{r}}_k$ is directly related to $\delta_{k-1}$ and $\delta_{k-2}$. By recursively executing  Eq.~(\ref{eq:residual-prop}), it is easy to check that $\hat{\mathbf{r}}_k$ is  dependent on $\delta_{l}$ for every $l< k$. To prove the lemma, we assume without loss of generality that the following equality holds:
     \begin{equation}\label{eq:poly-to-determine}
        \mathbf{r}_k=\hat{\mathbf{r}}_k+\sum_{l=1}^{k-1}{\Psi_{k-l}^{(l)}(\mathbf{P})\mathbf{\delta}_l},
    \end{equation}
    where $\{\Psi_{k-l}^{(l)}(x)\}_{k:k\geq l}$ is some polynomials to be determined. 
It remains to prove that $\Psi_{k-l}^{(l)}(x)$ is exactly a Chebyshev polynomial.
Recall that the exact Chebyshev recurrence is $\mathbf{r}_{k+1}=2\mathbf{P}\mathbf{r}_k-\mathbf{r}_{k-1}$. 
Using Eq.~(\ref{eq:poly-to-determine}) to replace both sides of the exact Chebyshev recurrence, we have
    \begin{equation}\label{eq:take-in}
    \begin{aligned}
        \hat{\mathbf{r}}_{k+1}+\sum_{l=1}^{k}{\Psi_{k+1-l}^{(l)}(\mathbf{P})\mathbf{\delta}_l}=2\mathbf{P}\left[\hat{\mathbf{r}}_k+\sum_{l=1}^{k-1}{\Psi_{k-l}^{(l)}(\mathbf{P})\mathbf{\delta}_l} \right] \\
        - \hat{\mathbf{r}}_{k-1}-\sum_{l=1}^{k-2}{\Psi_{k-l-1}^{(l)}(\mathbf{P})\mathbf{\delta}_l}.
    \end{aligned}
    \end{equation}

    We use the subset Chebyshev recurrence $\hat{\mathbf{r}}_{k+1}=2\mathbf{P}(\hat{\mathbf{r}}_k-\delta_k)-\hat{\mathbf{r}}_{k-1} +  2\delta_{k-1}$ to replace the term $\{\hat{\mathbf{r}}_{k+1}\}$ in the left side of Eq.~(\ref{eq:take-in}). Then, we have

    \begin{equation} \label{eq:poly-equal}
    \begin{aligned}
        \sum_{l=1}^{k}{\Psi_{k+1-l}^{(l)}(\mathbf{P})\mathbf{\delta}_l}=2\mathbf{P}\left[\delta_k+ \sum_{l=1}^{k-1}{\Psi_{k-l}^{(l)}(\mathbf{P})\mathbf{\delta}_l}\right]  \\
        -\sum_{l=1}^{k-2}{\Psi_{k-l-1}^{(l)}(\mathbf{P})\mathbf{\delta}_l}-2\delta_{k-1}.
    \end{aligned}
    \end{equation}
    
    We compare the coefficients of $\delta_l$ from both sides of Eq.~(\ref{eq:poly-equal}), and can obtain the recurrence of $\Psi_{k-l}^{(l)}(\mathbf{P})\delta_l$ as follows:
    \begin{eqnarray}
        \left\{
        \begin{aligned}
        \Psi_1^{(k)}(\mathbf{P})\delta_{k}&=2\mathbf{P}\delta_k;\\
        \Psi_2^{(k-1)}(\mathbf{P})\delta_{k-1}&=2\mathbf{P}\Psi_1^{(k-1)}(\mathbf{P})\delta_{k-1}-2 \delta_{k-1}; \\
        \Psi_{k+1-l}^{(l)}(\mathbf{P})\delta_l&=2\mathbf{P}\Psi_{k-l}^{(l)}(\mathbf{P})\delta_l-\Psi_{k-1-l}^{(l)}(\mathbf{P})\delta_l,\ 1\leq l \leq k-2 .
        \end{aligned}
        \right.
    \end{eqnarray}

    Therefore, the recurrence of $\Psi_{k-l}^{(l)}(x)$ satisfy
    \begin{eqnarray} \label{eq:cheby}
        \left\{
        \begin{aligned}
        \Psi_1^{(l)}(x)&=2x;  \Psi_2^{(l)}(x)=2x\Psi_1^{(l)}(x)-2; \\
        \Psi_{k+1-l}^{(l)}(x)&=2x\Psi_{k-l}^{(l)}(x)-\Psi_{k-1-l}^{(l)}(x), & &  k\geq l+2.
        \end{aligned}
        \right.
    \end{eqnarray}
    Eq.~\ref{eq:cheby} indicates that  $\Psi_{k-l}^{(l)}(x)$ is indeed a Chebyshev polynomial (with a multiplication factor 2), thus the lemma is established.
\end{proof}
}
Lemma~\ref{lem:residual-propagation} provides a useful theoretical result for analyzing the error introduced by the subset Chebyshev recurrence. If the deviation $\delta_l$ is bounded for every $l$, the difference between $\{\hat{\mathbf{r}}_k\}$ and $\{\mathbf{r}_k\}$ can also be bounded, as the norm of the Chebyshev polynomial over the random-walk matrix $\mathbf{P}$ (i.e., $T_{k-l}(\mathbf{P})$) is bounded. 

\subsection{\chebpush: Chebyshev Push Method} \label{subsec:chebypush}
%To accelerate the \ltwocheb algorithm, instead of employing the exact Chebyshev recurrence, we can approximate the vector $\mathbf{r}_k$ using the subset Chebyshev recurrence method. 

 Recall that for the subset Chebyshev recurrence (Definition ~\ref{def:purning-chebyshev} ), it performs $\hat{\mathbf{r}}_{k+1}=2\mathbf{P}(\hat{\mathbf{r}}_k|_{S_k})-\hat{\mathbf{r}}_{k-1}|_{S_{k-1}} +  \hat{\mathbf{r}}_{k-1}|_{\mathcal{V}-S_{k-1}}$ in the $k^{th}$ iteration. During this iterative process, it seems that we need to explore every node $u\in \mathcal{V}$. However, as we will show below, it suffices to explore only the nodes $u\in S_k$ (no need to traverse the nodes in ${\mathcal{V}-S_{k-1}}$). To achieve this, we maintain two vectors: $\mathbf{r}_{cur}$ and $\mathbf{r}_{new}$. Initially, we set $\mathbf{r}_{cur}=\mathbf{Pe}_s$ and $\mathbf{r}_{new}=-\mathbf{e}_s$. For the $k^{th}$ iteration, we perform the following operations:
\begin{eqnarray}\label{eq:subset-cheby-implement}
        \left\{
        \begin{aligned}
        &\mathbf{r}_{new}=\mathbf{r}_{new}+2\mathbf{P}(\mathbf{r}_{cur}|_{S_k}), \\
        &\mathbf{r}_{cur}|_{S_k}=-\mathbf{r}_{cur}|_{S_k}, \\
        &\mathbf{r}_{cur}.swap(\mathbf{r}_{new}),
        \end{aligned}
        \right.
    \end{eqnarray}
where $\mathbf{r}_{cur}.swap(\mathbf{r}_{new})$ denotes swapping the values between $\mathbf{r}_{cur}$ and $\mathbf{r}_{new}$. By Eq.~(\ref{eq:subset-cheby-implement}), the algorithm only needs to traverse nodes $u \in S_k$ in each iteration, allowing us to implement the subset Chebyshev recurrence locally without the need to search the entire graph. We prove that Eq.~(\ref{eq:subset-cheby-implement}) exactly follows the subset Chebyshev recurrence (i.e., Eq.~(\ref{eq:subset-cheby-recurrence})) in each iteration.

\begin{lemma}\label{lem:accurate-recurrence}
    Initially, set $\mathbf{r}_{cur}=\mathbf{Pe}_s$ and $\mathbf{r}_{new}=-\mathbf{e}_s$. After the $k^{th}$ iteration of Eq.~(\ref{eq:subset-cheby-implement}), the following equation holds:  $\mathbf{r}_{cur}=\hat{\mathbf{r}}_{k+1}$ and $\mathbf{r}_{new}=-\hat{\mathbf{r}}_{k}|_{S_k}+\hat{\mathbf{r}}_{k}|_{\mathcal{V}-S_k}$, where $\{\hat{\mathbf{r}}_k\}$ is obtained by the subset Chebyshev recurrence.
\end{lemma}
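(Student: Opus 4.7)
The plan is a clean induction on $k$, taking the initialization as the base case $k=0$. Before any iteration has run, $\mathbf{r}_{cur}=\mathbf{Pe}_s$, which equals $\hat{\mathbf{r}}_1$ by the first line of the subset Chebyshev recurrence in Definition~\ref{def:purning-chebyshev}. For the $\mathbf{r}_{new}$ side, $\mathbf{r}_{new}=-\mathbf{e}_s$; since the support of $\hat{\mathbf{r}}_0=\mathbf{e}_s$ is just $\{s\}$ and may be assumed to lie inside $S_0$, we have $-\hat{\mathbf{r}}_0|_{S_0}+\hat{\mathbf{r}}_0|_{\mathcal{V}-S_0}=-\mathbf{e}_s$, matching the claim for $k=0$.

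For the inductive step, assume that after the $k$-th iteration $\mathbf{r}_{cur}=\hat{\mathbf{r}}_{k+1}$ and $\mathbf{r}_{new}=-\hat{\mathbf{r}}_k|_{S_k}+\hat{\mathbf{r}}_k|_{\mathcal{V}-S_k}$. I would then trace through the three lines of Eq.~(\ref{eq:subset-cheby-implement}) applied with the set $S_{k+1}$. The first line updates $\mathbf{r}_{new}$ to $-\hat{\mathbf{r}}_k|_{S_k}+\hat{\mathbf{r}}_k|_{\mathcal{V}-S_k}+2\mathbf{P}(\hat{\mathbf{r}}_{k+1}|_{S_{k+1}})$, which is exactly $\hat{\mathbf{r}}_{k+2}$ by the subset Chebyshev recurrence. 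The second line flips the sign of $\mathbf{r}_{cur}$ on $S_{k+1}$ only, producing $-\hat{\mathbf{r}}_{k+1}|_{S_{k+1}}+\hat{\mathbf{r}}_{k+1}|_{\mathcal{V}-S_{k+1}}$. The swap in the third line moves each vector into its asserted slot, closing the induction.

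The argument is essentially bookkeeping, so the only place I would slow down is in verifying that, after line~$1$ of iteration $k+1$, the two contributions carried inside $\mathbf{r}_{new}$, namely the ``compensation'' term $\hat{\mathbf{r}}_k|_{\mathcal{V}-S_k}$ and the $-\hat{\mathbf{r}}_k|_{S_k}$ term left over from the previous sign-flip, together reproduce precisely the $-\hat{\mathbf{r}}_k|_{S_k}+\hat{\mathbf{r}}_k|_{\mathcal{V}-S_k}$ appearing in Definition~\ref{def:purning-chebyshev} for $\hat{\mathbf{r}}_{k+2}$. This is the one spot where an indexing mistake between consecutive $S_k$'s would break the proof; Figure~\ref{fig:subset-chebyshev} makes the accounting transparent, and once it is checked the swap semantics do the rest.
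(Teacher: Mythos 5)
Your proposal is correct and follows essentially the same route as the paper: an induction over iterations whose base case is the initialization $\mathbf{r}_{cur}=\mathbf{Pe}_s=\hat{\mathbf{r}}_1$, $\mathbf{r}_{new}=-\mathbf{e}_s$, and whose inductive step simply traces the three lines of Eq.~(\ref{eq:subset-cheby-implement}) against Definition~\ref{def:purning-chebyshev} and then applies the swap. If anything, you are slightly more careful than the paper's own argument, since you make explicit the convention that $\mathrm{supp}(\hat{\mathbf{r}}_0)\subseteq S_0$ (equivalently $\delta_0=0$) needed for the base case, which the paper leaves implicit.
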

\begin{proof}
    We prove this lemma by an induction argument. First, for the initial case, since $\mathbf{r}_{cur}=\mathbf{Pe}_s=\hat{\mathbf{r}}_1,\mathbf{r}_{new}=-\mathbf{e}_s=-\hat{\mathbf{r}}_0$, the lemma clearly holds. Assume that $\mathbf{r}_{cur}=\hat{\mathbf{r}}_{k}$ and $\mathbf{r}_{new}=-\hat{\mathbf{r}}_{k}|_{S_{k-1}}+\hat{\mathbf{r}}_{k}|_{\mathcal{V}-S_{k-1}}$ hold after the $(k-1)^{th}$ iteration. Then, by Eq.~(\ref{eq:subset-cheby-implement}), we perform $\mathbf{r}_{new}= \mathbf{r}_{new}+2\mathbf{P}(\hat{\mathbf{r}}_k|_{S_k})$ and $\mathbf{r}_{cur}|_{S_k}=-\mathbf{r}_{cur}|_{S_k}$ which can obtain $\mathbf{r}_{new}=\hat{\mathbf{r}}_{k+1}$ (by Eq.~(\ref{eq:subset-cheby-recurrence})) and  $\mathbf{r}_{cur}=-\hat{\mathbf{r}}_{k}|_{S_k}+\hat{\mathbf{r}}_{k}|_{\mathcal{V}-S_k}$. Subsequently, we swap  $\mathbf{r}_{cur}$ and $\mathbf{r}_{new}$. As a result, after the $k^{th}$ iteration, the following equation holds:  $\mathbf{r}_{cur}=\hat{\mathbf{r}}_{k+1}$ and $\mathbf{r}_{new}=-\hat{\mathbf{r}}_{k}|_{S_k}+\hat{\mathbf{r}}_{k}|_{\mathcal{V}- S_k}.$ This completes the proof.
\end{proof}

Based on Eq.~(\ref{eq:subset-cheby-implement}) and Lemma~\ref{lem:accurate-recurrence}, we are ready to design a local algorithm to implement the subset Chebyshev recurrence. Our algorithm, namely \chebpush, is detailed in Algorithm~\ref{algo:approx-chebyshev}. Specifically, Algorithm~\ref{algo:approx-chebyshev} first sets a small threshold $\epsilon_a\ll 1$ and sets a $k$-step threshold $\epsilon_k=\frac{1}{\sum_{l=k}^{K}{|c_{l}|}}\frac{ \epsilon_a}{4K}$ for each iteration $k$ (Line~4). For the subset Chebyshev recurrence, the algorithm sets each node subset as $S_k=\{u\in \mathcal{V}:|\hat{\mathbf{r}}_k(u)|>\epsilon_k d_u\}$ for every $k$ (Line~5), and iteratively performs the operations defined in Eq.~(\ref{eq:subset-cheby-implement}) until the truncation step equals $K$ (Lines~3-14). Initially, by Lemma~\ref{lem:accurate-recurrence}, the algorithm sets $\hat{\mathbf{y}}=c_0\mathbf{e}_s$, and $\mathbf{r}_{cur}=\mathbf{Pe}_s,\mathbf{r}_{new}=-\mathbf{e}_s$ corresponding to $\hat{\mathbf{r}}_1,-\hat{\mathbf{r}}_0$ respectively (Line~2). Then, in each iteration $k$, for every $u\in \mathcal{V}$ with $|\mathbf{r}_{cur}(u)|^2>\epsilon_k d_u$ (i.e., $u\in S_k$), the algorithm performs the following \textit{push} operation based on Eq.~(\ref{eq:subset-cheby-implement}): (i) adds $c_k\mathbf{r}_{cur}(u)$ to $\hat{\mathbf{y}}$ (Line 6); (ii) distributes $2\mathbf{r}_{cur}(u)$ to each $u$'s neighbor $v$, i.e., adding $\mathbf{r}_{new}(v)\leftarrow \mathbf{r}_{new}(v)+2\frac{\mathbf{r}_{cur}(u)}{d_u}$ for $v\in \mathcal{N}(u)$ (Lines 7-9); (iii) converts $\mathbf{r}_{cur}(u)$ to $-\mathbf{r}_{cur}(u)$ (Line 10). After that, the algorithm swaps $\mathbf{r}_{cur}$ and $\mathbf{r}_{new}$ (Line 12) following Eq.~(\ref{eq:subset-cheby-implement}). Finally, the algorithm terminates after executing $K$ iterations and outputs $\hat{\mathbf{y}}$ as the approximation of the GP vector $\mathbf{y}=f(\mathbf{P})\mathbf{e}_s$. Note that by Lemma~\ref{lem:accurate-recurrence}, $\mathbf{r}_{cur}$ consistently  equals $\hat{\mathbf{r}}_k$ across all iterations. This guarantees that \chebpush correctly implements the subset Chebyshev recurrence to approximate the GP vector.

\begin{algorithm}[t!]
%\caption{70\% of the column width!}
\small
	\SetAlgoLined
	\KwIn{A graph $\mathcal{G}=(\mathcal{V},\mathcal{E})$, a GP function $f$ with its Chebyshev expansion coefficients$\{c_k\}$, a node $s$, truncation step $K$, threshold $\epsilon_a$}
        $k=1$ \;
        $\hat{\mathbf{y}}=c_0\mathbf{e}_s$ ; $\mathbf{r}_{cur}=\mathbf{Pe}_s$ ; $\mathbf{r}_{new}=-\mathbf{e}_s$\;
		\While{$k< K$}{
        $\epsilon_k=\frac{1}{\sum_{l=k}^{K}{|c_{l}|}}\frac{ \epsilon_a}{4K}$ \;
        \For{$u\in \mathcal{V}$ with $|\mathbf{r}_{cur}(u)|> \epsilon_k d_u$}{
         $\hat{\mathbf{y}}(u)\leftarrow \hat{\mathbf{y}}(u)+ c_k \mathbf{r}_{cur}(u)$ \;
            \For{$v\in N(u)$}{
		$\mathbf{r}_{new}(v)\leftarrow \mathbf{r}_{new}(v)+2\frac{\mathbf{r}_{cur}(u)}{d_u}$\;
                }
                $\mathbf{r}_{cur}(u)\leftarrow -\mathbf{r}_{cur}(u)$
        }
        $\mathbf{r}_{cur}.swap(\mathbf{r}_{new})$ \;
        $k\leftarrow k+1$ \;
		}
	\KwOut{$\hat{\mathbf{y}}$ as the approximation of $\mathbf{y}=f(\mathbf{P})\mathbf{e}_s$}
	\caption{\chebpush}\label{algo:approx-chebyshev}
\end{algorithm}

\comment{
\stitle{Local Implementation of \chebpush.} Note that the implementation of \chebpush does not require searching the whole graph. Specifically, we define the invoking set $S_k$ for iteration $k$ with $S_1=\{\mathcal{N}(s)\}$. At each iteration $k$, when executing Line 8 of algorithm ~\ref{algo:approx-chebyshev}, we check: if $|\mathbf{r}_{new}(v)|^2> \epsilon_{k+1} d_v$ and $v$ is not yet in $S_{k+1}$, then add $v$ to $S_{k+1}$; if $|\mathbf{r}_{new}(v)|^2< \epsilon_{k+1} d_v$ and $v$ is already in $S_{k+1}$, then remove $v$ from $S_{k+1}$. Moreover, when executing Line 10 of algorithm ~\ref{algo:approx-chebyshev}, if $|\mathbf{r}_{cur}(u)|^2> \epsilon_{k+2} d_u$, then we add $u$ to $S_{k+2}$. So to implement \chebpush, we just need to invoke $S_k$ at each iteration $k$ without searching the whole graph.
}

\stitle{An illustrative example.} Consider a toy graph shown in Figure  ~\ref{fig:exp-chebpush} with $\mathcal{V}=\{v_0,v_1,v_2,v_3\}$, $\mathcal{E}=\{(v_0,v_1),(v_0,v_2),(v_0,v_3)\}$. Suppose that our goal is to compute $f(\mathbf{P})\mathbf{e}_s=T_3(\mathbf{P})\mathbf{e}_{v_1}$, we set the threshold $\epsilon_k=1/3$ for $k\geq 1$. Initially, we have $\mathbf{r}_{cur}=\mathbf{Pe}_{v_1}=[1,0,0,0]$ and $\mathbf{r}_{new}=-\mathbf{e}_{v_1}=[0,-1,0,0]$. For the first iteration $k=1$, we perform a \textit{push} operation on $v_0$. That is, for each $v_0$'s neighbor: $v_1,v_2,v_3$, we update $\mathbf{r}_{new}$ as: $\mathbf{r}_{new}(v_i)\leftarrow \mathbf{r}_{new}(v_i)+2\frac{\mathbf{r}_{cur}(v_0)}{d_{v_0}}$ for $i=1,2,3$. Then, we convert $\mathbf{r}_{cur}(v_0)$ to $-\mathbf{r}_{cur}(v_0)$. After the \textit{push} operation, we have $\mathbf{r}_{cur}=[-1,0,0,0]$ and $\mathbf{r}_{new}=[0,-\frac{1}{3},\frac{2}{3}, \frac{2}{3}]$. Then, we swap $\mathbf{r}_{cur}$ and  $\mathbf{r}_{new}$, and turn to the next iteration. For the next iteration $k=2$, initially we have $\mathbf{r}_{cur}=[0,-\frac{1}{3},\frac{2}{3}, \frac{2}{3}]$ and $\mathbf{r}_{new}=[-1,0,0,0]$. Since $\epsilon_k=1/3$, we only perform \textit{push} operation on $v_2$ and $v_3$. Similarly, after the \textit{push} operation, we have $\mathbf{r}_{cur}=[0,-\frac{1}{3},-\frac{2}{3}, -\frac{2}{3}]$ and $\mathbf{r}_{new}=[\frac{5}{3},0,0,0]$. Then, we swap $\mathbf{r}_{cur}$ and $\mathbf{r}_{new}$, and turn to the next iteration $k=3$. Thus, $\mathbf{r}_{cur}=[\frac{5}{3},0,0,0]$ is $\hat{\mathbf{r}}_3$ obtained by the subset Chebyshev recurrence, which is an approximation of $\mathbf{r}_3=T_3(\mathbf{P})\mathbf{e}_{v_1}$. Note that the exact value is $T_3(\mathbf{P})\mathbf{e}_{v_1}=[1,0,0,0]$, suggesting that \chebpush obtains a reasonable approximate solution.

\begin{figure}
    \centering
    \includegraphics[scale=0.22]{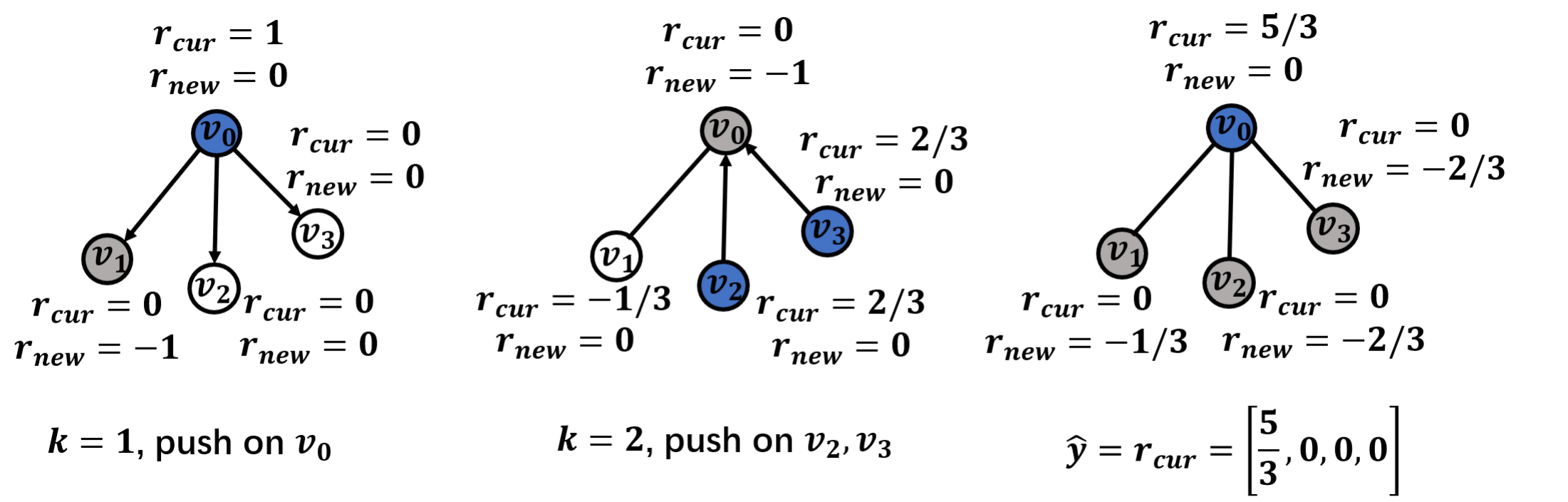} \vspace{-0.4cm}
    \caption{Illustration of \chebpush to compute $T_3(\mathbf{P})\mathbf{e}_{v_1}$} \vspace{-0.3cm}
    \label{fig:exp-chebpush}
\end{figure}

\stitle{Comparison between \push and \chebpush.} Similar to traditional \push \cite{andersen2006local}, \chebpush is also a local algorithm that explores only a small portion of the graph. However, the key mechanism of \chebpush is fundamentally different from that of \push. As illustrated in Figure ~\ref{fig:push-chebpush}, the traditional \push operation on a node $u$ distributes $\mathbf{r}_{cur}(u)$ uniformly to $u$'s neighbors, and sets $\mathbf{r}_{cur}(u)$ to $0$. In contrast, \chebpush distributes $2\mathbf{r}_{cur}(u)$ uniformly to $u$'s neighbors, and updates $\mathbf{r}_{cur}(u)$ to $-\mathbf{r}_{cur}(u)$. Besides, traditional \push can be viewed as a local and asynchronous variant of \powermethod \cite{wu2021unifying}, whereas \chebpush implements \ltwocheb locally on the graph using a novel subset Chebyshev recurrence technique and also shares its rapid convergence properties. 

\begin{figure}
    \centering
    \includegraphics[scale=0.22]{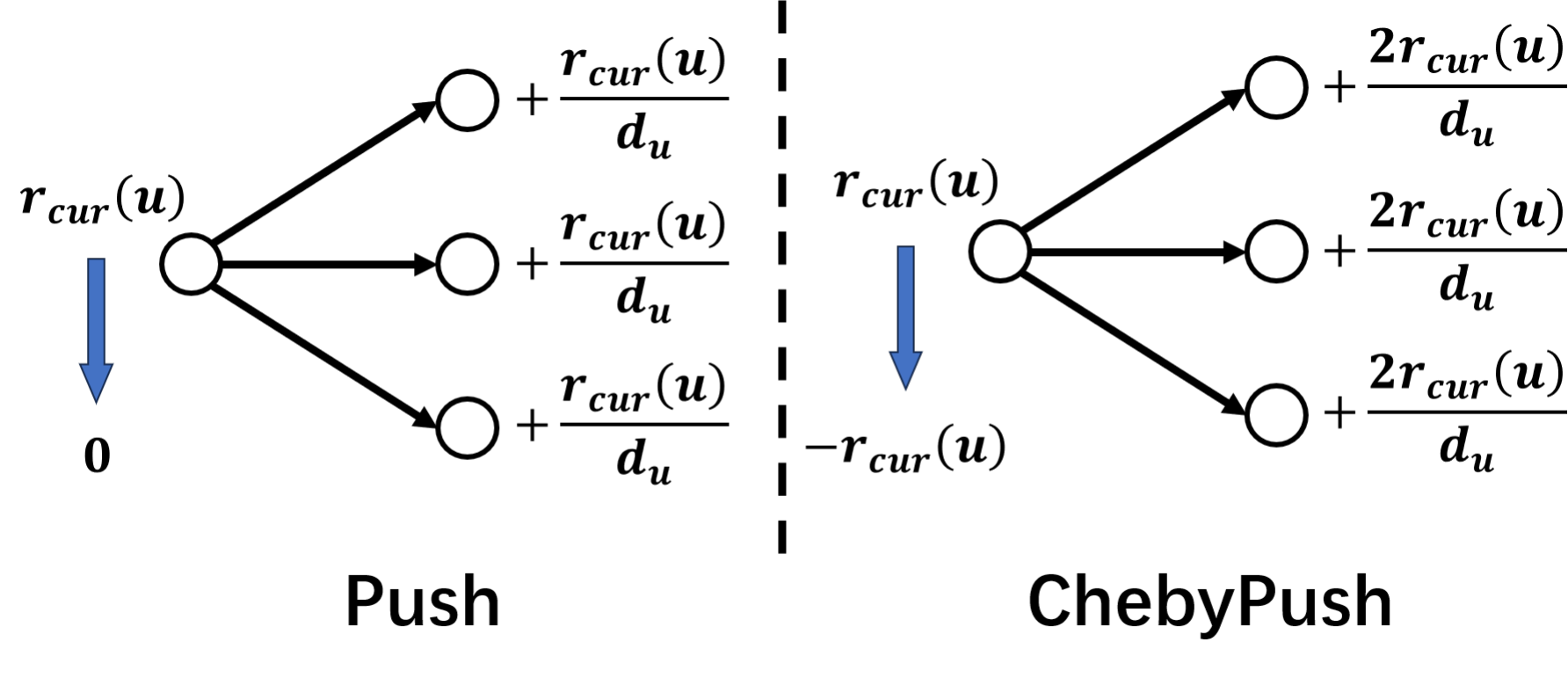} \vspace{-0.4cm}
    \caption{The difference between \truncatepush v.s. \chebpush} \vspace{-0.3cm}
    \label{fig:push-chebpush}
\end{figure}

\subsection{Theoretical Analysis of \chebpush} \label{subsec:analysis-cheby-push}
%In this section, we analyze the time complexity and error bound of Algorithm~\ref{algo:approx-chebyshev} (i.e., \chebpush). 

To analyze the time complexity of \chebpush, it is essential to determine the number of operations required to execute Lines 5-11 of Algorithm~\ref{algo:approx-chebyshev} in each iteration. Our analysis is based on the following stability assumption.

\begin{assumption}{(Stability assumption)}
\label{assump:stability_assump}
    For any $k\in \mathbb{N}^+$, $\Vert T_k(\mathbf{P}^T) \Vert_\infty = \Vert T_k(\mathbf{P}) \Vert_1 \leq C$ for some universal constant $C$.
\end{assumption}

The rationale for assumption \ref{assump:stability_assump} is as follows: recall that by the $k$ step truncation of the Chebyshev expansion, we have $\hat{\mathbf{y}}=\sum_{k=0}^{K}{c_k T_k(\mathbf{P})\mathbf{e}}_s$. So assumption ~\ref{assump:stability_assump} gives us that from $k$ step truncation to $k+1$ step truncation, the value $\hat{\mathbf{y}}$ does not produce much perturbation under $l_1$ error, which is often true in practice. Let $supp\{x\}\triangleq \{u:x(u)\neq 0\}$ be the support of the vector $x$. Denote by ${\hat{\mathbf{r}}_k}$ a vector obtained by the subset Chebyshev recurrence. We define $S_k \triangleq supp\{\hat{\mathbf{r}}_{k}|_{|\hat{\mathbf{r}}_{k}(u)|>\epsilon_kd_u}\}$ (i.e., $S_k$ denotes the node set such that each node $u\in S_k$ has $|\hat{\mathbf{r}}_{k}(u)|>\epsilon_k d_u$) and $vol(S_k)= \sum_{v\in S_k} d_v$. Then, to bound the number of operations executed in Lines 5-11, it is equivalent to bound $vol(S_k)$. Below, we derive an upper bound of $vol(S_k)$.

\begin{lemma}\label{lem:residual_sparse}
    $vol(S_k) \leq O\left(\frac{1} {\epsilon_k}\right)$.
\end{lemma}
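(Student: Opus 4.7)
The plan is to first reduce the volume bound to an $\ell_1$-norm bound on $\hat{\mathbf{r}}_k$, and then bound that norm using Lemma~\ref{lem:residual-propagation} together with Assumption~\ref{assump:stability_assump}. By the definition of $S_k$, every $u \in S_k$ satisfies $|\hat{\mathbf{r}}_k(u)| > \epsilon_k d_u$; summing this inequality over $u \in S_k$ gives $\epsilon_k \cdot vol(S_k) < \sum_{u \in S_k} |\hat{\mathbf{r}}_k(u)| \leq \|\hat{\mathbf{r}}_k\|_1$, so the claim reduces to showing that $\|\hat{\mathbf{r}}_k\|_1$ is bounded by a quantity independent of $n$ and $m$.

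Next, I would invoke the identity $\mathbf{r}_k = \hat{\mathbf{r}}_k + 2\sum_{l=1}^{k-1} T_{k-l}(\mathbf{P})\delta_l$ from Lemma~\ref{lem:residual-propagation}, rearrange it to isolate $\hat{\mathbf{r}}_k$, and apply the triangle inequality together with Assumption~\ref{assump:stability_assump} (which asserts $\|T_j(\mathbf{P})\|_1 \leq C$ for every $j$). Since $\|\mathbf{e}_s\|_1 = 1$, this chain gives $\|\hat{\mathbf{r}}_k\|_1 \leq \|T_k(\mathbf{P})\mathbf{e}_s\|_1 + 2\sum_{l=1}^{k-1}\|T_{k-l}(\mathbf{P})\|_1\,\|\delta_l\|_1 \leq C + 2C\sum_{l=1}^{k-1}\|\delta_l\|_1$.

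To control the deviations I would use that $\delta_l = \hat{\mathbf{r}}_l|_{\mathcal{V}\setminus S_l}$, whose nonzero entries all obey $|\hat{\mathbf{r}}_l(u)| \leq \epsilon_l d_u$, and therefore $\|\delta_l\|_1 \leq \|\hat{\mathbf{r}}_l\|_1$. An induction on $k$, combined with the inequality from the previous paragraph, then yields a bound on $\|\hat{\mathbf{r}}_k\|_1$ that depends only on $C$ and the truncation step $K$, and is independent of the graph size. Treating $C$ and $K$ as the implicit constants hidden inside $O(\cdot)$, consistent with how the paper carries them elsewhere (e.g.\ $C = O(N)$ for \ssppr and \hkpr), we obtain $\|\hat{\mathbf{r}}_k\|_1 = O(1)$, which combined with the first inequality gives $vol(S_k) = O(1/\epsilon_k)$.

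The main obstacle is the last step: the crude inequality $\|\delta_l\|_1 \leq \|\hat{\mathbf{r}}_l\|_1$ fed into a raw induction produces a bound that can blow up in $k$, and care is required so that no factor of $n$ or $m$ sneaks in through the support of $\hat{\mathbf{r}}_l$. A tighter alternative would be to use $\|\delta_l\|_1 \leq \epsilon_l \cdot vol(supp(\hat{\mathbf{r}}_l)\setminus S_l)$, exploiting the fact that the algorithm's choice $\epsilon_l = \epsilon_a/(4K\sum_{j=l}^K |c_j|)$ is extremely small, but this route needs a separate sparsity estimate on the residual support. Either way, arranging the bookkeeping of deviation accumulation so that the final constant collapses to something polynomial in $C$ and $K$ (and independent of graph size) is the delicate technical step.
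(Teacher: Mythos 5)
Your proposal follows essentially the same route as the paper's proof: reduce $vol(S_k)$ to a bound on the norm of $\hat{\mathbf{r}}_k$ via the threshold condition defining $S_k$, and then bound $\Vert\hat{\mathbf{r}}_k\Vert_1$ by combining Lemma~\ref{lem:residual-propagation} with Assumption~\ref{assump:stability_assump} (the paper works with the squared condition $|\hat{\mathbf{r}}_k(u)|^2>\epsilon_k d_u$ and $\Vert\hat{\mathbf{r}}_k\Vert_2^2\leq\Vert\hat{\mathbf{r}}_k\Vert_1^2$, which is only a cosmetic difference from your direct $\ell_1$ summation). The deviation-accumulation step you flag as delicate is precisely the point the paper dispatches with a one-line assertion that the thresholds $\epsilon_l$ are chosen small enough (e.g.\ $\epsilon_l<\frac{1}{K}$) so that $\Vert\hat{\mathbf{r}}_k\Vert_1$ stays $O(1)$, so your plan is faithful to, and if anything more candid than, the published argument.
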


%\comment{
\begin{proof}
  By the definition of $\mathbf{r}_k=T_k(\mathbf{P})\mathbf{e}_s$ and Assumption ~\ref{assump:stability_assump}, we have $\Vert\mathbf{r}_k\Vert_1\leq O(1)$. By Assumption ~\ref{assump:stability_assump} and Lemma \ref{lem:residual-propagation}, we have: $\Vert\hat{\mathbf{r}}_k\Vert_1=\Vert\mathbf{r}_k+2\sum_{l=1}^{k-1}{T_{k-l}(\mathbf{P})\mathbf{\delta}_l}\Vert_1\leq O(1)$ by setting the threshold $\epsilon_l$ sufficiently small (i.e., $\epsilon_l<\frac{1}{K}$). Therefore, we have:
       $$ \sum_{u\in S_k}{\epsilon_k d_u}\leq \sum_{u\in S_k}{\hat{\mathbf{r}}_k(u)^2}\leq \sum_{u\in \mathcal{V}}{\hat{\mathbf{r}}_k(u)^2} = \Vert\hat{\mathbf{r}}_k\Vert_2^2 \leq O(1).$$
    
As a result, we have $vol(S_k)\leq O\left(\frac{1}{\epsilon_k}\right)$.
\end{proof}
%}

Based on Lemma ~\ref{lem:residual-propagation} and Lemma ~\ref{lem:residual_sparse}, we can derive the following error bound and time complexity of \chebpush.

\begin{theorem}\label{thm:main-runtime-bound}
    The approximation vector $\hat{\mathbf{y}}$ obtained by \chebpush satisfies the degree-normalized error, that is $\mathop{\max}\limits_{u\in \mathcal{V}}{\frac{|\mathbf{y}(u)-\hat{\mathbf{y}}(u)|}{d_u}} < \epsilon_a $. Besides, the time complexity of \chebpush is $O\left( \min \left\{\frac{K^2}{\epsilon_a}, K m  \right\} \right)$, where $K$ is the truncation step of the Chebyshev expansion.
\end{theorem}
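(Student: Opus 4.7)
The plan is to split $\mathbf{y}-\hat{\mathbf{y}}$ into a deviation-propagation piece plus a Chebyshev truncation tail, then control each piece separately. Writing $\hat{\mathbf{y}}=\sum_{k=0}^{K} c_k \hat{\mathbf{r}}_k$ and applying Lemma~\ref{lem:residual-propagation}, I would get
$$\mathbf{y}-\hat{\mathbf{y}} \;=\; \sum_{k=1}^{K} c_k \cdot 2\sum_{l=1}^{k-1} T_{k-l}(\mathbf{P})\,\delta_l \;+\; \sum_{k=K+1}^{\infty} c_k\, T_k(\mathbf{P})\mathbf{e}_s ,$$
so the accuracy analysis reduces to bounding the degree-normalized norm of each $T_{k-l}(\mathbf{P})\delta_l$ and choosing $K$ large enough that the tail is negligible.

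For the first sum I would exploit the similarity $\mathbf{P}=\mathbf{D}^{1/2}\tilde{\mathbf{A}}\mathbf{D}^{-1/2}$ to obtain $\mathbf{D}^{-1}T_{k-l}(\mathbf{P})=T_{k-l}(\mathbf{P}^T)\mathbf{D}^{-1}$, and then for any node $u$,
$$\frac{|T_{k-l}(\mathbf{P})\delta_l(u)|}{d_u} \;=\; \bigl|\mathbf{e}_u^{T} T_{k-l}(\mathbf{P}^T)\,\mathbf{D}^{-1}\delta_l\bigr| \;\le\; \|T_{k-l}(\mathbf{P}^T)\|_{\infty}\cdot \|\mathbf{D}^{-1}\delta_l\|_{\infty} \;\le\; C\,\epsilon_l ,$$
where the last step uses Assumption~\ref{assump:stability_assump} together with the fact that $u\notin S_l$ forces $|\hat{\mathbf{r}}_l(u)|\le \epsilon_l d_u$, so $\|\mathbf{D}^{-1}\delta_l\|_\infty\le \epsilon_l$. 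Summing and swapping the order of summation gives $2C\sum_{l=1}^{K-1}\epsilon_l \sum_{k=l+1}^{K}|c_k| \le 2C\sum_{l=1}^{K-1}\epsilon_l \sum_{k=l}^{K}|c_k|$; plugging in the chosen threshold $\epsilon_l = \tfrac{\epsilon_a}{4K\sum_{k=l}^{K}|c_k|}$ collapses the inner factor and yields $O(\epsilon_a)$. The tail $\sum_{k>K} c_k T_k(\mathbf{P})\mathbf{e}_s$ contributes at most $\sum_{k>K}|c_k|\cdot \|T_k(\mathbf{P})\mathbf{e}_s\|_\infty/d_u$, which is absorbed into $\epsilon_a$ once $K$ is taken as in Lemma~\ref{lem:converge_rate_cheb}. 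Combining these pieces establishes the degree-normalized error bound.

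For the time complexity, I would observe that each iteration $k$ of Algorithm~\ref{algo:approx-chebyshev} only touches the edges incident to $S_k$, so its cost is $O(\mathrm{vol}(S_k))$. By Lemma~\ref{lem:residual_sparse}, $\mathrm{vol}(S_k)=O(1/\epsilon_k)$, and substituting the threshold definition yields $1/\epsilon_k = O(K\sum_{l=k}^{K}|c_l|/\epsilon_a)$. Summing over $k=1,\dots,K$ gives
$$\sum_{k=1}^{K}\frac{1}{\epsilon_k} \;=\; O\!\left(\frac{K}{\epsilon_a}\right)\sum_{k=1}^{K}\sum_{l=k}^{K}|c_l| \;=\; O\!\left(\frac{K}{\epsilon_a}\right)\sum_{l=1}^{K} l\,|c_l| \;=\; O\!\left(\frac{K^2}{\epsilon_a}\right),$$
using the stability assumption (which implies $\sum|c_l|=O(1)$ in the regimes of interest) to control the outer factor. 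Independently, each iteration visits at most every edge once, so its cost is also $O(m)$, giving the trivial bound $O(Km)$ and hence the stated $O(\min\{K^2/\epsilon_a,\,Km\})$.

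The main obstacle I anticipate is the bookkeeping in the double-sum swap for the accuracy argument: one must be careful that the deviations $\delta_l$ truly satisfy the element-wise bound $|\delta_l(u)|\le \epsilon_l d_u$ even though $\hat{\mathbf{r}}_l$ can be negative and is modified in place (since the subset Chebyshev recurrence flips signs rather than zeroing out residuals). A secondary subtlety is justifying $\|\mathbf{D}^{-1}T_{k-l}(\mathbf{P})\delta_l\|_\infty\le C\epsilon_l$ uniformly in $k-l$ via Assumption~\ref{assump:stability_assump}; this is the only place where the stability hypothesis is genuinely used and must be invoked carefully so the constant does not blow up with $K$.
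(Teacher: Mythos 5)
Your proposal follows essentially the same route as the paper's own proof: split the error into the $K$-step Chebyshev truncation tail plus the deviation propagation from Lemma~\ref{lem:residual-propagation}, bound each $\mathbf{D}^{-1}T_{k-l}(\mathbf{P})\delta_l$ via the similarity $\mathbf{D}^{-1}T_{k-l}(\mathbf{P})=T_{k-l}(\mathbf{P}^T)\mathbf{D}^{-1}$, the element-wise bound $|\delta_l(u)|\le\epsilon_l d_u$, and Assumption~\ref{assump:stability_assump}, then collapse the double sum with the threshold choice, and obtain the runtime from $vol(S_k)=O(1/\epsilon_k)$ (Lemma~\ref{lem:residual_sparse}) together with the trivial per-iteration $O(m)$ bound. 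The only bookkeeping slip is that the algorithm accumulates only the restricted residuals, $\hat{\mathbf{y}}=\sum_{k=0}^{K}c_k\,\hat{\mathbf{r}}_k|_{S_k}$, so your deviation sum should include an extra $\delta_k$ term (i.e., run the inner index to $l=k$, with $T_0(\mathbf{P})\delta_k=\delta_k$), which the same $\epsilon_l$ bound and the slack in $\epsilon_k=\frac{\epsilon_a}{4K\sum_{l=k}^{K}|c_l|}$ absorb exactly as in the paper.
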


%\comment{
\begin{proof}
    First, we prove $\mathop{\max}\limits_{u\in \mathcal{V}}{\frac{|\mathbf{y}(u)-\hat{\mathbf{y}}(u)|}{d_u}} < \epsilon_a $. To this end, we define another vector $\tilde{\mathbf{y}}$ as the $K$-step truncation of the Chebyshev expansion: $\tilde{\mathbf{y}}=\sum_{k=0}^{K}{c_kT_k(\mathbf{P})\mathbf{e}_s}$, such that $\Vert\tilde{\mathbf{y}}-\mathbf{y}\Vert_2< \epsilon_a/2$. By the definition of $\mathbf{r}_k$, $\tilde{\mathbf{y}}$ can also be reformulated as $\tilde{\mathbf{y}}=\sum_{k=0}^{K}{c_k\mathbf{r}_k}$. 

    Since the approximation $\hat{\mathbf{y}}$ output by \chebpush satisfy: $\hat{\mathbf{y}}=\sum_{k=0}^{K}{c_k(\hat{\mathbf{r}}_k|_{|\hat{\mathbf{r}}_k(u)|>\epsilon_kd_u})}$. Therefore, by Lemma ~\ref{lem:residual-propagation} and the definition of $\delta_k=\hat{\mathbf{r}}_k|_{|\hat{\mathbf{r}}_k(u)|\leq \epsilon_k d_u}$, we can derive the relationship between $\hat{\mathbf{y}}$ and $\tilde{\mathbf{y}}$ as follows:
    \begin{equation}\label{equ:computation}
    \begin{aligned}
        \Vert\mathbf{D}^{-1/2}(\tilde{\mathbf{y}}-\hat{\mathbf{y}})\Vert_\infty
        &=\left\Vert\mathbf{D}^{-1/2}\left(
        \sum_{k=1}^{K}{
        c_k(\mathbf{r}_k-\hat{\mathbf{r}}_k+\delta_k)
        }
        \right)\right\Vert_\infty \\
    &\leq \left\Vert\mathbf{D}^{-1/2}\left(
    \sum_{k=1}^{K}{
        2c_k\left(
        \sum_{l=1}^{k}{T_{k-l}(\mathbf{P})\delta_l}
         \right)
        }
        \right)\right\Vert_\infty \\
        &\leq \sum_{k=1}^{K}{
        2|c_k|\left(
        \sum_{l=1}^{k}{\epsilon_l}
        \right)
        } 
        =\sum_{l=1}^{K}{
        \left(
        \sum_{k=l}^{K}{2|c_k|}
        \right)\epsilon_l.
        } 
        \end{aligned}
    \end{equation}
    The reason for the second inequality is as follows. Since $\delta_l\leq \epsilon_l \mathbf{D}\mathbf{1}_{supp\{\delta_l\}}$, where $\mathbf{1}_{supp\{\delta_l\}}$ is the indicator vector takes value 1 for $u\in supp\{\delta_l\}$ and 0 otherwise, we have:
    $$\Vert \mathbf{D}^{-1}T_{k-l}(\mathbf{P})\delta_l\Vert_\infty \leq \epsilon_l \Vert  \mathbf{D}^{-1}T_{k-l}(\mathbf{P})\mathbf{D}\mathbf{1}_{supp\{\delta_l\}}\Vert_\infty. $$
 Note that $\mathbf{P}=\mathbf{AD}^{-1}=\mathbf{D}\mathbf{P}^T\mathbf{D}^{-1}$ holds for undirected graph. By the assumption that $\Vert T_k(\mathbf{P}^T)\Vert_\infty\leq 1$, we have: $\epsilon_l\Vert  \mathbf{D}^{-1}T_{k-l}(\mathbf{P})\mathbf{D}\mathbf{1}_{supp\{\delta_l\}}\Vert_\infty  \leq \epsilon_l \Vert T_{k-l}(\mathbf{P}^T)\Vert_\infty \leq \epsilon_l$. Thus, the second inequality holds. Since $\epsilon_k=\frac{1}{\sum_{l=k}^{K}{|c_{l}|}}\frac{ \epsilon_a}{4K}$ (Line~4 of Algoirthm~\ref{algo:approx-chebyshev}), the last step of Eq.~(\ref{equ:computation}) can be scaled as follows:
    $$\sum_{l=1}^{K}{
        \left(
        \sum_{k=l}^{K}{2|c_l|}
        \right)\epsilon_l
        } \leq \sum_{l=1}^{K}{\frac{\epsilon_a}{2K}}<\epsilon_a/2.$$
        
        Let $\mathbf{y}$ be the exact GP vector. Then, the relationship between $\mathbf{y}$ and $\hat{\mathbf{y}}$ is as follows:
        $$
        \begin{aligned}
            \Vert\mathbf{D}^{-1}(\mathbf{y}-\hat{\mathbf{y}})\Vert_\infty 
            &\leq 
            \Vert\mathbf{D}^{-1}(\mathbf{y}-\tilde{\mathbf{y}})\Vert_\infty + \Vert\mathbf{D}^{-1}(\hat{\mathbf{y}}-\tilde{\mathbf{y}})\Vert_\infty \\
            &\leq
            \Vert \mathbf{y}-\tilde{\mathbf{y}} \Vert_2 + \Vert\mathbf{D}^{-1}(\hat{\mathbf{y}}-\tilde{\mathbf{y}})\Vert_\infty \\
            &<\epsilon_a/2+\epsilon_a/2=\epsilon_a.
            \end{aligned}
        $$
    Therefore, $|\mathbf{y}(u)-\hat{\mathbf{y}}(u)| < \epsilon d_u $ holds for any $u\in \mathcal{V}$. Therefore $\hat{\mathbf{y}}$ achieves the $l_2$ degree-normalized error. 

        For the time complexity of \chebpush, recall that in each iteration of \chebpush, we do the following subset Chebyshev recurrence:
    $$\hat{\mathbf{r}}_{k+1}=2\mathbf{P}(\hat{\mathbf{r}}_k-\delta_k)-\hat{\mathbf{r}}_{k-1} +  2\delta_{k-1}.$$
At the $k^{th}$ iteration, the operations for the matrix-vector multiplication $\mathbf{P}(\hat{\mathbf{r}}_k|_{|\hat{\mathbf{r}}_k(u)|>\epsilon_k d_u})$ is at most $O(vol(S_k))\leq O\left(\frac{1}{\epsilon_k}\right)$. Since \chebpush executes $K$ iterations, the total operations is no more than $O\left(\sum_{k=1}^{K}{\frac{1}{\epsilon_k}}\right)\leq O\left(\frac{K}{\epsilon} \left(\sum_{k=1}^{K}{(\sum_{l=k}^{K}{|c_l|})}\right)\right)$. By $\sum_{l=k}^{K}{|c_l|}< 1$ ( $\sum_{l=0}^{+\infty}{|c_l|}=1$ for GP function $f$), the total operations is $O\left(\frac{K^2}{\epsilon_a} \right)$. On the other hand, when we set $\epsilon_a = 0$, \chebpush is equivalent to \ltwocheb. Therefore, the time complexity of \chebpush can also be bounded by the time complexity of \ltwocheb $O\left(K m \right)$. Putting it all together, the time complexity of \chebpush can be bounded by $O\left( \min \left\{\frac{K^2}{\epsilon_a}, K m \right\} \right)$. %This completes the proof.
\end{proof}
%}

Theorem~\ref{thm:main-runtime-bound} shows that our \chebpush algorithm can achieve a degree normalized error guarantee while using computation time independent of the graph size to compute general GP vectors. Note that the state-of-the-art algorithm for computing general GP vectors is the randomized push algorithm \agp \cite{wang2021approximate}, which has a time complexity of $O(\frac{N^3}{\epsilon_a})$, where $N$ denotes the truncation step of the Taylor expansion. In comparison, \chebpush has a time complexity of $O\left(\frac{K^2 }{\epsilon_a}\right)$, with $K$ denoting the truncation step of the Chebyshev expansion. Given that $K$ is roughly $O(\sqrt{N})$ (see Lemma ~\ref{lem:converge_rate_cheb}), the computational complexity of  \chebpush is lower than \agp. It is worth mentioning that for general GP vectors, such as \hkpr, $N$ often needs to be set to a large value to achieve a good accuracy, making \chebpush significantly faster than \agp, as evidenced by our experimental results.     

\comment{
\stitle{Discussions.} 
Now let's go back to \ssppr and \hkpr. By lemma ~\ref{lem:converge_rate_cheb}, the runtime bound of \chebpush for \ssppr is $\tilde{O}\left(\frac{\sqrt{|\bar{S}|}}{\alpha \epsilon}\right)$, and for \hkpr is $\tilde{O}\left(\frac{t\sqrt{|\bar{S}|}}{ \epsilon}\right)$, where $\tilde{O}(.)$ is the notation $O(.)$ omitting $\log$ terms. The runtime of \push for \ssppr is $O(\frac{1}{\alpha\epsilon})$ (by the special structure of \ssppr), so \chebpush is theoretically slightly worse than \push. For \hkpr, the runtime of the SOTA algorithm: \agp \cite{wang2021approximate} is $O(\frac{N^3}{\epsilon_a})=\tilde{O}(\frac{t^3}{\epsilon_a})$ to reach the same error bound as \chebpush, where $N$ is the truncation step of Taylor expansion. So our algorithm is faster when $|S|<t^4$. Generally speaking, as we discussed before, the main idea of \push is to locally implement \powermethod, while the main idea of \chebpush is to locally implement Chebyshev recurrence. So it is not surprising that \chebpush naturally converges faster than \push (for general GP vector computation). However, since there is an additional $\sqrt{|\bar{S}|}$ term that exists in runtime bound, this partially offsets the effect. We believe this happens mainly by the property of Chebyshev polynomials since we only have $\Vert T_k(\mathbf{P})\mathbf{e}_s \Vert_2\leq 1$ and $\sum_{u\in \mathcal{V}}{T_k(\mathbf{P}) \mathbf{e}_s(u)}=1$. But for Taylor expansion, we have $\Vert \mathbf{P}^k \mathbf{e}_s\Vert_1 =1$ and $\mathbf{P}^k \mathbf{e}_s$ is a probability distribution. This makes purning $T_k(\mathbf{P})\mathbf{e}_s$ slightly harder than $\mathbf{P}^k \mathbf{e}_s$.  In our experiments, \chebpush performs similarly to \push for larger $\epsilon$, but significantly better than \push when $\epsilon$ is small.
}

\section{Generalizations}\label{sec:problem_variants}
In this section, we first show that our \chebpush algorithm can be further extended to compute a more general graph propagation matrix. Then, similar to traditional \push, we show that \chebpush can also be combined with simple Monte Carlo sampling techniques to obtain a bidirectional algorithm. Moreover, compared to the state-of-the-art \push-based bidirectional algorithm for \ssppr vector computation \cite{wu2021unifying}, our solution achieves a complexity reduction by a factor of $O(1/\sqrt{\alpha})$. 

\subsection{Computing More General GP Matrix} \label{subsec:more-general-gp}
Instead of using the random-walk matrix $\mathbf{P}=\mathbf{AD}^{-1}$ to define the graph propagation function, in the graph machine learning community, a more general graph propagation function is frequently utilized~\cite{wang2021approximate}, defined as follows.

\begin{definition} \label{def:general-gp}
    Given a feature matrix $\mathbf{X}\in \mathbb{R}^{n\times k}$ and a graph propagation function $f$, the generalized graph propagation is defined as: $\mathbf{Z}=f(\mathbf{D}^{-a}\mathbf{A}\mathbf{D}^{-b})\mathbf{X}$ with $a+b=1$.
\end{definition}

Clearly, when $a=0$ and $b=1$, the graph propagation (GP) function $f(\mathbf{D}^{-a}\mathbf{A}\mathbf{D}^{-b})$ defined in Definition~\ref{def:general-gp} is equivalent to $f(\mathbf{P})$. Therefore, Definition~\ref{def:general-gp} represents a more generalized form compared to our previous GP vector. This generalized GP model is commonly used in graph machine learning~\cite{bojchevski2019pagerank,gasteiger2019diffusion,wu2019simplifying}.

To compute the generalized GP matrix $\mathbf{Z}$, we first focus on a simple \textit{vector-based} version: $\mathbf{z}=f(\mathbf{D}^{-a}\mathbf{A}\mathbf{D}^{-b})\mathbf{x}$ with $\mathbf{x}\in \mathbb{R}^n$. The following theorem suggests that \textit{vector-based} GP vector $\mathbf{z}$ can be computed by our \chebpush algorithm.  
\begin{theorem} \label{thm:general-gp-vector-compute}
  Given a GP function $f$, the generalized GP vector $\mathbf{z}=f(\mathbf{D}^{-a}\mathbf{A}\mathbf{D}^{-b})\mathbf{x}$ can be expressed as: $\mathbf{z}=\mathbf{D}^{-a}\mathbf{y}$ with $\mathbf{y}=f(\mathbf{P})\mathbf{D}^a\mathbf{x}$.
\end{theorem}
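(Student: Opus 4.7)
The plan is to reduce the generalized propagation operator to a similarity transform of the standard random-walk matrix $\mathbf{P}$ and then invoke the fact that matrix functions commute with similarity transforms.

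First, I would exploit the constraint $a+b=1$ to rewrite the operator inside $f$. Starting from $\mathbf{P}=\mathbf{A}\mathbf{D}^{-1}$, I would check that
\[
\mathbf{D}^{-a}\mathbf{P}\mathbf{D}^{a} \;=\; \mathbf{D}^{-a}\mathbf{A}\mathbf{D}^{-1}\mathbf{D}^{a} \;=\; \mathbf{D}^{-a}\mathbf{A}\mathbf{D}^{a-1} \;=\; \mathbf{D}^{-a}\mathbf{A}\mathbf{D}^{-b},
\]
where the last equality uses $a-1=-b$. This identifies $\mathbf{D}^{-a}\mathbf{A}\mathbf{D}^{-b}$ as the conjugate of $\mathbf{P}$ by $\mathbf{D}^{a}$ (which is invertible because $\mathbf{D}$ has strictly positive diagonal entries on graphs without isolated vertices).

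Next, I would lift this similarity from $\mathbf{P}$ to $f(\mathbf{P})$. Using the Taylor-series characterization of matrix functions from Definition~\ref{def:matrix-fun}, if $f(x)=\sum_{k}\zeta_k x^k$, then for any invertible $\mathbf{S}$ and any $k$ one has $(\mathbf{S}^{-1}\mathbf{P}\mathbf{S})^k=\mathbf{S}^{-1}\mathbf{P}^k\mathbf{S}$ by telescoping, so
\[
f(\mathbf{S}^{-1}\mathbf{P}\mathbf{S}) \;=\; \sum_{k=0}^{\infty}\zeta_k\,\mathbf{S}^{-1}\mathbf{P}^k\mathbf{S} \;=\; \mathbf{S}^{-1}\,f(\mathbf{P})\,\mathbf{S}.
\]
Applying this with $\mathbf{S}=\mathbf{D}^{a}$ (so $\mathbf{S}^{-1}=\mathbf{D}^{-a}$) and the identity from the previous paragraph yields
\[
f(\mathbf{D}^{-a}\mathbf{A}\mathbf{D}^{-b}) \;=\; \mathbf{D}^{-a}\,f(\mathbf{P})\,\mathbf{D}^{a}.
\]

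Finally, I would right-multiply both sides by $\mathbf{x}$ and group terms:
\[
\mathbf{z} \;=\; f(\mathbf{D}^{-a}\mathbf{A}\mathbf{D}^{-b})\mathbf{x} \;=\; \mathbf{D}^{-a}\,f(\mathbf{P})\bigl(\mathbf{D}^{a}\mathbf{x}\bigr) \;=\; \mathbf{D}^{-a}\mathbf{y},
\]
with $\mathbf{y}=f(\mathbf{P})\mathbf{D}^{a}\mathbf{x}$, as claimed. The main obstacle is only a bookkeeping one: ensuring the exponents line up so that $\mathbf{D}^{a-1}$ really equals $\mathbf{D}^{-b}$, and justifying the interchange of $f$ with the similarity transform, which follows from the convergence of the Taylor expansion of $f$ on the spectrum of $\mathbf{D}^{-a}\mathbf{A}\mathbf{D}^{-b}$ (this spectrum coincides with that of $\mathbf{P}$ since similar matrices share eigenvalues, so the convergence guaranteed by Definition~\ref{def:matrix-fun} carries over without extra assumptions).
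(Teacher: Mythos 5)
Your proposal is correct and follows essentially the same route as the paper: both rest on the conjugation identity $(\mathbf{D}^{-a}\mathbf{A}\mathbf{D}^{-b})^k=\mathbf{D}^{-a}\mathbf{P}^k\mathbf{D}^{a}$ (a consequence of $a+b=1$) combined with term-by-term application of the Taylor-series definition of the matrix function. The only difference is cosmetic: you state the similarity at the matrix level, $f(\mathbf{D}^{-a}\mathbf{A}\mathbf{D}^{-b})=\mathbf{D}^{-a}f(\mathbf{P})\mathbf{D}^{a}$, and multiply by $\mathbf{x}$ at the end, whereas the paper carries the vector $\mathbf{x}$ through the computation.
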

%\comment{
\begin{proof}
    By the definition of the random-walk matrix $\mathbf{P}=\mathbf{AD}^{-1}$ and $a+b=1$, we derive that
$(\mathbf{D}^{-a}\mathbf{A}\mathbf{D}^{-b})^k\mathbf{x}=\mathbf{D}^{-a}(\mathbf{AD}^{-1})^k\mathbf{D}^{1-b}\mathbf{x}=\mathbf{D}^{-a}(\mathbf{P}^k \mathbf{D}^{a}\mathbf{x})$. 
By the Taylor expansion
$$\mathbf{z}=f(\mathbf{D}^{-a}\mathbf{A}\mathbf{D}^{-b})\mathbf{x}=\sum_{k=0}^{\infty}{\zeta_k(\mathbf{D}^{-a}\mathbf{A}\mathbf{D}^{-b})^k\mathbf{x}}$$
we have:
$$\mathbf{z}=\sum_{k=0}^{\infty}{\zeta_k(\mathbf{D}^{-a}\mathbf{A}\mathbf{D}^{-b})^k\mathbf{x}}=\sum_{k=0}^{\infty}{\zeta_k \mathbf{D}^{-a}\mathbf{P}^k\mathbf{D}^{a}\mathbf{x} }=\mathbf{D}^{-a}f(\mathbf{P})\mathbf{D}^{a}\mathbf{x}$$
\end{proof}
%}

Based on Theorem~\ref{thm:general-gp-vector-compute}, we can first use \chebpush($\mathcal{G},f,\mathbf{D}^a\mathbf{x},K,\epsilon_a$) to approximately compute $\mathbf{y}=f(\mathbf{P})\mathbf{D}^a\mathbf{x}$ and then obtain the generalized GP vector by $\mathbf{z}=\mathbf{D}^{-a}\mathbf{y}$. Furthermore, to compute the generalized GP matrix $\mathbf{Z}$, we can express $\mathbf{Z}=[\mathbf{z}_1,...,\mathbf{z}_k]$ and $\mathbf{X}=[\mathbf{x}_1,...,\mathbf{x}_k]$. For each $i\in [k]$, we compute the approximation $\hat{\mathbf{z}}_k=\mathbf{D}^{-a}\hat{\mathbf{y}}_k$ with $\hat{\mathbf{y}}_k$ computed by \chebpush($\mathcal{G},f,\mathbf{D}^a\mathbf{x}_k,K,\epsilon_a$). We repeat this \textit{vector} GP vector computation procedure $O(k)$ times to obtain an approximation $\hat{\mathbf{Z}}=[\hat{\mathbf{z}}_1,...,\hat{\mathbf{z}}_k]$.

\subsection{Generalizing to a Bidirectional Method} \label{subsec:bidirect-method}

As we discussed in Section~\ref{sec:preliminaries}, \push is a very basic operator in many tasks. Especially, for \ssppr, many existing algorithms ~\cite{wang2017fora,wu2021unifying,liao2022efficient} follow a  bidirectional framework: (i) use \push to obtain a rough approximation; (ii) refine this approximation through random walks (\rw) sampling. Our proposed algorithm can also fit within this framework by simply replacing \push with \chebpush. For completeness, we will first briefly introduce random walk sampling and then present the bidirectional algorithm that combines \chebpush with \rw.

The basic idea of \rw sampling is to generate $\alpha$-random walks to estimate the \ssppr vector $\bm{\pi}_s$ \cite{wang2017fora}. Specifically, we set the estimate vector $\hat{\bm{\pi}}_s=0$, and generate $W$ $\alpha$-random walks. Here the $\alpha$-random walk performs as follows: (i) we start the random walk from node $s$; (ii) suppose that the current node of the random walk is $u$; In each step, with probability $(1-\alpha)$ we uniformly choose a neighbor of $u$, with probability $\alpha$ we stop the random walk; (iii) when the random walk stops, we output the terminate node $t$ and update the approximation $\hat{\bm{\pi}}_s(t)=\hat{\bm{\pi}}_s(t)+\frac{1}{W}$. By the Chernoff bound ~\cite{chung2006concentration}, we can obtain that by setting $W=O(\frac{\log n}{\epsilon_r^2 \delta})$, the approximate \ssppr vector satisfies an $(\epsilon_r,\delta,1/n)$-approximation. That is, for each $\bm{\pi}_s(v)>\delta$, the approximation follows $|\bm{\pi}_s(v)-\hat{\bm{\pi}}_s(v)|<\epsilon_r \bm{\pi}_s(v)$ with probability at least $1-\frac{1}{n}$. Below, we show how to combine \chebpush with \rw. Denote by $\mathbf{r}=\alpha^{-1}[\mathbf{I}-(1-\alpha)\mathbf{P}](\bm{\pi}_s-\hat{\bm{\pi}}_s)$ the residual vector. Similar to \push, we first discuss the residual bound for \chebpush. The following lemma shows that after performing \chebpush, the residual is bounded by $|\mathbf{r}(u)|\leq \epsilon_a d_u$ for all $u\in \mathcal{V}$, which is consistent with \push.

\begin{lemma}
    Let $\bm{\pi}_s$ be the accurate \ssppr vector with source $s$, and $\hat{\bm{\pi}}_s$ be the approximation vector. Then, after performing $\hat{\bm{\pi}}_s\leftarrow$ \chebpush($\mathcal{G},f,s,K,\epsilon_a^2$), the residual vector $\mathbf{r}$ satisfies $|\mathbf{r}(u)|\leq \epsilon_a d_u$ for all $u\in \mathcal{V}$.
\end{lemma}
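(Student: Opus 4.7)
The plan is to lift the degree-normalized error guarantee supplied by \chebpush (Theorem~\ref{thm:main-runtime-bound}) to a bound on the \ssppr residual $\mathbf{r}$ by exploiting the one-step relation $[\mathbf{I}-(1-\alpha)\mathbf{P}]\bm{\pi}_s=\alpha\mathbf{e}_s$. Concretely, I would first invoke Theorem~\ref{thm:main-runtime-bound} with threshold $\epsilon_a^2$ to conclude that the error vector $\mathbf{e}\triangleq \bm{\pi}_s-\hat{\bm{\pi}}_s$ satisfies the elementwise inequality $|\mathbf{e}(u)|\leq \epsilon_a^2\, d_u$ for every $u\in\mathcal{V}$.

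Next I would rewrite the residual as $\mathbf{r}=\alpha^{-1}[\mathbf{I}-(1-\alpha)\mathbf{P}]\mathbf{e}$ and evaluate it coordinate-wise:
\begin{equation*}
\mathbf{r}(u)=\alpha^{-1}\Bigl(\mathbf{e}(u)-(1-\alpha)\sum_{v\in\mathcal{N}(u)}\tfrac{\mathbf{e}(v)}{d_v}\Bigr).
\end{equation*}
For the diagonal term I simply plug in $|\mathbf{e}(u)|\leq \epsilon_a^2 d_u$. For the off-diagonal term I use the fact that every summand is at most $|\mathbf{e}(v)|/d_v\leq \epsilon_a^2$, so that $\bigl|\sum_{v\in\mathcal{N}(u)}\mathbf{e}(v)/d_v\bigr|\leq d_u\,\epsilon_a^2$. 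Combining the two contributions via the triangle inequality yields $|\mathbf{r}(u)|\leq \alpha^{-1}(2-\alpha)\,\epsilon_a^2\, d_u$.

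The main obstacle is reconciling this constant-factor bound with the cleaner target $\epsilon_a d_u$ stated in the lemma. I expect this to be handled either by (i) absorbing the $(2-\alpha)/\alpha$ factor into the choice of the input threshold (i.e.\ calling \chebpush with $\tfrac{\alpha \epsilon_a}{2}$ rather than literally $\epsilon_a^2$, so the ``$\epsilon_a^2$'' in the statement should be read as a shorthand for a scaled quadratic threshold), or (ii) restricting to the regime $\epsilon_a\leq \alpha/(2-\alpha)$, which is the interesting one for the bidirectional framework since \rw sampling already becomes the dominant cost once $\epsilon_a$ is small relative to $\alpha$. Either way, the structural content of the proof is the coordinate-wise triangle-inequality argument above; the remaining work is purely bookkeeping of the threshold constant.
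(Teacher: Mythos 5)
Your proposal follows the same skeleton as the paper's proof: apply Theorem~\ref{thm:main-runtime-bound} to get the degree-normalized bound on $\bm{\pi}_s-\hat{\bm{\pi}}_s$, then transfer it to the residual through $\mathbf{r}=\alpha^{-1}[\mathbf{I}-(1-\alpha)\mathbf{P}](\bm{\pi}_s-\hat{\bm{\pi}}_s)$. Where you differ is the transfer step, and your version is the more careful one. The paper writes $|\mathbf{r}|=\alpha^{-1}[\mathbf{I}-(1-\alpha)\mathbf{P}]\,|\bm{\pi}_s-\hat{\bm{\pi}}_s|\le\alpha^{-1}[\mathbf{I}-(1-\alpha)\mathbf{P}]\,\epsilon_a\mathbf{d}$ and then collapses the bracket using $\mathbf{P}\mathbf{d}=\mathbf{d}$ to obtain exactly $\epsilon_a\mathbf{d}$; this treats $\mathbf{I}-(1-\alpha)\mathbf{P}$ as if it were entrywise nonnegative and monotone, which it is not (its off-diagonal entries are negative), so the clean constant there comes from a non-rigorous sign manipulation. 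Your coordinate-wise triangle inequality is the legitimate bound, and the price is the factor you flag: $|\mathbf{r}(u)|\le\alpha^{-1}(2-\alpha)\,\epsilon_a^2\,d_u$ when the push threshold is $\epsilon_a^2$. Note also that the paper's proof silently invokes Theorem~\ref{thm:main-runtime-bound} with accuracy $\epsilon_a$ rather than the $\epsilon_a^2$ appearing in the lemma statement, so the squared threshold in the call appears to exist precisely to absorb such constants, which matches your readings (i)/(ii): for $\epsilon_a\le\alpha/(2-\alpha)$ your bound already yields $|\mathbf{r}(u)|\le\epsilon_a d_u$, and otherwise calling \chebpush with a threshold of order $\alpha\epsilon_a$ closes the gap. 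In short, your argument is correct, structurally identical to the paper's, and in fact repairs the step the paper glosses over; the only caveat is that, read literally with the $\epsilon_a^2$ threshold, the stated bound requires the mild regime restriction or threshold rescaling you already identified.
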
\label{lem:residual-bound}
%\comment{
\begin{proof}
    According to Theorem ~\ref{thm:main-runtime-bound}, the approximation vector $\hat{\bm{\pi}}_s$ returned by \chebpush satisfies the $l_2$ degree normalized relative error $\Vert \mathbf{D}^{-1}(\bm{\pi}_s-\hat{\bm{\pi}}_s) \Vert_\infty <\epsilon_a$. Let $\mathbf{d}$ be the degree vector, then $|\bm{\pi}_s-\hat{\bm{\pi}}_s|< \epsilon_a \mathbf{d}$. By the definition of residual, we have:
    \begin{equation*}
        \begin{aligned}
    |\mathbf{r}|&=\alpha^{-1}[\mathbf{I}-(1-\alpha)\mathbf{P}]|\bm{\pi}_s-\hat{\bm{\pi}}_s| \\
            &\leq \alpha^{-1}[\mathbf{I}-(1-\alpha)\mathbf{P}] \epsilon_a \mathbf{d}\\
            &=\alpha^{-1} [\epsilon_a \mathbf{d}-\epsilon_a(1-\alpha) \mathbf{P}\mathbf{d}]\\
            &=\alpha^{-1} [\epsilon_a \mathbf{d}-\epsilon_a(1-\alpha) \mathbf{d}]=\epsilon_a \mathbf{d}.
        \end{aligned}
    \end{equation*}
\end{proof}
%}
On top of that, the following lemma shows that the residual maintains an invariant, a crucial aspect in the design of bidirectional methods~\cite{andersen2006local,yang2024efficient}.

\begin{lemma} \label{lem:invariant}
    Let $\mathbf{r}$ be the residual vector along with the approximation $\hat{\bm{\pi}}_s$. Then, the following equation holds:
    \begin{equation}
      \bm{\pi}_s(u) = \hat{\bm{\pi}}_s(u)+\sum_{v\in \mathcal{V}}{\mathbf{r}(v)\bm{\pi}_v(u)}, \forall u\in \mathcal{V}. 
    \end{equation}
\end{lemma}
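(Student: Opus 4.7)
The plan is to invert the defining relation for the residual. By definition, $\mathbf{r} = \alpha^{-1}[\mathbf{I}-(1-\alpha)\mathbf{P}](\bm{\pi}_s - \hat{\bm{\pi}}_s)$, so multiplying both sides by $\alpha[\mathbf{I}-(1-\alpha)\mathbf{P}]^{-1}$ gives
\begin{equation*}
\bm{\pi}_s - \hat{\bm{\pi}}_s \;=\; \alpha[\mathbf{I}-(1-\alpha)\mathbf{P}]^{-1}\mathbf{r}.
\end{equation*}
The matrix $\mathbf{I}-(1-\alpha)\mathbf{P}$ is invertible because the spectral radius of $(1-\alpha)\mathbf{P}$ is at most $1-\alpha<1$, and the Neumann series $\sum_{k\ge 0}(1-\alpha)^k\mathbf{P}^k$ supplies the inverse explicitly.

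Next, I would recognize that the operator $\alpha[\mathbf{I}-(1-\alpha)\mathbf{P}]^{-1}$ is exactly the \ssppr map: from the definition of the \ssppr vector, $\alpha[\mathbf{I}-(1-\alpha)\mathbf{P}]^{-1}\mathbf{e}_v = \bm{\pi}_v$ for every node $v\in\mathcal{V}$. Decomposing $\mathbf{r}=\sum_{v\in\mathcal{V}}\mathbf{r}(v)\mathbf{e}_v$ and invoking linearity of the matrix-vector product then yields
\begin{equation*}
\bm{\pi}_s - \hat{\bm{\pi}}_s \;=\; \sum_{v\in\mathcal{V}} \mathbf{r}(v)\,\alpha[\mathbf{I}-(1-\alpha)\mathbf{P}]^{-1}\mathbf{e}_v \;=\; \sum_{v\in\mathcal{V}} \mathbf{r}(v)\,\bm{\pi}_v.
\end{equation*}
Reading this equation coordinate-wise at any node $u$ produces exactly the invariant asserted in the lemma.

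I do not expect a genuine obstacle here: the statement is essentially a rewriting of the definition of $\mathbf{r}$ combined with the identity $[\mathbf{I}-(1-\alpha)\mathbf{P}]^{-1}\mathbf{e}_v = \alpha^{-1}\bm{\pi}_v$. The only subtlety worth mentioning explicitly in the write-up is that this invariant holds irrespective of how $\hat{\bm{\pi}}_s$ was produced (whether by \push, \chebpush, or any other procedure), since the argument uses only the algebraic definition of $\mathbf{r}$ in terms of $\bm{\pi}_s$ and $\hat{\bm{\pi}}_s$. This makes the lemma a drop-in replacement for the classical push invariant used in the bidirectional PPR literature, which is what the subsequent Monte Carlo refinement step will rely upon.
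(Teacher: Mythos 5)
Your proof is correct: inverting the defining relation $\mathbf{r}=\alpha^{-1}[\mathbf{I}-(1-\alpha)\mathbf{P}](\bm{\pi}_s-\hat{\bm{\pi}}_s)$ and using $\alpha[\mathbf{I}-(1-\alpha)\mathbf{P}]^{-1}\mathbf{e}_v=\bm{\pi}_v$ with linearity is exactly the standard invariant argument that the paper relies on (its own proof is deferred to the full version). Your remark that the identity is purely algebraic in $\mathbf{r}$ and thus independent of how $\hat{\bm{\pi}}_s$ was produced is also correct and is precisely what lets the lemma serve the ChebyPush-based bidirectional method.
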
\label{lem:invarient}

Based on Lemma~\ref{lem:invariant}, we present our bidirectional method for computing \ssppr in Algorithm~\ref{algo:chebpush+rw}. The detailed procedure is as follows: (i) in the first phase, performing \chebpush with threshold $r_{max}$ to get a rough approximation $\hat{\bm{\pi}}_s$ along with its residual $\mathbf{r}$; (ii) in the second phase, for each $v\in \mathcal{V}$, we perform $W_v=\lceil |\mathbf{r}(v)| W\rceil$ $\alpha$-random walks, and updating $\hat{\bm{\pi}}_s(u) \leftarrow \hat{\bm{\pi}}_s(u)+\frac{\mathbf{r}(v)}{W_v}$ if the random walk stops at $u$. 

\begin{algorithm}[t!]
%\caption{70\% of the column width!}
\small
	\SetAlgoLined
	\KwIn{$\mathcal{G}$, $\alpha$, $s$, $K$, $\epsilon_r$, $\delta$}
        $W\leftarrow \frac{2(2\epsilon_r/3+2)\log n}{\epsilon_r^2 \delta}$\;
        $r_{max}\leftarrow  \frac{\sqrt{\alpha}}{W}$\;
        $\hat{\bm{\pi}}_s\leftarrow $ \chebpush($\mathcal{G},f=\alpha(1-(1-\alpha)x)^{-1},s,K,r_{max}$) \;
        $\mathbf{r}\leftarrow \mathbf{e}_s-\alpha^{-1}[\mathbf{I}-(1-\alpha)\mathbf{P}]\hat{\bm{\pi}}_s$ \;
        \For{$v\in \mathcal{V}$ with $|\mathbf{r}(v)|>0$}{
        $W_v\leftarrow \lceil |\mathbf{r}(v)| W\rceil$ \;
        Perform $W_v$ $\alpha$-random walks from the node $v$ \;
        \For{each random walk terminates at node $u$}{
            $\hat{\bm{\pi}}_s(u) \leftarrow \hat{\bm{\pi}}_s(u)+\frac{\mathbf{r}(v)}{W_v}$ \;
        }
        }
	\KwOut{$\hat{\bm{\pi}}_s$ as the approximation of $\bm{\pi}_s$}
	\caption{\chebpush+\rw}\label{algo:chebpush+rw}
\end{algorithm}

\stitle{Analysis of Algoirthm~\ref{algo:chebpush+rw}.} Since the runtime and error analysis of Algorithm ~\ref{algo:chebpush+rw} is similar to those of the traditional bidirectional methods~\cite{wang2017fora,wu2021unifying}, we simplify the discussions here. For the runtime bound, we consider the time complexity for the \chebpush phase and the \rw phase respectively. For \chebpush, since the threshold $r_{max}$ is often set to a very small number for bidirectional methods (Line~2 of Algorithm~\ref{algo:chebpush+rw}), and the truncation step for Chebyshev expansion is $K=\frac{1}{\sqrt{\alpha}}\log{(\frac{1}{r_{max}})}$, we use Theorem ~\ref{thm:main-runtime-bound} with upper bound $O(Km)=O\left(\frac{m}{\sqrt{\alpha}}\log{(\frac{1}{r_{max}})}\right)$ to bound the time cost for \chebpush. The time cost for \rw is $O(\frac{1}{\alpha}\sum_{v\in \mathcal{V}}{W_v})$, because the expected time cost for each $\alpha$-random walk is $\frac{1}{\alpha}$ and we perform $W_v$ times for each $v\in \mathcal{V}$. Additionally, since $W_v=\lceil |\mathbf{r}(v)| W\rceil$ and $|\mathbf{r}(u)|\leq r_{max} d_u$ by Lemma ~\ref{lem:residual-bound}, we can obtain $O(\frac{1}{\alpha}\sum_{v\in \mathcal{V}}{W_v})\leq O(\frac{1}{\alpha}mr_{max}W)$. Putting it together, the total runtime is $O\left(\frac{1}{\sqrt{\alpha}}\left(m\log{(\frac{1}{r_{max}})}+\frac{1}{\sqrt{\alpha}}mr_{max}W\right)\right)$. By setting $r_{max}=  \frac{\sqrt{\alpha}}{W}$, $W= \frac{2(2\epsilon_r/3+2)\log n}{\epsilon_r^2 \delta}$, this runtime bound is simplified by $\tilde{O}\left(\frac{1}{\sqrt{\alpha}}m\right)$, which improves \speedppr ~\cite{wu2021unifying} by a factor $\frac{1}{\sqrt{\alpha}}$. Since $W= \frac{2(2\epsilon_r/3+2)\log n}{\epsilon_r^2 \delta}$, the error bound of $\hat{\bm{\pi}}_s$ output by algorithm ~\ref{algo:chebpush+rw} is the same as those of traditional bidirectional algorithms ~\cite{wang2017fora,wu2021unifying}: $|\bm{\pi}_s(u)-\hat{\bm{\pi}}_s(u) |<\epsilon_r \bm{\pi}_s(u)$ for any $\bm{\pi}_s(u)>\delta$ with high probability. The case is similar when using loop-erased random walk (\lv) to substitute \rw ~\cite{liao2022efficient}, thereby we omit the details for brevity. 

\comment{
\subsection{Single Node Queries}

Computing single-node graph centrality is a basic task for modern network analysis. Shortly speaking, the single-node graph centrality is defined as follows ~\cite{bressan2018sublinear}:

\begin{definition}
    Given a node $u\in \mathcal{V}$ and the graph propagation function $f$, the single-node graph centrality is defined as $P(u)=\frac{1}{n}f(\mathbf{P})\vec{\mathbf{1}}(u)=\frac{1}{n}\sum_{k=0}^{\infty}{\zeta_k\mathbf{P}^k\vec{\mathbf{1}}}(u)$, where $\vec{\mathbf{1}}$ is the all-one vector.
\end{definition}

The main observation is that for undirected graphs, we can express the single-node graph centrality in a single-source version.

\begin{theorem}
    The single-node graph centrality can be expressed as: $P(u)=\frac{d_u}{n}\sum_{v\in \mathcal{V}}{\frac{1}{d_v}\mathbf{y}(v)}$, where $\mathbf{y}=f(\mathbf{P})\mathbf{e}_u$ is the (single-source) graph propagation vector as defined by Equation \ref{equ:graph diffusion}.
\end{theorem}

\begin{proof}
    Observe that the above definition can also be expressed as follows: $P(u)=\frac{1}{n}f(\mathbf{P})\vec{\mathbf{1}}(u)=\frac{1}{n}\sum_{v\in \mathcal{V}}{f(\mathbf{P})\mathbf{e}_v(u)}$. Now we denote $f(\mathbf{P})\mathbf{e}_v(u)$ by $(f(\mathbf{P}))_{u,v}$, i.e. the $(u,v)$-th entry of matrix $f(\mathbf{P})$. The key observation here is that $(f(\mathbf{P}))_{u,v}=\frac{d_u}{d_v}(f(\mathbf{P}))_{v,u}$ holds for undirected graphs, which is proved as follows:

    According to the Taylor expansion of the function $f$, we have $(f(\mathbf{P}))_{u,v}=\sum_{k=0}^{\infty}{\zeta_k(\mathbf{P}^k)_{u,v}}$. By the definition of walk matrix $\mathbf{P}=\mathbf{AD}^{-1}$ and $\mathcal{G}$ is an undirected graph, we further have
$(\mathbf{P}^k)_{u,v}=(\mathbf{AD}^{-1})^k_{u,v}=((\mathbf{AD}^{-1})^T)^k_{v,u}=(\mathbf{D}^{-1}\mathbf{P}^k\mathbf{D})_{v,u}=\frac{d_u}{d_v}(\mathbf{P}^k)_{v,u}$.
So we have
$(f(\mathbf{P}))_{u,v}=\sum_{k=0}^{\infty}{\zeta_k(\mathbf{P}^k)_{u,v}}=\sum_{k=0}^{\infty}{\zeta_k\frac{d_u}{d_v}(\mathbf{P}^k)_{v,u}}=\frac{d_u}{d_v}f(\mathbf{P})_{v,u}$. As a result, the following equation holds 
$P(u)=\frac{1}{n}\sum_{v\in \mathcal{V}}{(f(\mathbf{P}))_{u,v}}=\frac{1}{n}\sum_{v\in \mathcal{V}}{\frac{d_u}{d_v}(f(\mathbf{P}))_{v,u}}=\frac{d_u}{n}\sum_{v\in \mathcal{V}}{\frac{1}{d_v}\mathbf{y}(v)}$
with $\mathbf{y}=f(\mathbf{P})\mathbf{e}_u$. 
\end{proof}

Based on the above theoretical analysis, we design Algorithm ~\ref{algo:chebpush_single_node} to compute the single-node graph centrality. Specifically, it first inputs a parameter $\epsilon$, a node $u\in \mathcal{V}$, and a propagation function $f$. Then, it computes the approximate graph propagation $\hat{\mathbf{y}}\leftarrow$ \chebpush($\mathcal{G},f,u,K,\epsilon/d_u$). Finally, it outputs $\hat{P}(u)=\frac{d_u}{n}\sum_{v\in \mathcal{V}}{\frac{1}{d_v}\hat{\mathbf{y}}(v)}$ as the approximate single-node value. Time complexity and error bound of Algorithm ~\ref{algo:chebpush_single_node} are stated as follows. 

\begin{algorithm}
%\caption{70\% of the column width!}
\small
	\SetAlgoLined
	\KwIn{$\mathcal{G}$, $f$, $u$, $K$, $\epsilon$}
        $\hat{\mathbf{y}}\leftarrow $ \chebpush($\mathcal{G},f,u,K,\epsilon/d_u$) \;
       $\hat{P}(u)\leftarrow \frac{d_u}{n}\sum_{v\in \mathcal{V}}{\frac{1}{d_v}\hat{\mathbf{y}}(v)}$ \;
	\KwOut{$\hat{P}(u)$ as the approximation of $P(u)$}
	\caption{\chebpush for single-node graph centrality}\label{algo:chebpush_single_node}
\end{algorithm}

\begin{theorem}
   Algorithm ~\ref{algo:chebpush_single_node} outputs $\hat{P}(u)$ such that $|P(u)-\hat{P}(u)|<\epsilon$ with runtime bound $O\left(\frac{Cd_u\sqrt{|\bar{S}|}}{\epsilon}\right)$, in which $C$ and $\bar{S}$ are the same as Theorem ~\ref{thm:main-runtime-bound}.
\end{theorem}

\begin{proof}
    By Theorem ~\ref{thm:main-runtime-bound}, $\hat{\mathbf{y}}$ returned by \chebpush satisfies the degree-normalized relative error: $\Vert \mathbf{D}^{-1}(\mathbf{y}-\hat{\mathbf{y}})\Vert_\infty<\epsilon/d_u$. Therefore, 
    $|P(u)-\hat{P}(u)|\leq\frac{d_u}{n}\sum_{v\in \mathcal{V}}{\frac{1}{d_v}|\hat{\mathbf{y}}(v)-\mathbf{y}(v)|}< \frac{d_u}{n}\sum_{v\in \mathcal{V}}{\frac{1}{d_v}\left(\epsilon \frac{d_v}{d_u}\right)}=\epsilon$. As a result,
   the time complexity is $O\left(\frac{Cd_u\sqrt{|\bar{S}|}}{\epsilon}\right)$. 
\end{proof}
%\subsection{Other Generalizations}
%Due to space limits, we refer readers to Appendix ~\ref{sec:generalization}.

}

\begin{table}[t!]\vspace{-0.3cm}
\small
	\centering
	\caption{Statistics of datasets} \vspace{-0.2cm}
        \label{tab:dataset}
	%\resizebox{\textwidth}{!}
	{
		\label{Datasets for vertex classification}
		\centering
		
		\begin{tabular}{cccccc}
			%\hline
			\toprule
			%\multicolumn{2}{c}{Part}                   \\
			%\cmidrule(r){1-2}
			Datasets&$n$&$m$& $m/n$  & type \\
			%\hline
			\midrule
			%\multirow{3}{*}{graph diffusion model} &
			\dblp & 317,080 & 1,049,866  & 3.31 &Undirected\\
			\youtube&1,134,890&2,987,624 &2.63 &Undirected \\
			\livejournal&4,846,609&42,851,237&8.84&Undirected\\
			\orkut&3,072,626&11,718,5083&38.13&Undirected\\
            \friendster&65,608,366&1,806,067,135&27.53&Undirected\\
			\bottomrule
		\end{tabular}
	}\vspace{-0.4cm}
\end{table}

\begin{figure*}[!t]
\vspace{-0.3cm}
%\begin{minipage}[t]
	\centering
 \subfigure[\dblp]{
            \centering
		\includegraphics[scale=0.14]{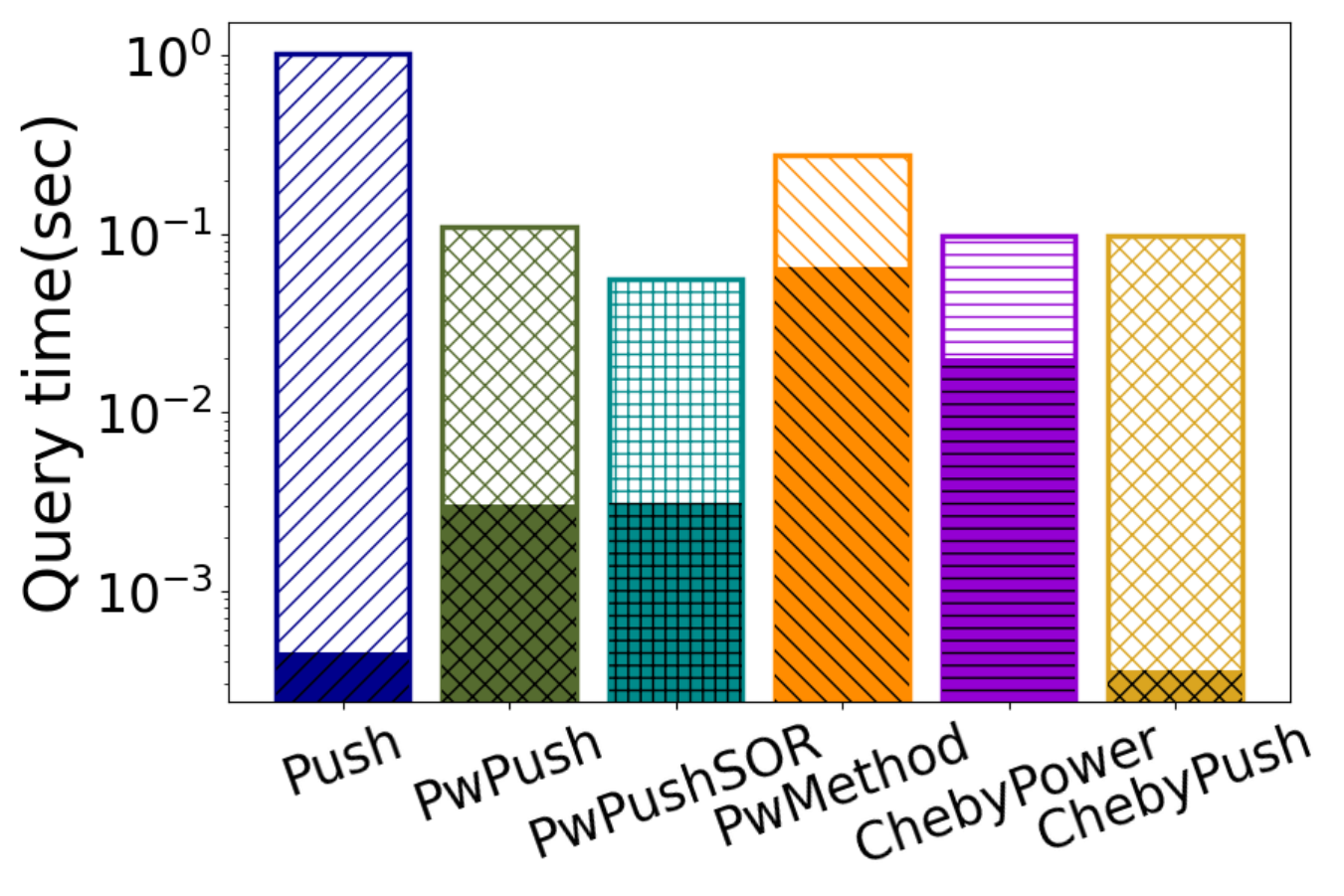}\hspace{-5mm} \label{1}
	}
	\quad
        \subfigure[\youtube]{
        \centering
		\includegraphics[scale=0.14]{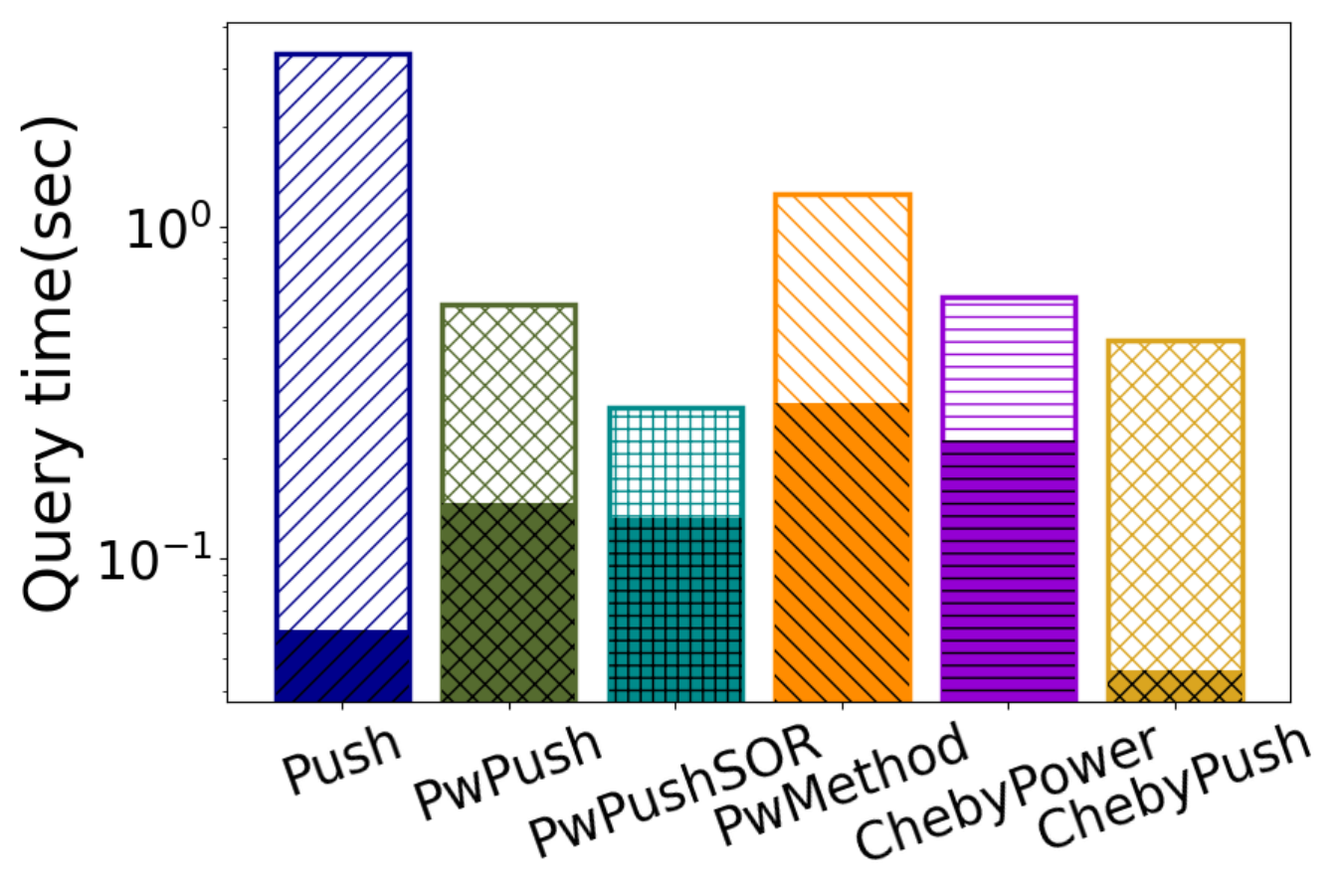}\hspace{-5mm}  \label{2}
	}
	\quad
	\subfigure[\livejournal]{
        \centering
		\includegraphics[scale=0.14]{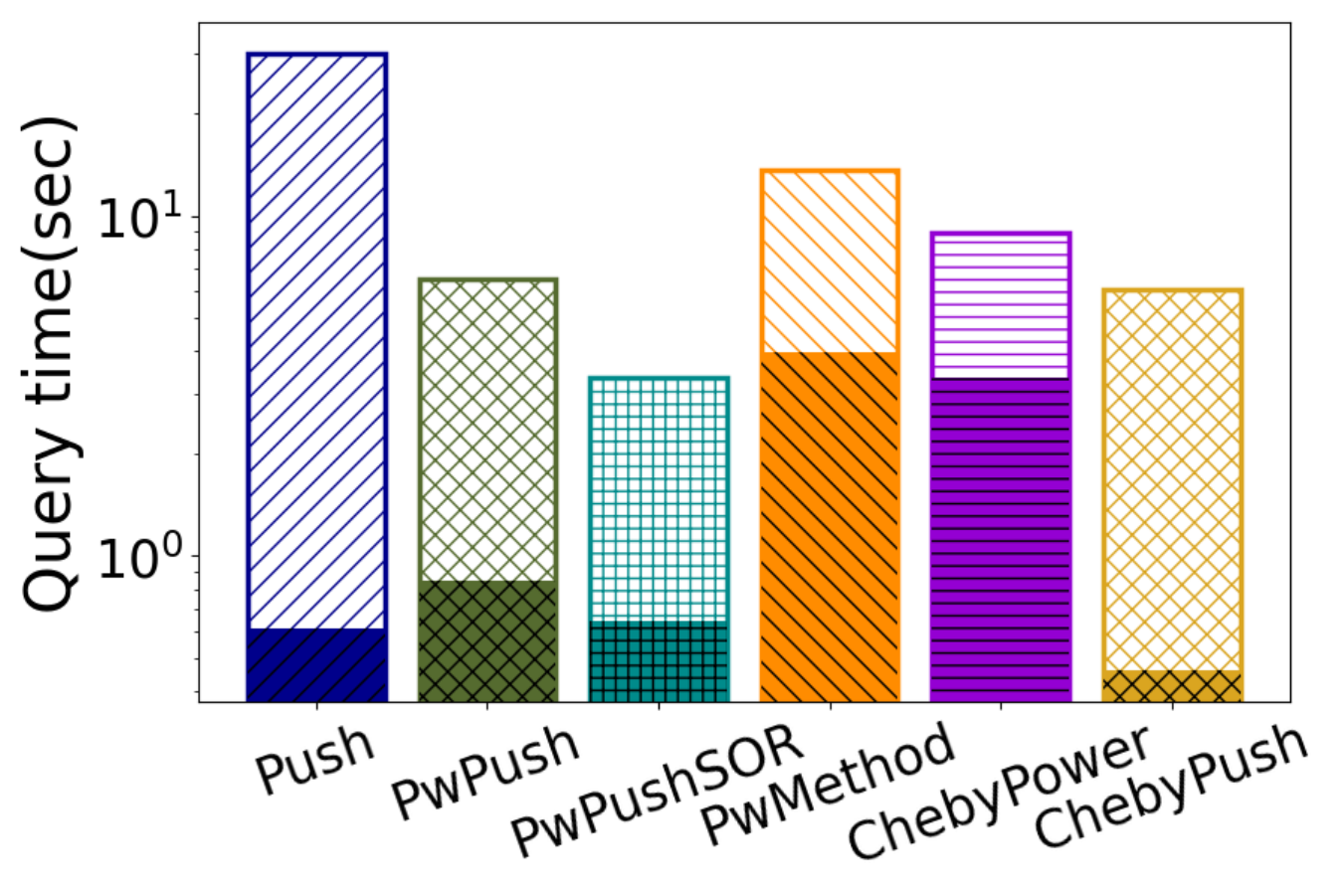}\hspace{-5mm}  \label{3}
	}
	\quad
	\subfigure[\orkut]{
        \centering
		\includegraphics[scale=0.14]{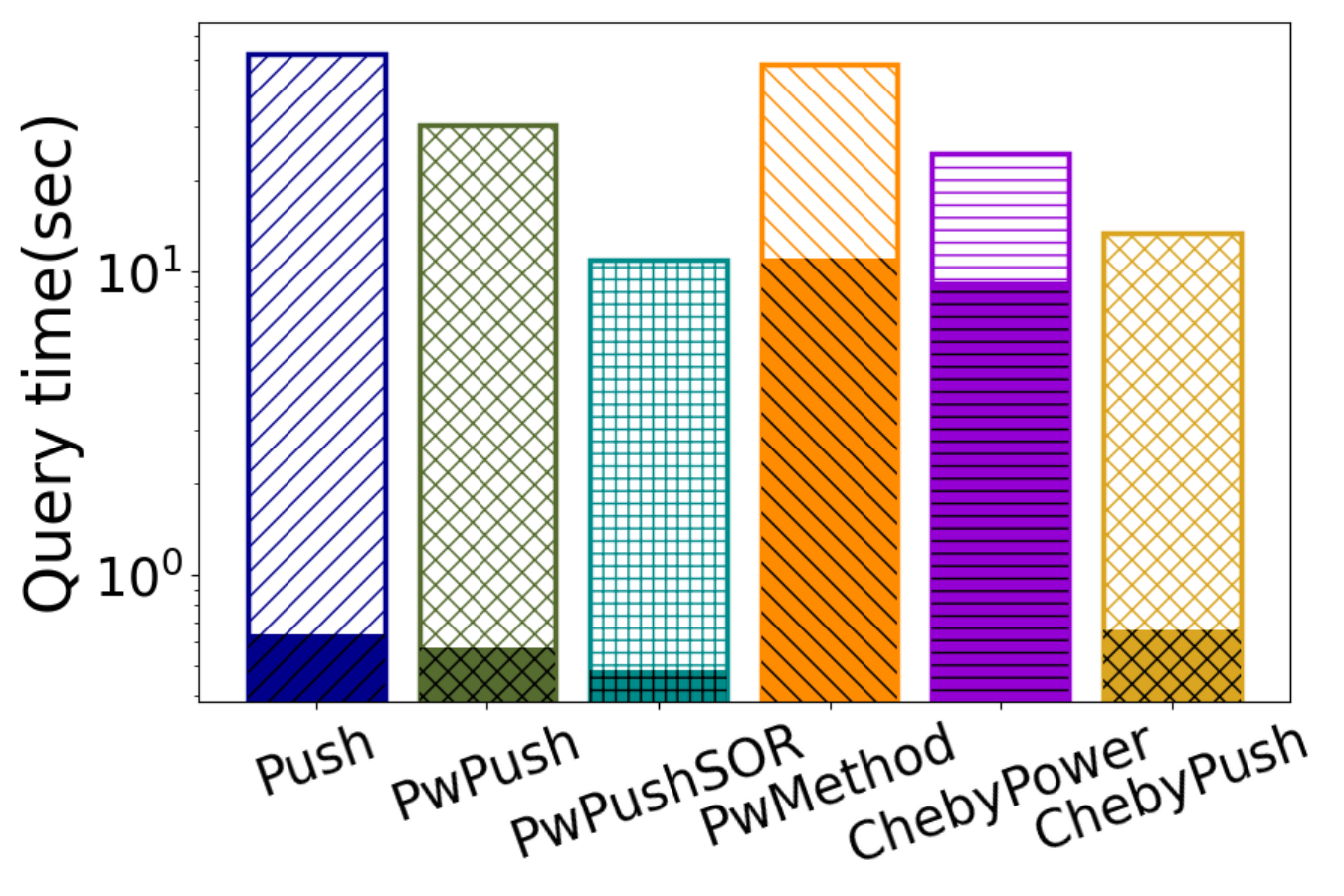}\label{4}
	}
 \subfigure[\friendster]{
        \centering
		\includegraphics[scale=0.14]{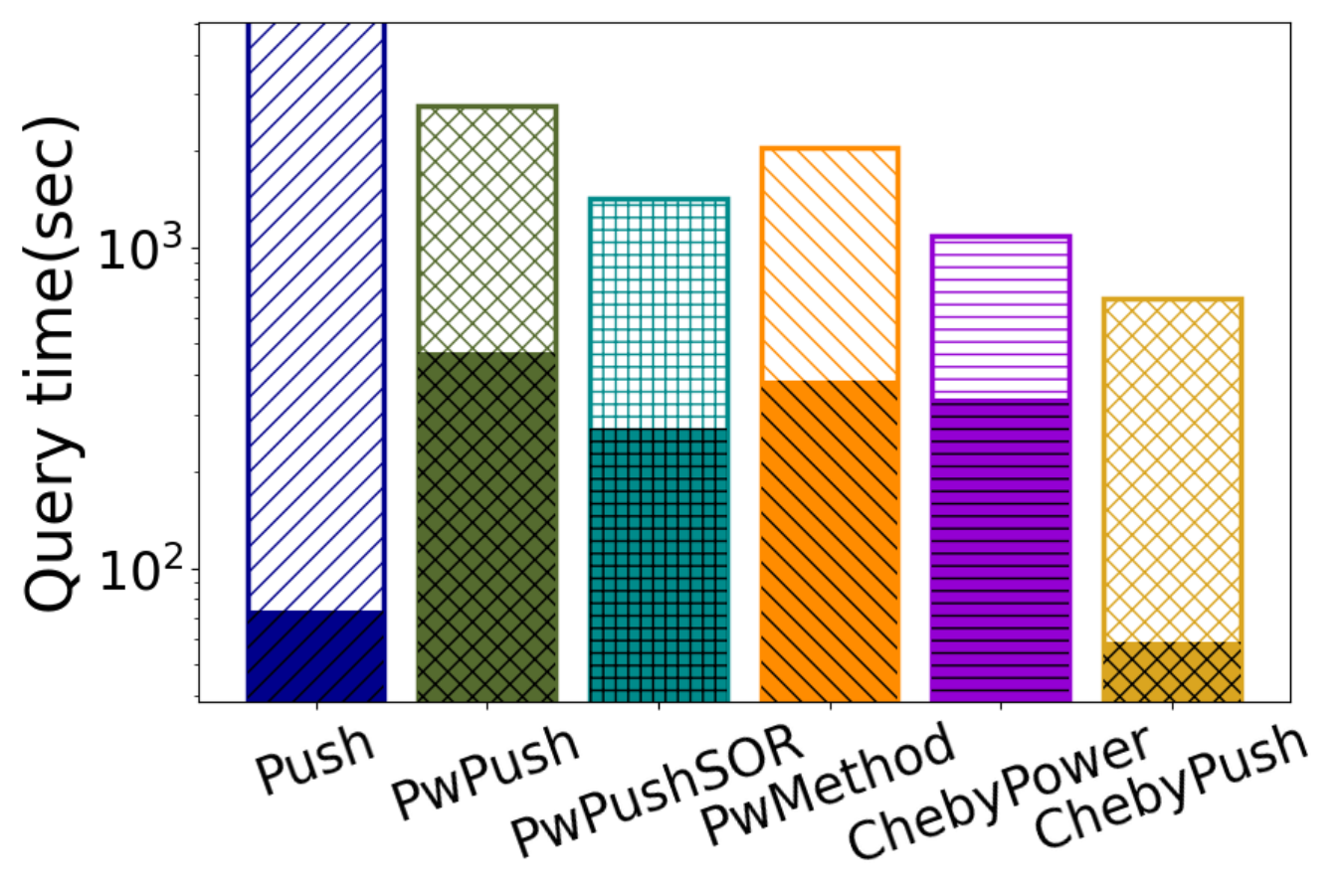}\label{4}
	}
 \vspace{-0.3cm}
	\caption{Query time of different \ssppr algorithms under $l_1$-error. The lower (resp., upper) bar of each figure represents the query time of each algorithm to reach low-precision (resp., high-precision) $l_1$-error.}\label{fig:ssppr-l1err}
\vspace{-0.2cm}
%\end{minipage}
\end{figure*}

\begin{figure*}[!t]
\vspace{-0.3cm}
%\begin{minipage}[t]
	\centering
	\subfigure[\dblp]{
            \centering
		\includegraphics[scale=0.14]{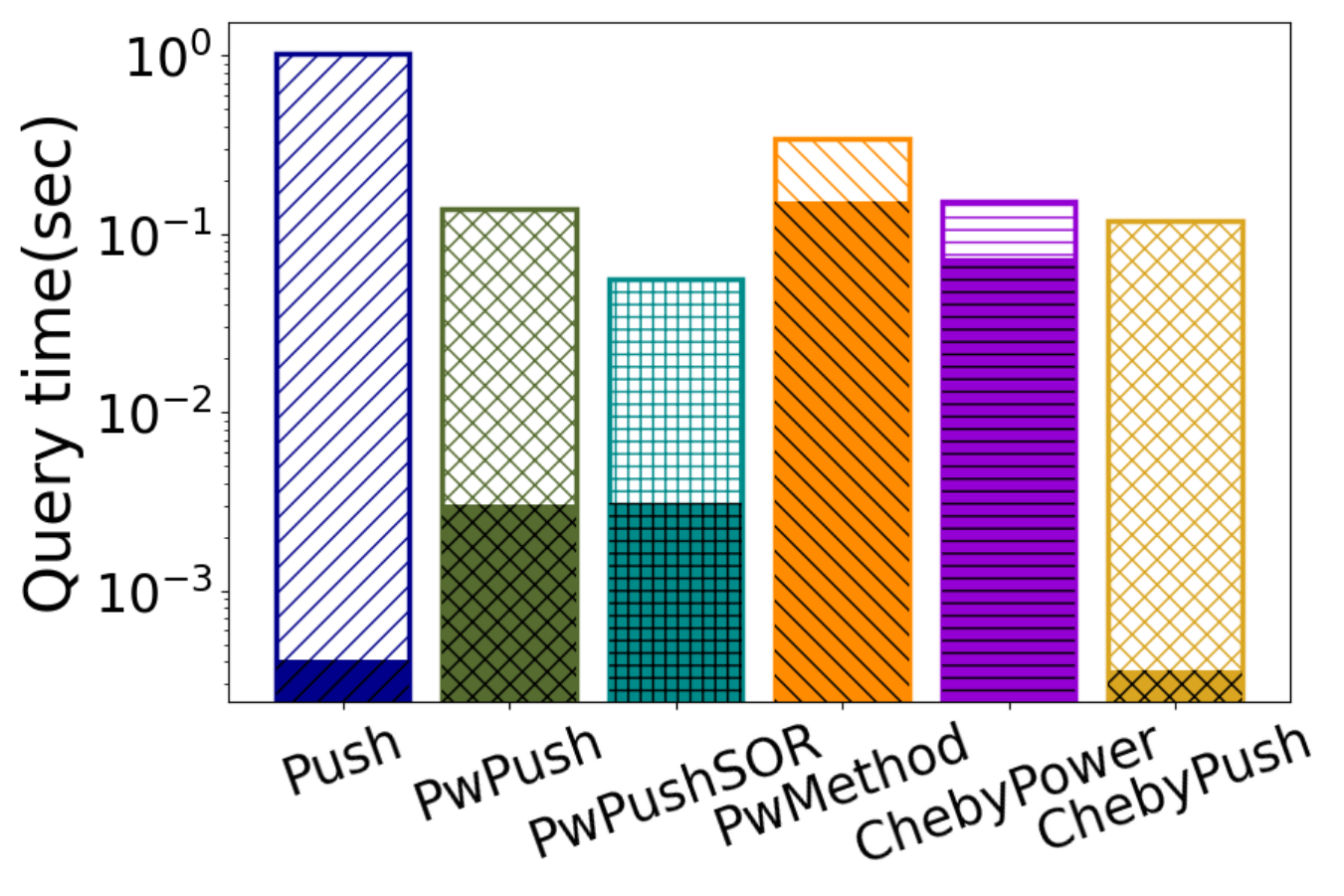}\hspace{-5mm} \label{1}
	}
	\quad
        \subfigure[\youtube]{
        \centering
		\includegraphics[scale=0.14]{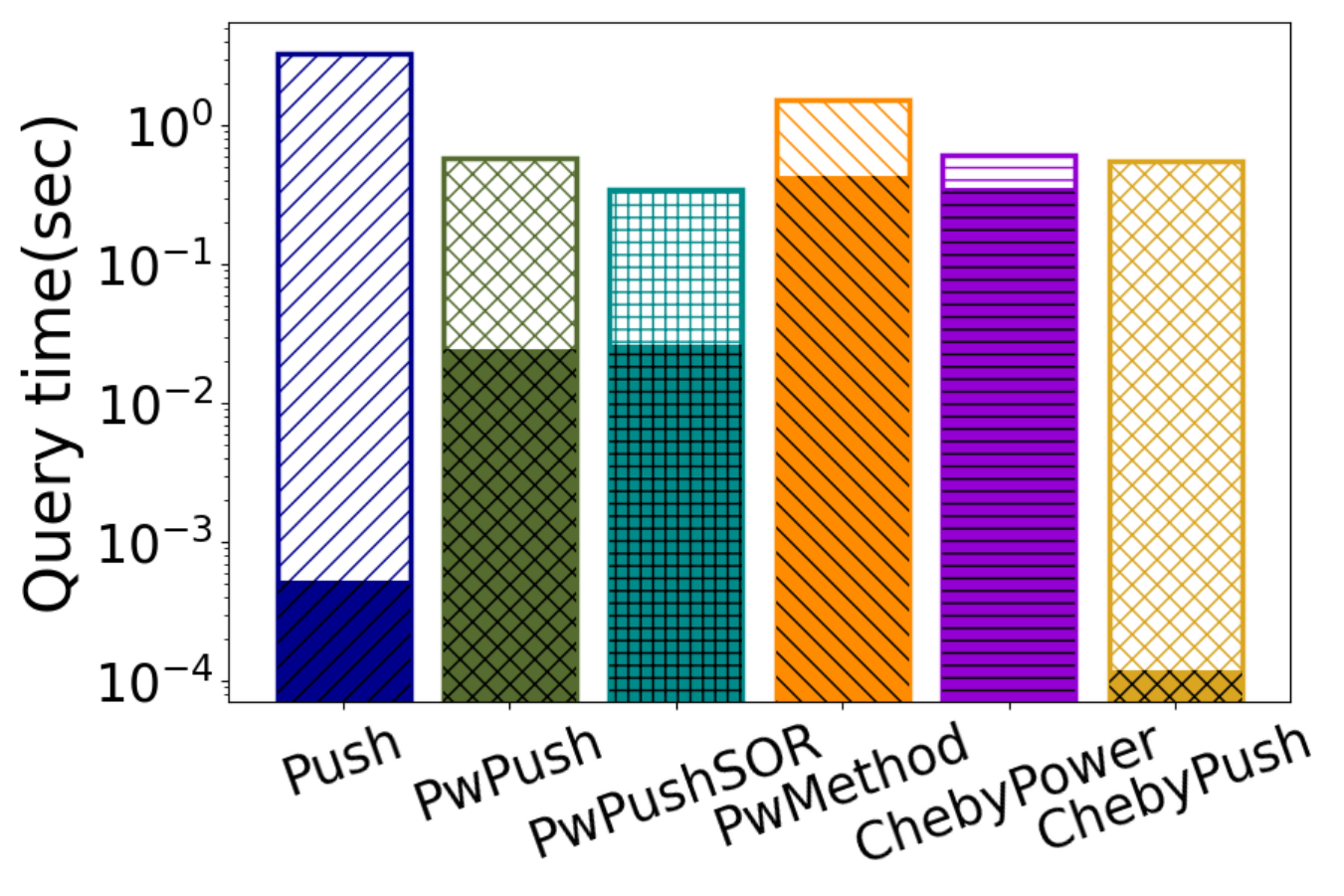}\hspace{-5mm}  \label{2}
	}
	\quad
	\subfigure[\livejournal]{
        \centering
		\includegraphics[scale=0.14]{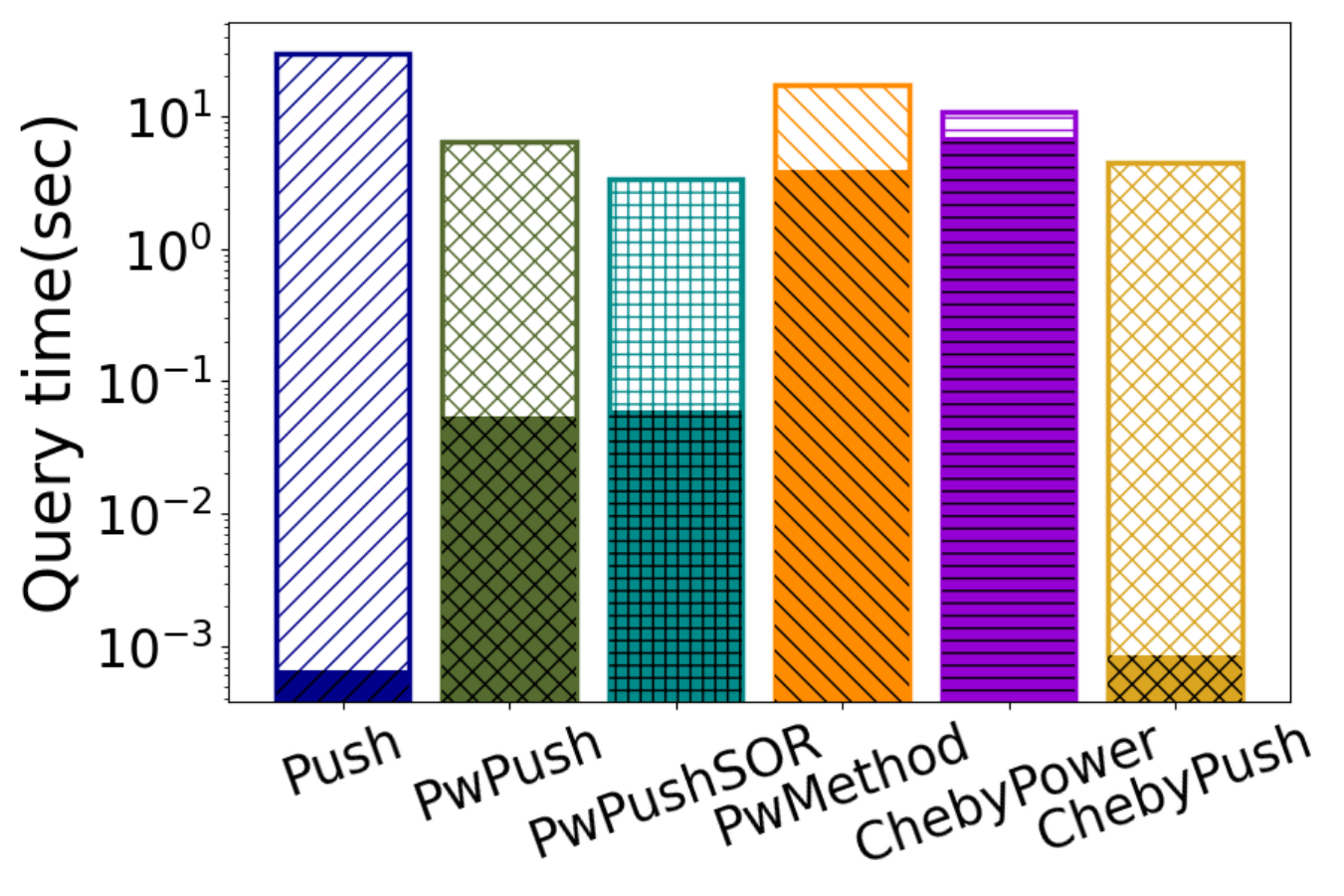}\hspace{-5mm}  \label{3}
	}
	\quad
	\subfigure[\orkut]{
        \centering
		\includegraphics[scale=0.14]{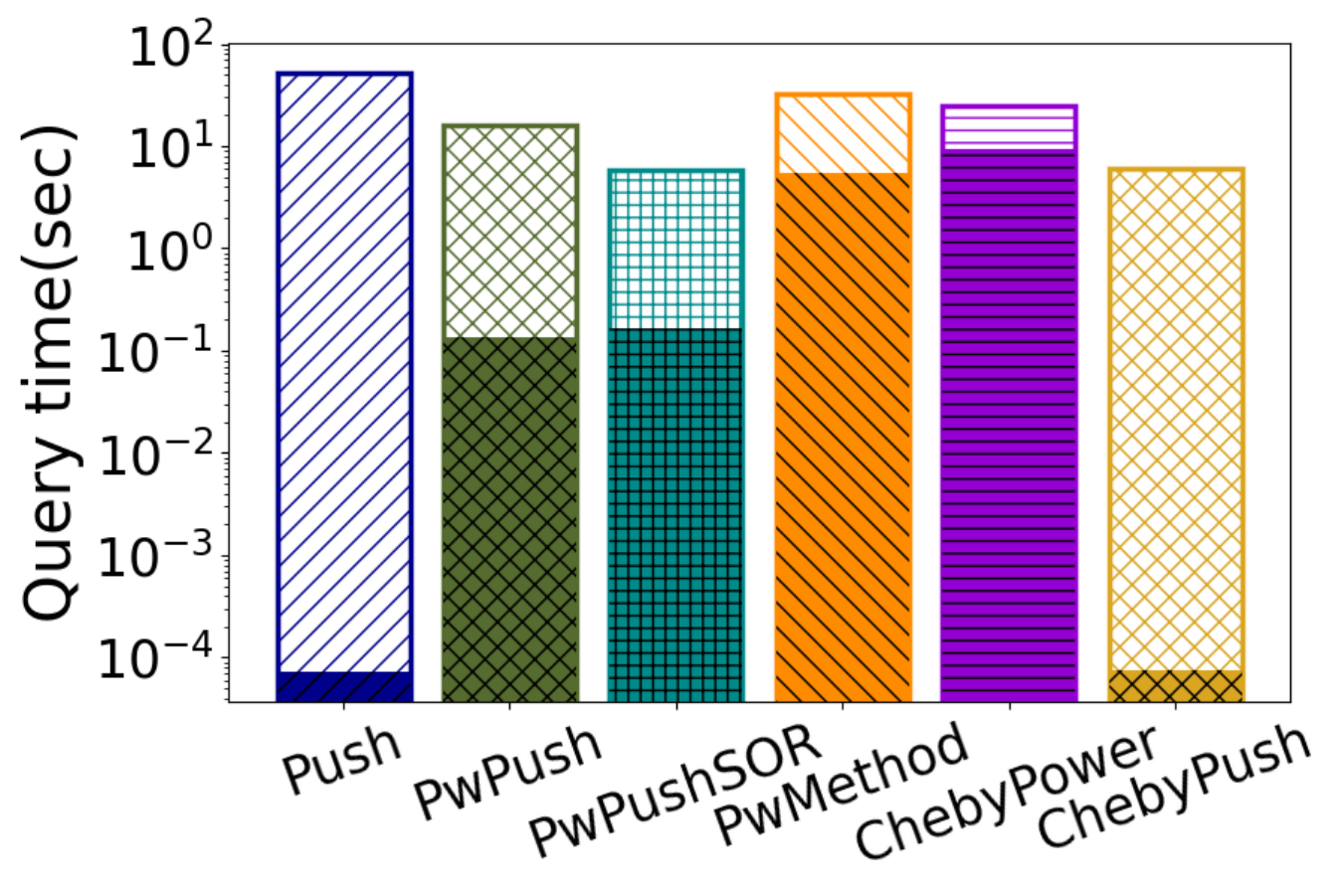}\label{4}
	}
    \subfigure[\friendster]{
        \centering
		\includegraphics[scale=0.14]{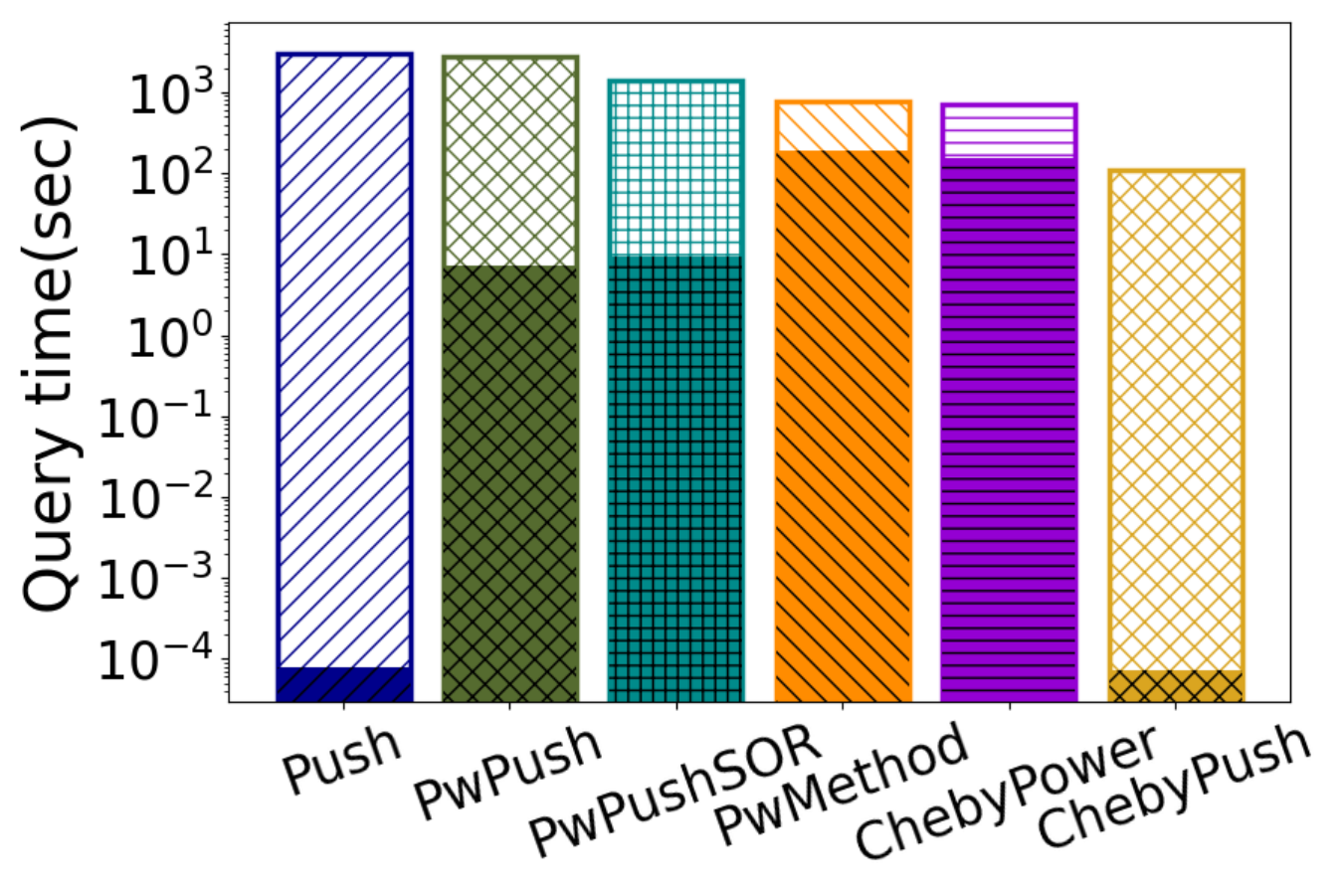}\label{4}
	}
  \vspace{-0.3cm}
	\caption{Query time of different \ssppr algorithms under Normalized RelErr. The lower (resp., upper) bar of each figure represents the query time of each algorithm to reach low-precision (resp., high-precision) Normalized RelErr.}\label{fig:ssppr-relerr}
\vspace{-0.2cm}
%\end{minipage}
\end{figure*}

\begin{figure}[t!]
    \centering
    \subfigure[\dblp]{
		\includegraphics[scale=0.24]{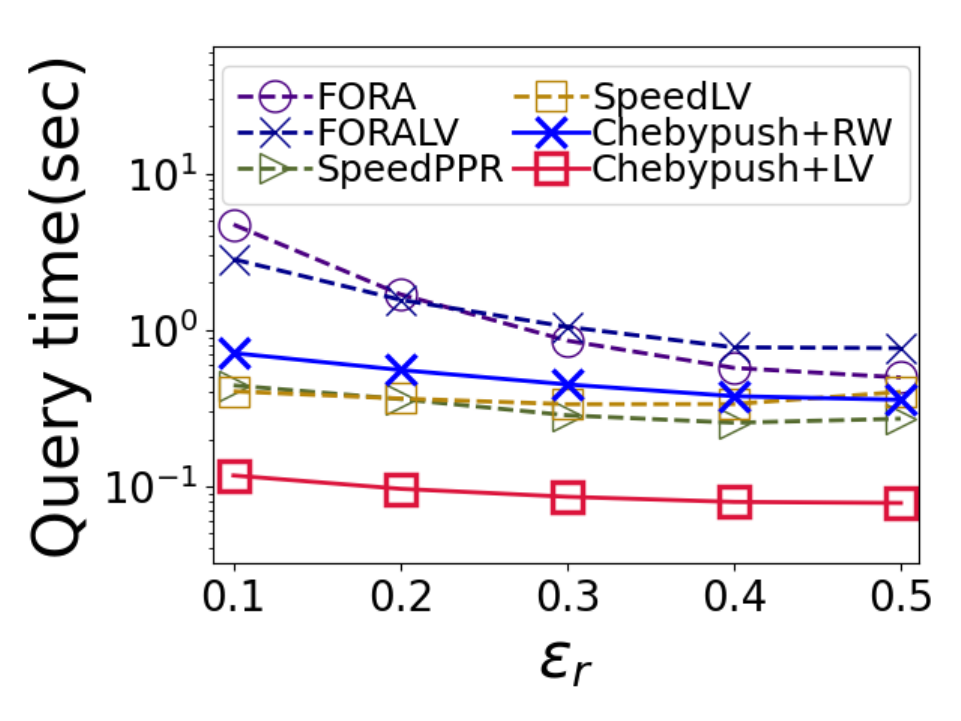}}
     \subfigure[\livejournal]{
		\includegraphics[scale=0.24]{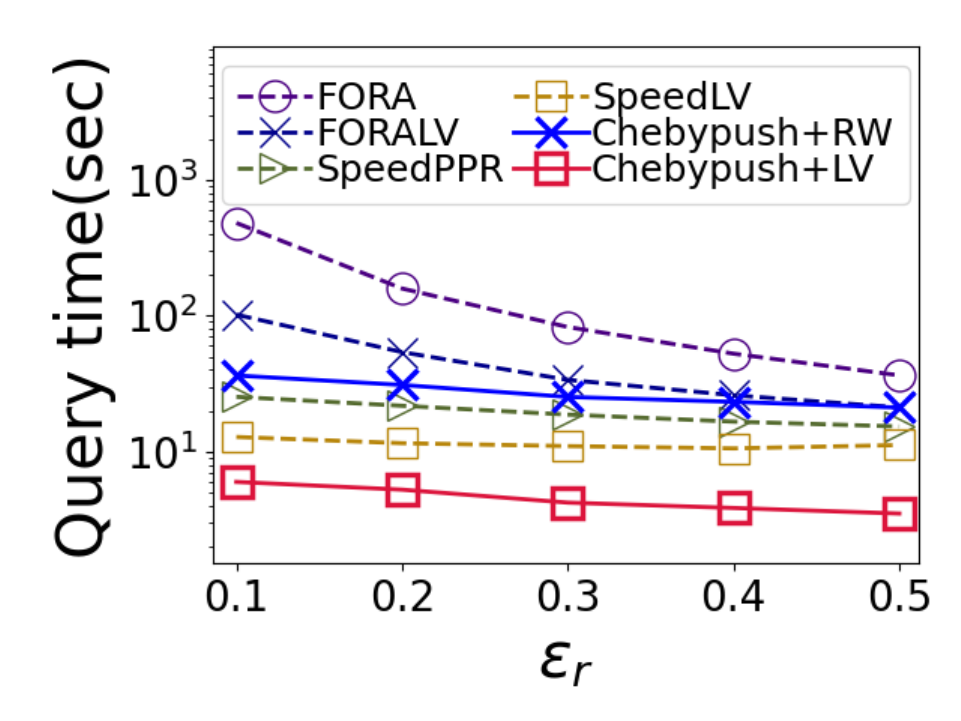}}
%\vspace{-0.3cm}
%      \subfigure[\dblp]{
%		\includegraphics[scale=0.24]{experiment figure/bidirectional/0.2error.png}}
%     \subfigure[\livejournal]{
%		\includegraphics[scale=0.24]{experiment figure/bidirectional/livejournal0.2error.png}}
\vspace{-0.3cm}
    \caption{Comparison of various bidirectional methods}%\vspace{-0.3cm}
    \label{fig:bidirectional-runtime}
\end{figure}
\section{Experiments}\label{sec:experiment}
We use 5 publicly-available datasets\footnote{All datasets can be downloaded from http://snap.stanford.edu/} with varying sizes (Table~\ref{tab:dataset}), including a graph (i.e., Friendster) with billions of edges, which are popular benchmarks for graph propagation (GP) computation \cite{wang2021approximate,wang2023singlenode}. To evaluate the approximation errors of different algorithms, we compute \textit{ground-truth}  GP vectors using \powermethod with a sufficiently large truncation step $N$. Specifically, to compute the \textit{ground-truth} GP vector, following previous studies \cite{andersen2006local, wang2017fora, kloster2014heat, wang2021approximate}, we set the truncation step $N=\frac{1}{\alpha}\log{(10^{20})}$ for \ssppr and $N=2t\log{(10^{20})}$ for \hkpr. Following \cite{wang2021approximate,wang2023singlenode}, we randomly generate 10 source nodes as query sets and report the average performance over them for different algorithms. We  will study the effect of different source node selection strategies in Section~\ref{subsec:source-node-distri}.  By Definition \ref{def:matrics}, 
we use the $l_1$-error and degree-normalized $l_\infty$-error (denoted as Normalized RelErr) to evaluate the estimation error of different algorithms. Following~\cite{wang2022edge}, we evaluate all algorithms under two different cases: low-precision and high-precision cases. For the low-precision (resp., high-precision) case, we set the error threshold as Normalized RelErr=$10^{-5}$ or  $l_1$-error$=10^{-1}$  (resp., Normalized RelErr=$10^{-10}$  or $l_1$-error$=10^{-5}$). We conduct all experiments on a Linux 20.04 server with an Intel 2.0 GHz CPU and 128GB memory. All algorithms are implemented in C++ and compiled using GCC 9.3.0 with -O3 optimization. the source code of this paper is available at \url{https://anonymous.4open.science/r/ChebyPush-2E94}.

\subsection{Results of \ssppr Vector Computation} \label{exp:ssppr}
\stitle{Comparison of various power-iteration and push algorithms.} In this experiment, we compare the proposed  \ltwocheb and \chebpush algorithms with four SOTA baselines for \ssppr vector computation, including \powermethod~\cite{page1999pagerank}, \push~\cite{andersen2006local}, \powerpush~\cite{wu2021unifying}, and \powerpushsor ~\cite{chen2023accelerating}. Note that both \powerpush and \powerpushsor are two highly-optimized \textit{push-style} algorithms designed for \ssppr computation. For \powermethod and \ltwocheb, by Lemma~\ref{lem:converge_rate_cheb}, we set their truncation steps as $N=\frac{1}{\alpha}\log{(\frac{1}{\epsilon})}$ and $K=\frac{1}{\sqrt{\alpha}}\log{(\frac{1}{\epsilon})}$, respectively, with varying $\epsilon\in [10^{-1},10^{-9}]$ to achieve different error value. For \chebpush, we set the truncation step $K=\frac{1}{\sqrt{\alpha}}\log{(10^5)}$ and vary the threshold $\epsilon_a\in [10^{-3},10^{-11}]$ to achieve different error values. For \push, \powerpush, and \powerpushsor, we adopt the same parameter settings in their original studies \cite{wu2021unifying,chen2023accelerating}. Unless specified otherwise, the damping factor $\alpha$ in \ssppr is set to 0.2, a value widely employed in previous studies \cite{page1999pagerank,wu2021unifying,chen2023accelerating}.

Figure~\ref{fig:ssppr-l1err} and Figure~\ref{fig:ssppr-relerr} report the query time of various algorithms for \ssppr vector computation, under the $l_1$-error and Normalized RelErr metrics respectively. In Figure~\ref{fig:ssppr-l1err} and Figure~\ref{fig:ssppr-relerr}, the lower bar of each figure represents the query time of each algorithm to reach the low-precision approximation, while the upper bar represents the query time to reach the high-precision approximation. From these figures, we have the following observations: (i) 
For the $l_1$-error, \chebpush performs much better than all competitors on most datasets under the low-precision case. On the other hand,  under the high precision case, \chebpush consistently outperforms all competitors except  \powerpushsor. Note that \chebpush is slightly worse than \powerpushsor at high precision on small-sized graphs, but it significantly outperforms \powerpushsor on the largest dataset \friendster. (ii) For the Normalized RelErr, \chebpush is significantly faster than all baseline methods on large graphs under both high precision and low precision scenarios. On relatively-small graphs, it still demonstrates comparable performance to the baseline methods. (iii) For both $l_1$-error and Normalized RelErr, \ltwocheb  is consistently faster than its counterpart \powermethod under both high and low precision cases. These results demonstrate the high efficiency of the proposed algorithms, and also confirm our theoretical analysis in Sections~\ref{sec:interpretation} and ~\ref{sec:local-algo}. 

Additionally, the results of Figure ~\ref{fig:ssppr-l1err} 
 and Figure  ~\ref{fig:ssppr-relerr} also suggest that \chebpush is very suitable for applications where the required estimation error is not excessively high, as \chebpush is extremely fast under the low-precision case. Therefore, \chebpush can be highly effective and efficient for designing bidirectional algorithms, because bidirectional algorithms often do not necessitate achieving high accuracy in \ssppr estimation during the push phase (Section \ref{subsec:bidirect-method}). Besides, we also observe that \powerpushsor performs well for high-precision \ssppr vector computation. However, unlike our \ltwocheb and \chebpush algorithm, \powerpushsor is limited to computing \ssppr vector and cannot be used for general GP vector computations (e.g., \hkpr), resulting in limited applications.

\stitle{Comparison of various bidirectional algorithms.} As discussed in Section \ref{subsec:bidirect-method}, our \chebpush algorithm can also be combined with Monte-Carlo methods to generalize into bidirectional algorithms. We apply the following widely used methods for the Monte-Carlo phase of bidirectional algorithms: random walk (\rw) based methods \cite{wang2017fora,wu2021unifying} and loop-erased random walk (\lv) based methods \cite{liao2022efficient}. We use the SOTA variance reduction version for \lv-based algorithms \cite{liao2022efficient}. We compare our bidirectional methods \chebpush+\rw and \chebpush+\lv (\chebpush for the first phase, \rw or \lv for the second phase) with four SOTA competitors \fora (\push+\rw) ~\cite{wang2017fora}, \speedppr (\pwpush+\rw) ~\cite{wu2021unifying},  \speedlv (\pwpush + \lv) ~\cite{liao2022efficient}, \foralv (\push+\lv). For \chebpush based bidirectional algorithms (i.e., \chebpush+\rw and \chebpush+\lv), we set $K=\frac{1}{\sqrt{\alpha}}\log{(10^5)}$. The parameter settings of the other bidirectional algorithms follow their original studies \cite{wang2017fora, wu2021unifying, liao2022efficient}. Note that all bidirectional algorithms are designed to achieve an $\epsilon_r$-relative error guarantee. Thus, we evaluate the query time of different bidirectional algorithms to achieve the given $\epsilon_r$-relative errors with varying $\epsilon_r$ from $0.1$ to $0.5$. 

The results on \dblp and \livejournal are shown in Figure \ref{fig:bidirectional-runtime}. Similar results can also be observed on the other datasets. As can be seen, \chebpush+\rw performs better than \fora and similarly to \speedppr, while \chebpush+\lv significantly outperforms all bidirectional algorithms. For instance, \chebpush+\lv achieves query times $2\times$ to $6\times$ better than the best baseline algorithm (\speedlv) to reach the same $\epsilon_r$-relative error. This is because \lv is designed to accelerate \rw computation, whereas our \chebpush accelerates \push and \powermethod computations. Thus, it is unsurprising that \chebpush+\lv achieves the best results. We also note that the query time of each algorithm decreases with an increasing $\epsilon_r$. The reason is as follows. The results obtained by bidirectional algorithms are more accurate as $\epsilon_r$  decreases. This increased accuracy comes at the cost of longer query costs for various bidirectional algorithms. These results further demonstrate the high efficiency of the proposed algorithms. 

%When $\epsilon_r$ becomes smaller, the results obtained by bidirectional algorithms are more accurate, resulting in a smaller $l_1-$error. However, this increased accuracy comes at the cost of longer query times for the various bidirectional algorithms. %The results are similar for \dblp and \livejournal.% Most notably, \chebpush+\lv is not sensitive with parameter $\alpha$.

  \begin{figure*}[t!]
    \vspace{-0.3cm}
    %\begin{minipage}[t]
    \centering
 	\subfigure[\dblp]{
            \centering
		\includegraphics[scale=0.14]{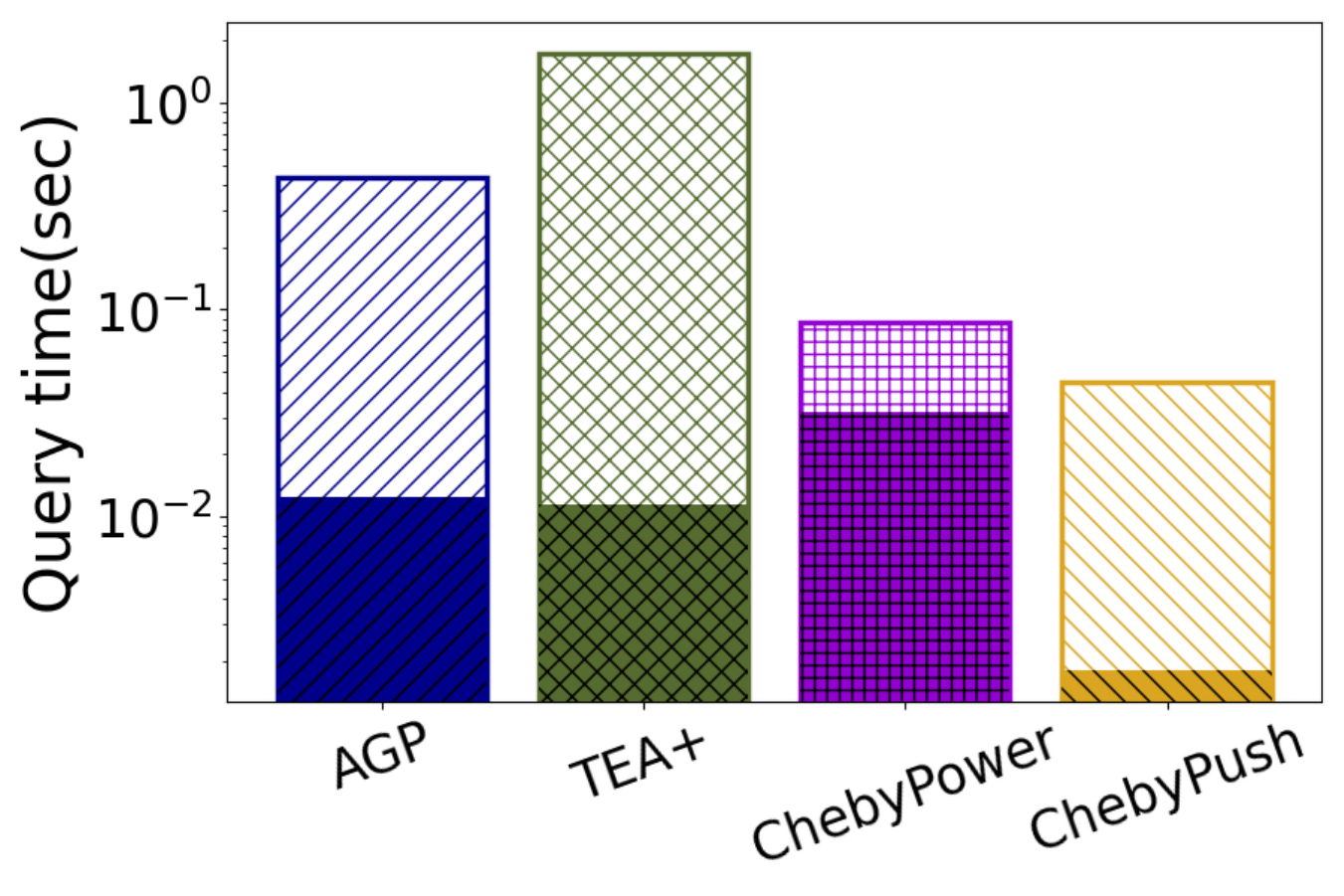}\hspace{-5mm} \label{1}
	}
	\quad
        \subfigure[\youtube]{
        \centering
		\includegraphics[scale=0.14]{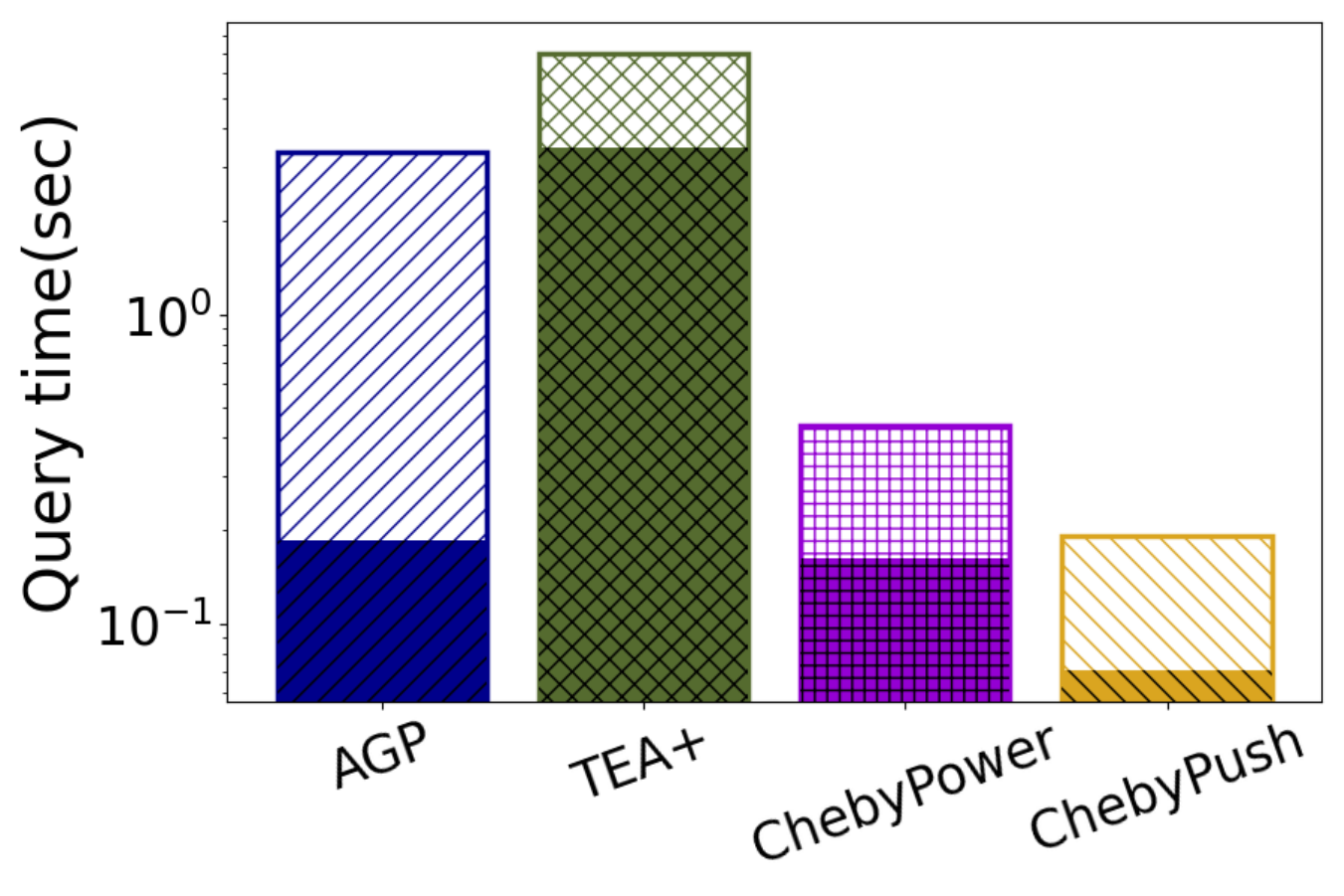}\hspace{-5mm}  \label{2}
	}
	\quad
	\subfigure[\livejournal]{
        \centering
		\includegraphics[scale=0.14]{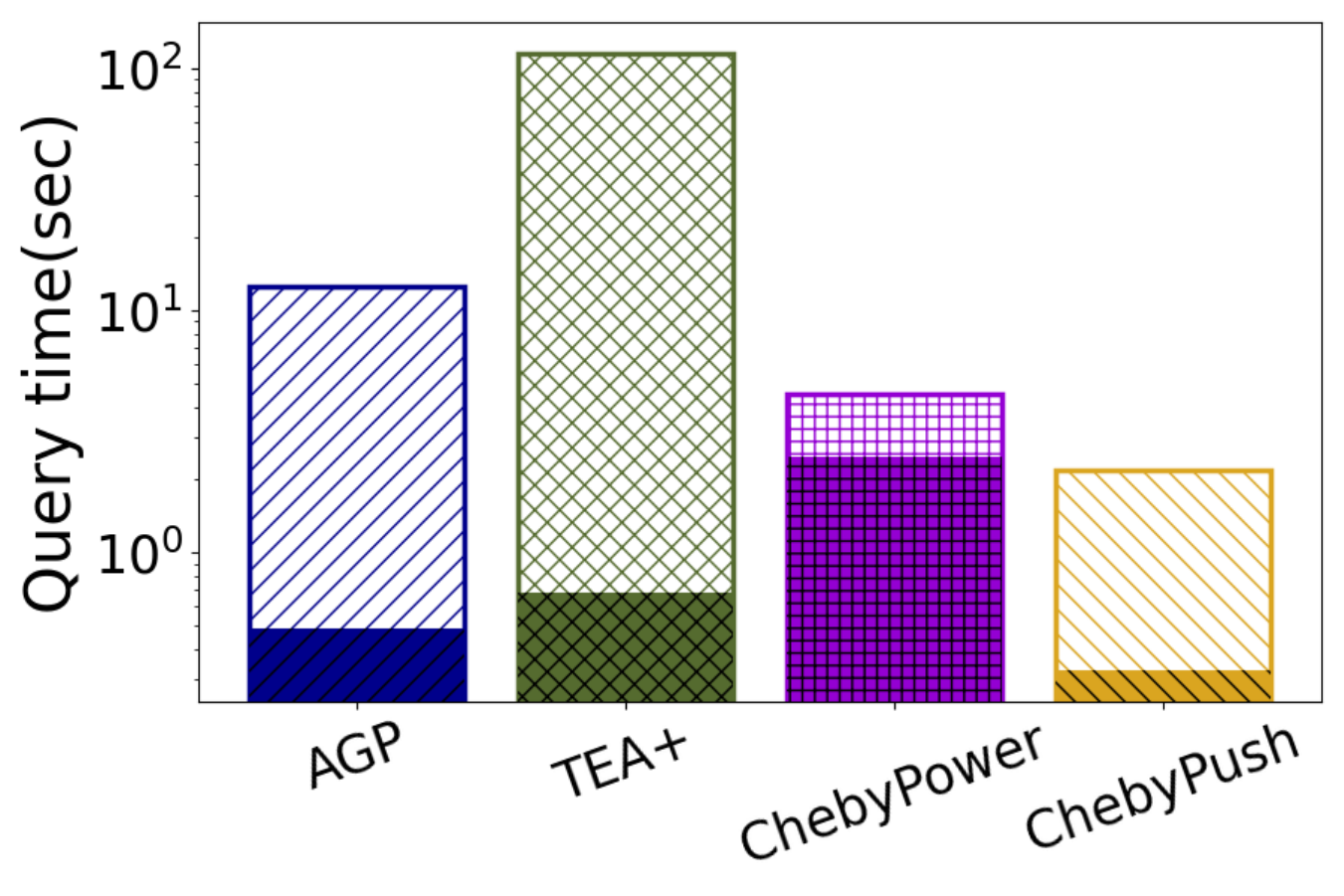}\hspace{-5mm}  \label{3}
	}
	\quad
	\subfigure[\orkut]{
        \centering
		\includegraphics[scale=0.14]{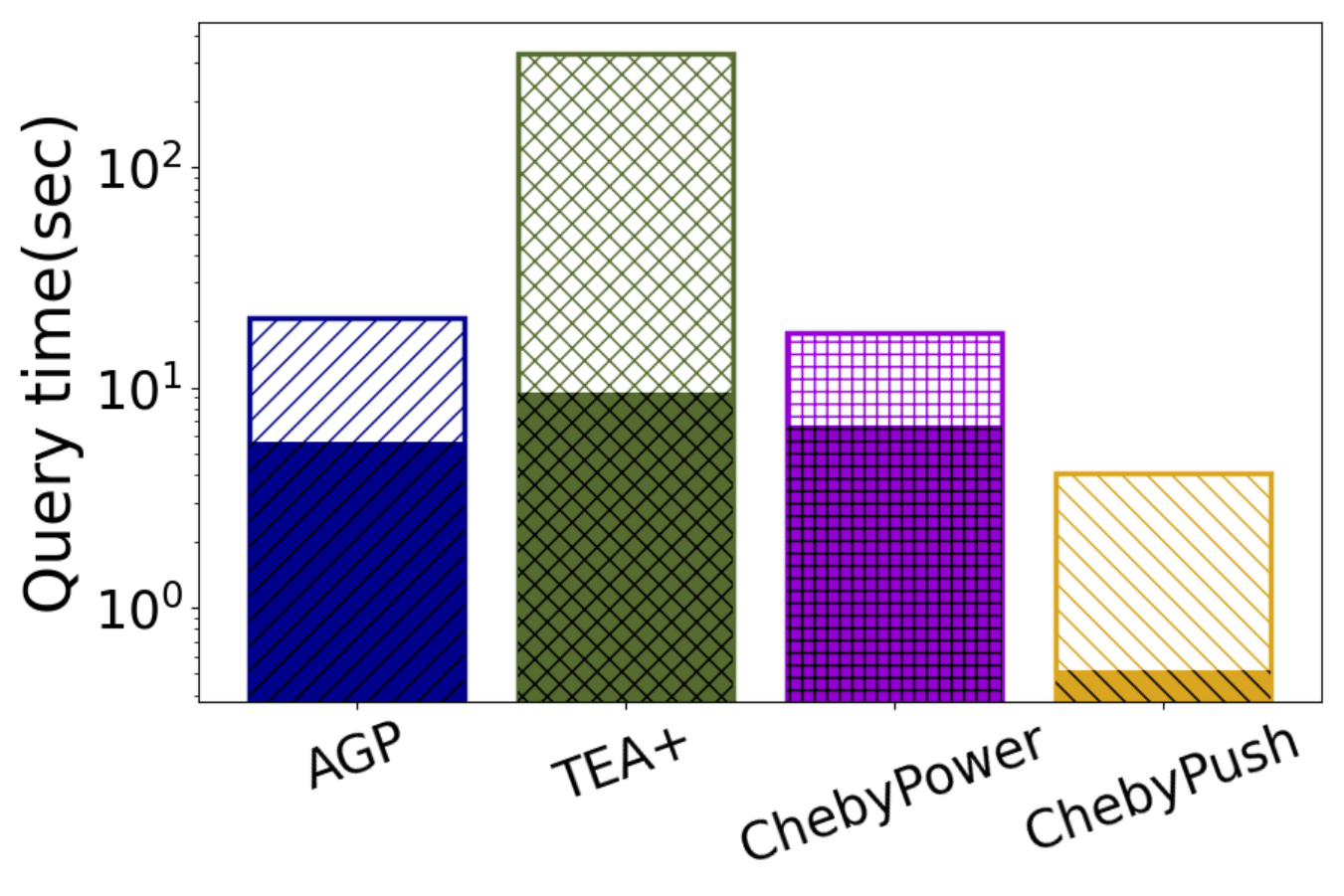}\label{4}
	}
 \subfigure[\friendster]{
        \centering
		\includegraphics[scale=0.14]{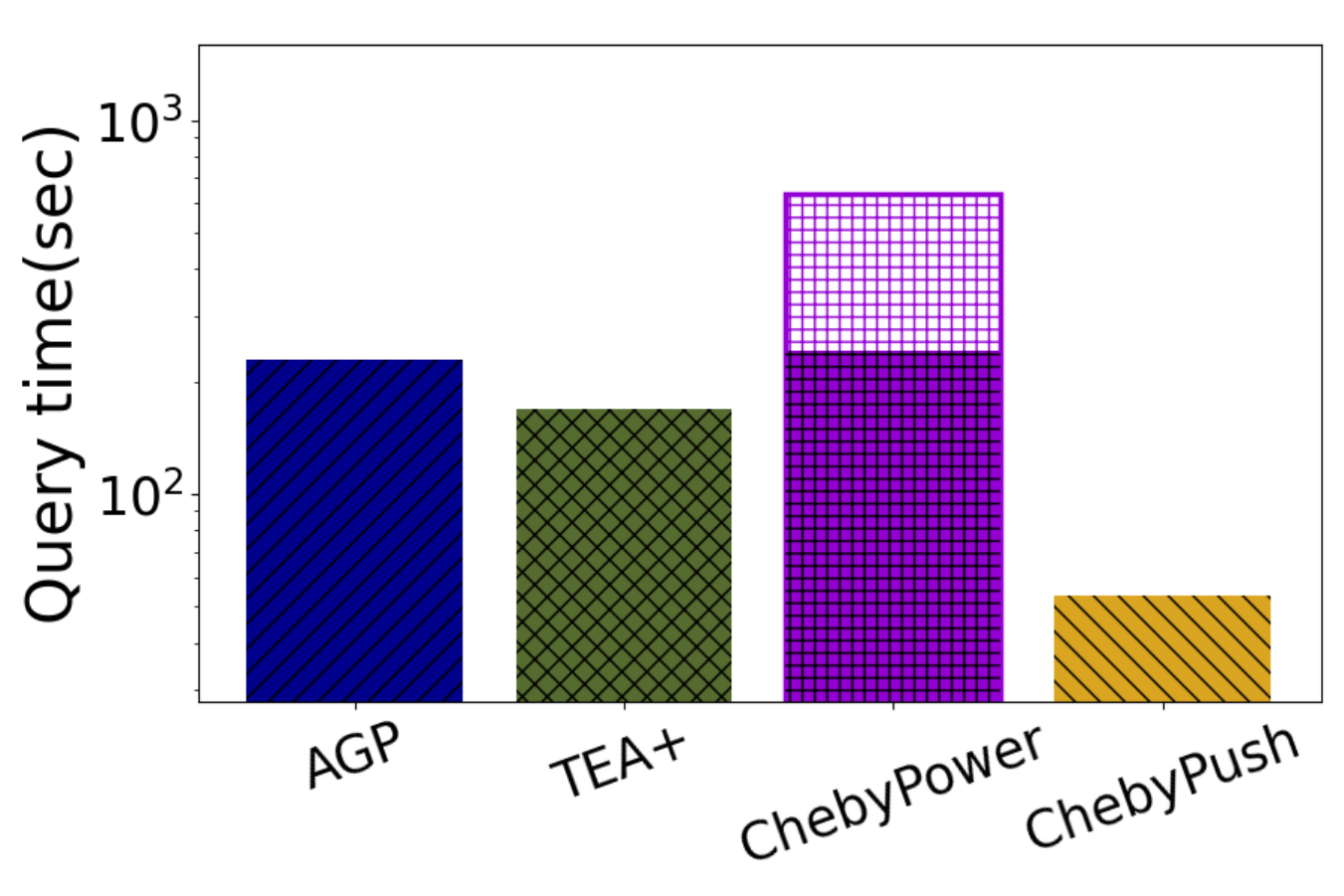}\label{4}
	}\vspace{-0.3cm}
	\caption{Query time of different \hkpr algorithms under $l_1$-error. The lower (resp., upper) bar of each figure represents the query time of each algorithm to reach low-precision (resp., high-precision) $l_1$-error.}\label{fig:hkpr-l1err}
\vspace{-0.2cm}
%\end{minipage}
\end{figure*}

\begin{figure*}[t!]
\vspace{-0.3cm}
%\begin{minipage}[t]
	\centering
	\subfigure[\dblp]{
            \centering
		\includegraphics[scale=0.14]{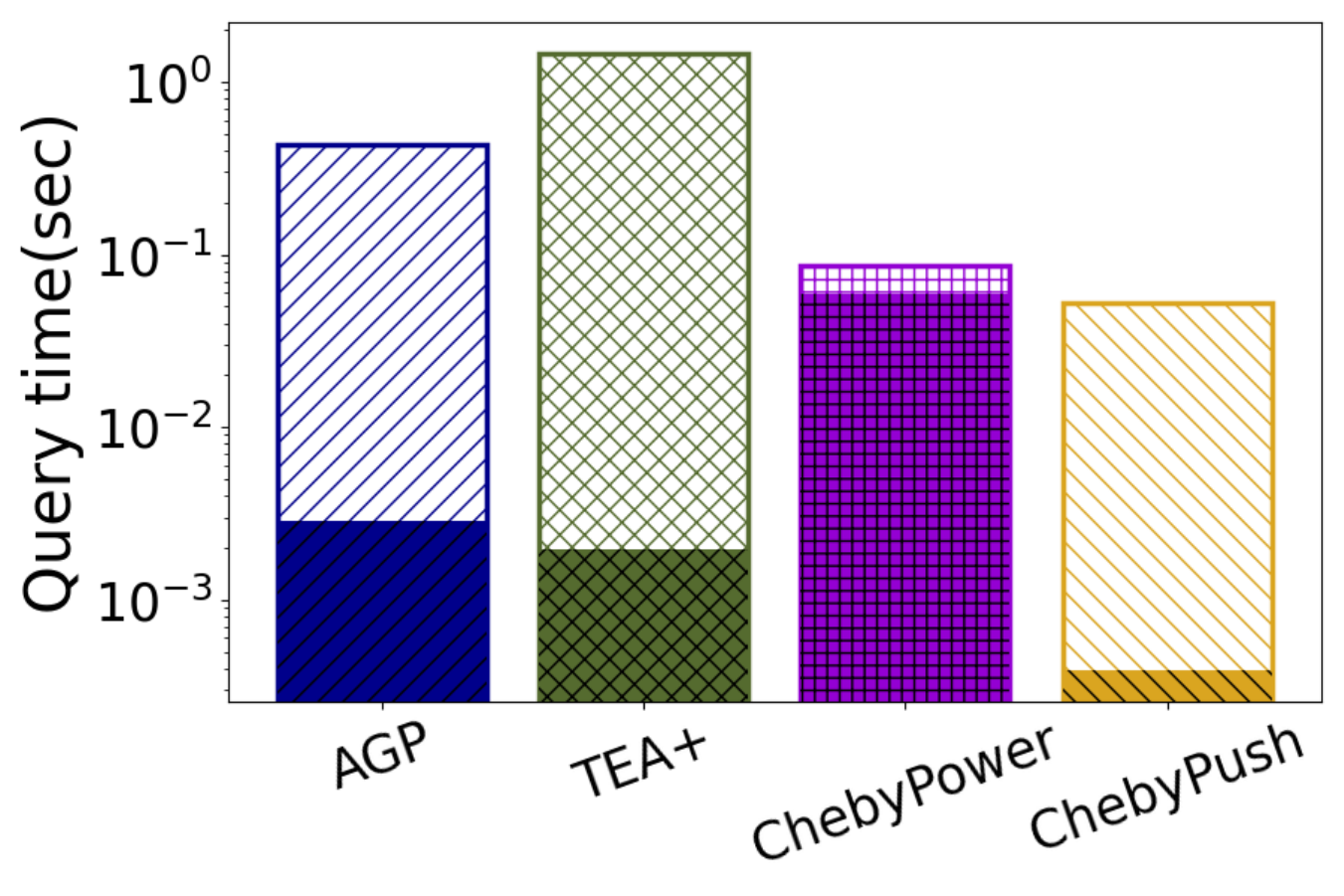}\hspace{-5mm} \label{1}
	}
	\quad
        \subfigure[\youtube]{
        \centering
		\includegraphics[scale=0.14]{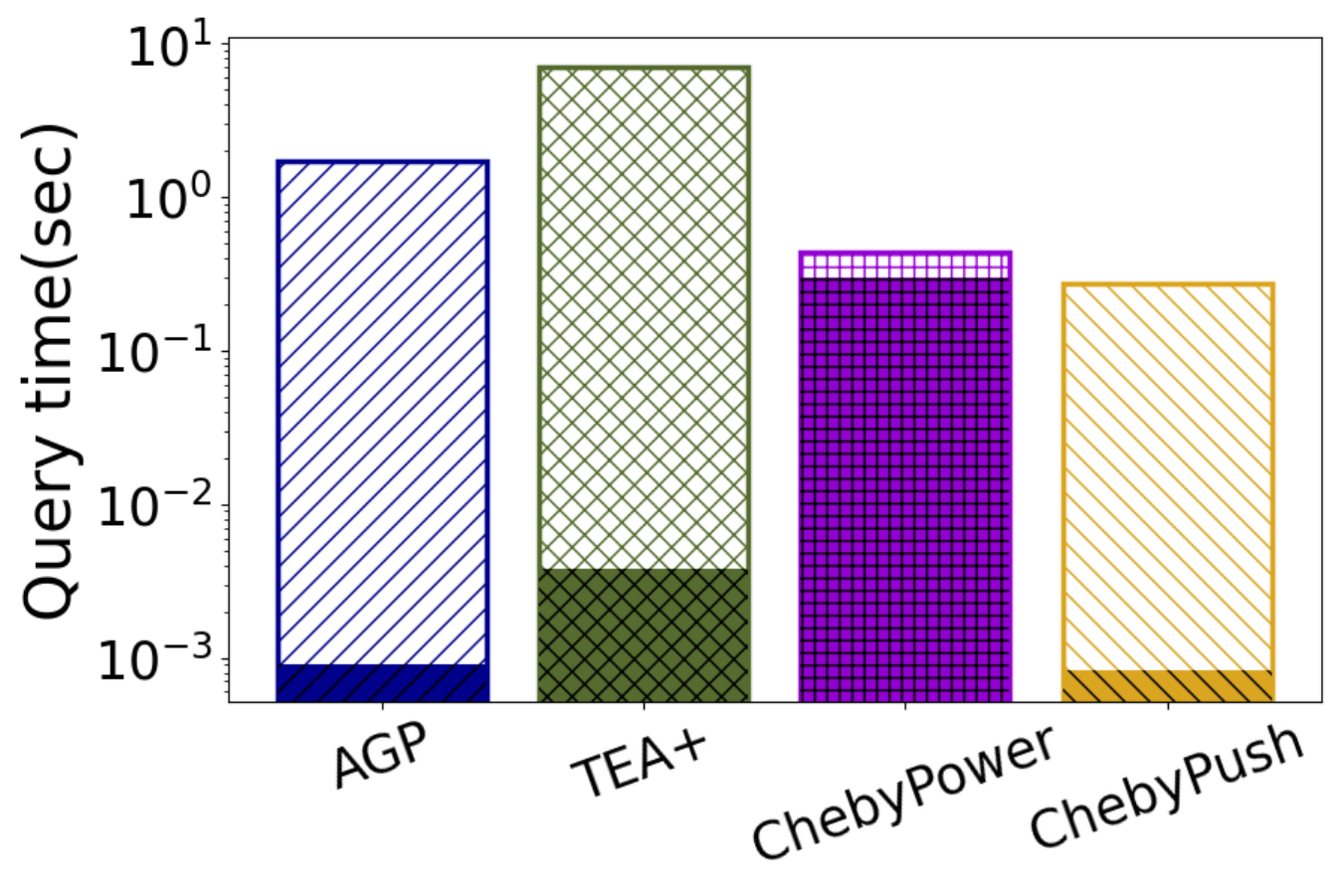}\hspace{-5mm}  \label{2}
	}
	\quad
	\subfigure[\livejournal]{
        \centering
		\includegraphics[scale=0.14]{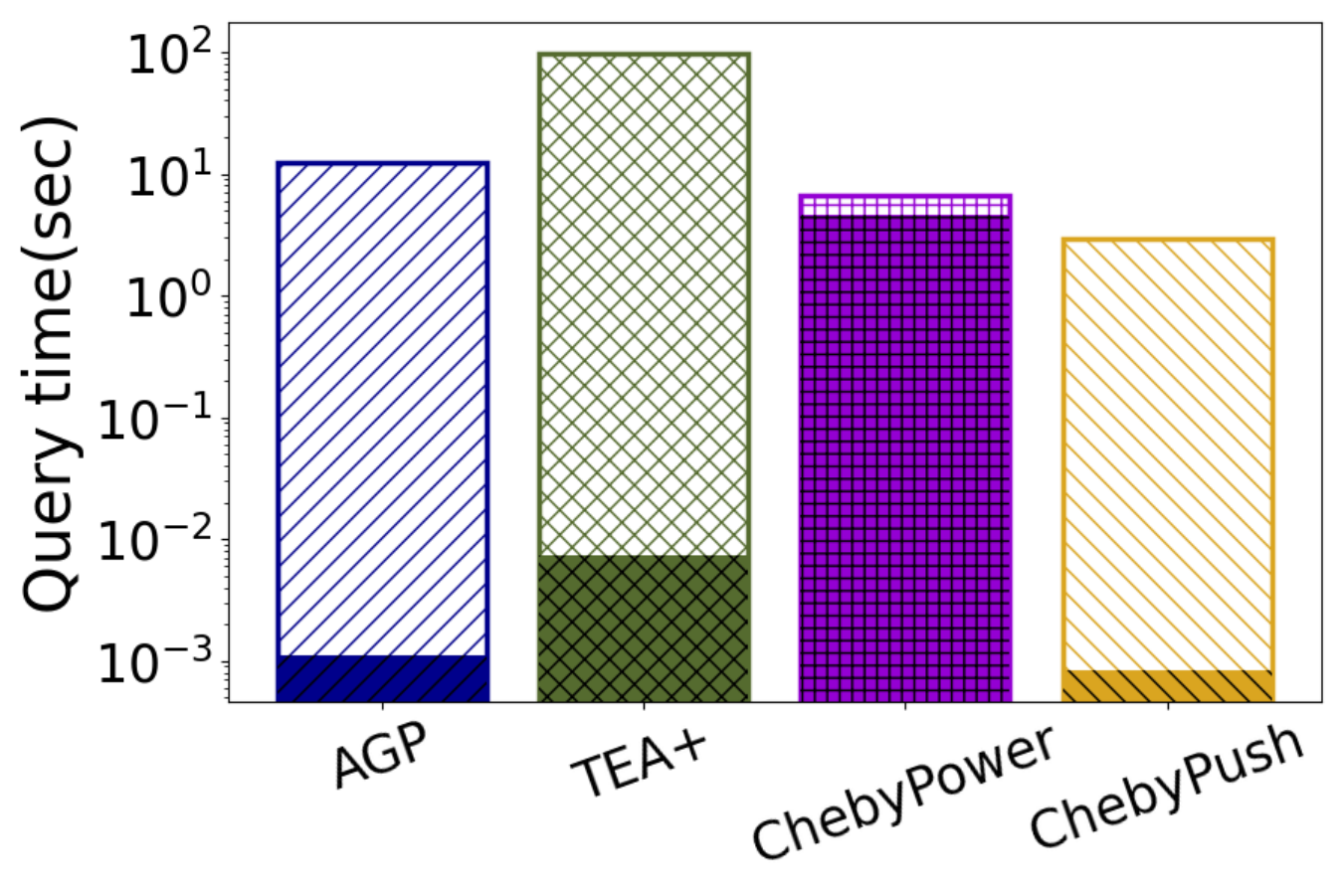}\hspace{-5mm}  \label{3}
	}
	\quad
	\subfigure[\orkut]{
        \centering
		\includegraphics[scale=0.14]{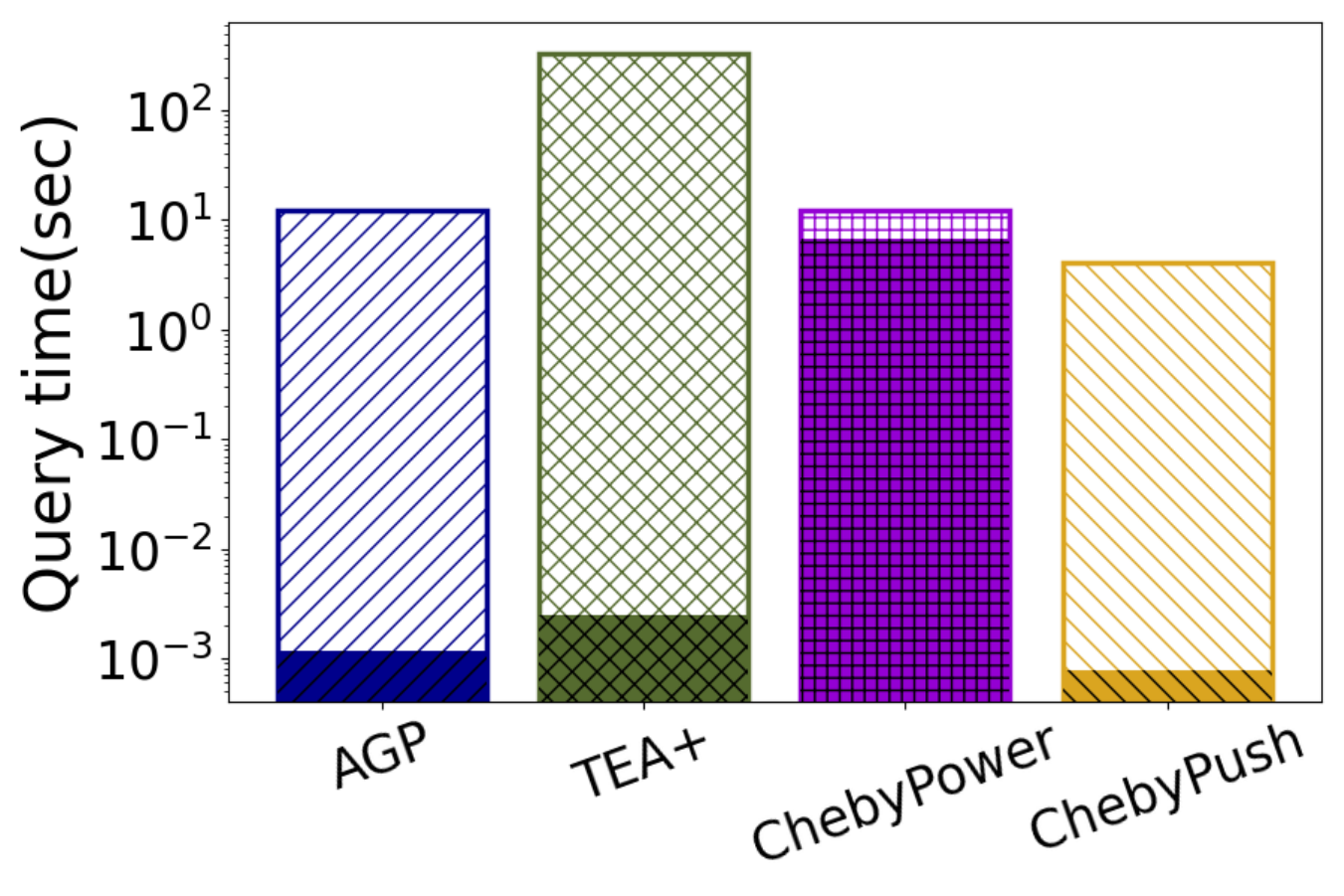}\label{4}
	}
    \subfigure[\friendster]{
        \centering
		\includegraphics[scale=0.14]{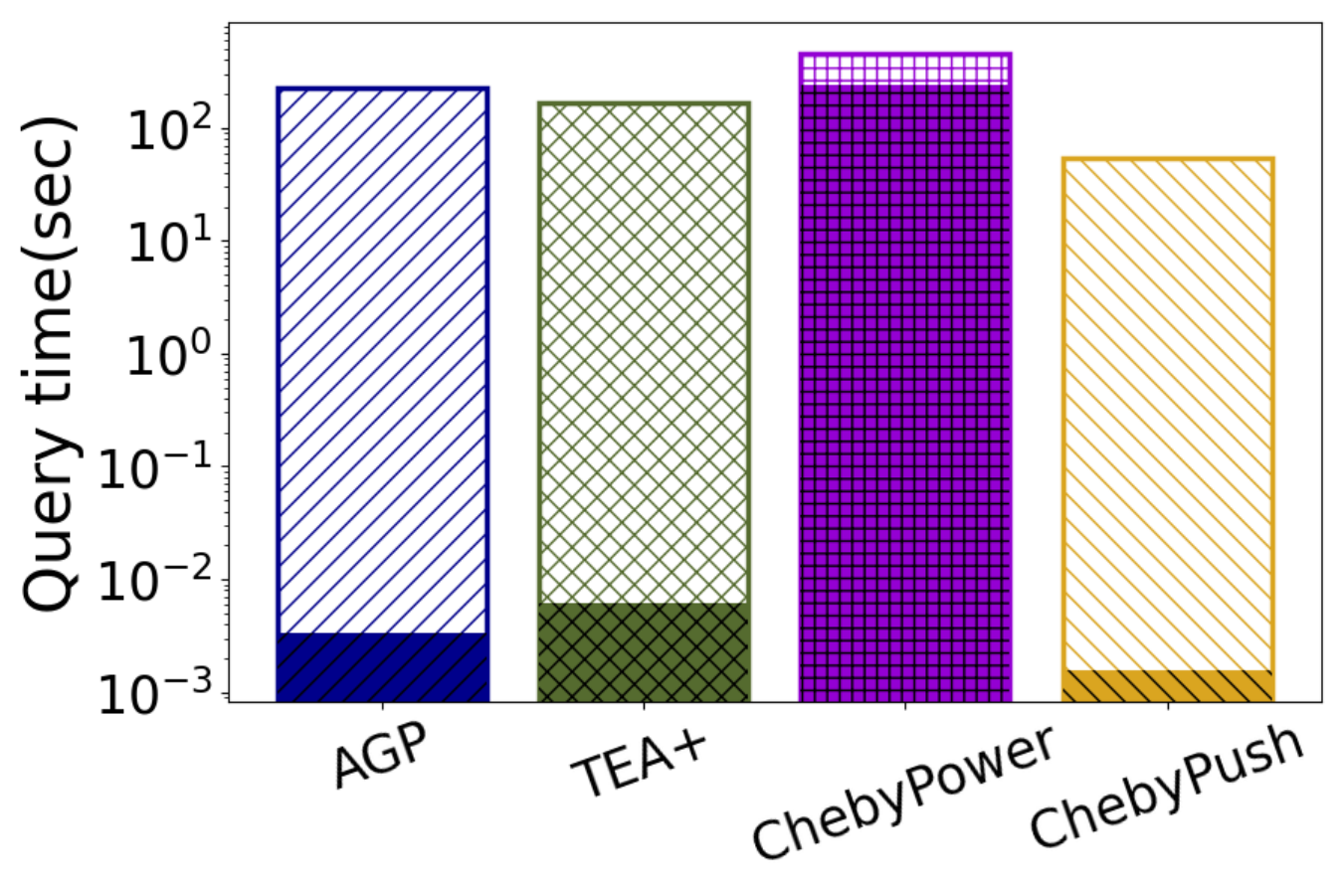}\label{4}
	}\vspace{-0.3cm}
    \caption{Query time of different \hkpr algorithms under Normalized RelErr. The lower (resp., upper) bar of each figure represents the query time of each algorithm to reach low-precision (resp., high-precision) Normalized RelErr.}\label{fig:hkpr-relerr}
    \vspace{-0.2cm}
    %\end{minipage}
    \end{figure*}

\subsection{Results of \hkpr Vector Computation} \label{subsec:exp-hkpr}
%\stitle{Parameter settings for \hkpr.} 

%\stitle{Methods} For \hkpr queries, we evaluate the performance of approximate algorithms similar as \ssppr: (i) Results with Normalized RelErr; (ii) Results with $l_1$-error; (iii) Results with different $t$. We set $t=5$ in all experiments except (iii).

%\stitle{Comparison of various algorithms for computing \hkpr.} 
In this experiment, we compare our algorithms \ltwocheb and \chebpush with the state-of-the-art competitors \agp \cite{wang2021approximate} and \teaplus \cite{yang2019efficient} for computing \hkpr vectors. It is important to note that both \agp and \teaplus have shown significant performance improvements over \hkrelax \cite{kloster2014heat} and ClusterHKPR \cite{chung2018computing}, as reported in \cite{wang2021approximate} and \cite{yang2019efficient}. Therefore, in this experiment, we exclude \hkrelax and ClusterHKPR from comparison. Similar to \ssppr, for \ltwocheb we set $K=\sqrt{t}\log{(\frac{1}{\epsilon})}$ with $\epsilon$ in $[10^{-1}, 10^{-9}]$. For \chebpush, we set $K=\sqrt{t}\log{(10^5)}$, and vary the threshold $\epsilon_a\in [10^{-3},10^{-11}]$. The parameter settings of \agp and \teaplus follow their original papers~\cite{yang2019efficient,wang2021approximate}. Unless specified otherwise, we set $t=5$ following previous studies ~\cite{kloster2014heat,yang2019efficient,wang2021approximate}. 

Figure~\ref{fig:hkpr-l1err} and Figure~\ref{fig:hkpr-relerr} show the query time of various algorithms for \hkpr vector computation under the $l_1$-error and Normalized RelErr metrics, respectively. Similar to Figure ~\ref{fig:ssppr-l1err} and Figure~\ref{fig:ssppr-relerr}, the lower bar of each figure represents the query time for low-precision computation and the upper bar represents the query time for high-precision computation. We have the following important observations: (i) For the $l_1$-error metric, our \chebpush algorithm substantially outperforms all other methods under the low-precision case. Under the high-precision case, both of our \ltwocheb and \chebpush consistently outperform the baselines on all datasets. In general, \chebpush is $3\times$ to $8\times$ faster than the best baseline algorithm. Note that for \friendster with billions of edges, none of the existing algorithms can output the results within a few seconds under the high-precision case. (ii) For the Normalized RelErr metric, our \chebpush algorithm performs much better than the baseline methods under the low-precision case. For the high-precision case, both \ltwocheb and \chebpush are better than baselines on most datasets (on Friendster, \chebpush is the champion algorithm and \ltwocheb is slightly worse than baselines). These results indicate that our algorithms are indeed significantly accelerating existing algorithms for \hkpr vector computation, which is consistent with our theoretical analysis.

\comment{
\subsection{Experiments for single-node PageRank}

For single-node PageRank, we compare our algorithms \ltwocheb and \chebpush with SOTA sublinear algorithms: \setpush ~\cite{wang2023singlenode} and \bippr~\cite{lofgren16bidirection,wang2024revisiting}. For \ltwocheb, we vary the parameter $\epsilon$ in $[10^{-1},10^{-9}]$. For \chebpush, we set $K=\frac{1}{\sqrt{\alpha}}\log{(10^5)}$, and vary $\epsilon$ in $[10^{-3},10^{-11}]$. For \setpush, we vary the parameter $c$ in $[\frac{1}{2}\times 10^{-4},\frac{1}{2}]$ and other parameters follow the setting in ~\cite{wang2023singlenode}. For \bippr, we follow the optimal setting of ~\cite{wang2024revisiting}. First, we set the number of random walks $n_r=n^{1/2}\min{\{\Delta^{1/2},m^{1/4}\}}c^{-1}$ and run \push with threshold $\epsilon=1/2,1/4,1/8...$ until the total cost of push is $\Theta(n_r)$, then we perform $n_r$ random walks. We vary the parameter $c$ in $[\frac{1}{2}\times 10^{-4},\frac{1}{2}]$ and other parameters follow the setting of ~\cite{wang2024revisiting}. To evaluate the performance of different algorithms, we use the Relative Error metric, which is defined as $\frac{|P(u)-\hat{P}(u)|}{P(u)}$, where $P(u)$ is the ground-truth answer, and $\hat{P}(u)$ is the approximation.
Figure ~\ref{fig:single-node} reports the performance of different algorithms for computing single-node PageRank under the Relative Error metric. The observation is that our algorithm \chebpush performs $3\times$ to $10\times$ faster than \setpush and \bippr under higher Relative Error.

\begin{figure*}[!t]
\vspace{-0.3cm}
%\begin{minipage}[t]
	\centering
	\subfigure[\dblp]{
            \centering
		\includegraphics[scale=0.2]{experiment-figure/single-node/dblp.png}\hspace{-5mm} \label{1}
	}
	\quad
        \subfigure[\youtube]{
        \centering
		\includegraphics[scale=0.2]{experiment-figure/single-node/youtube.png}\hspace{-5mm}  \label{2}
	}
	\quad
	\subfigure[\livejournal]{
        \centering
		\includegraphics[scale=0.2]{experiment-figure/single-node/livejournal.png}\hspace{-5mm}  \label{3}
	}
	\quad
	\subfigure[\orkut]{
        \centering
		\includegraphics[scale=0.2]{experiment-figure/single-node/orkut.png}\label{4}
	}
 \subfigure[\orkut]{
        \centering
		\includegraphics[scale=0.2]{experiment-figure/single-node/orkut.png}\label{4}
	}\vspace{-0.3cm}
	\caption{Trade-offs between running time v.s. Relative Error for single-node PageRank}\label{fig:single-node}
\vspace{-0.2cm}
%\end{minipage}
\end{figure*}
}

%\subsection{Results of different query node distributions} \label{subsec:source-node-distri}
\subsection{Results with Various Query Distributions} \label{subsec:source-node-distri}
In this experiment, we explore two strategies for selecting source nodes: (1) choosing the 10 nodes with the highest degrees, and (2) randomly selecting 10 nodes from the graph. We set the parameter $\alpha=0.2,0.02$ and $t=5,20$ for \ssppr and \hkpr respectively. For our \ltwocheb and \chebpush algorithms, we fix $\epsilon,\epsilon_a = 10^{-5}$. We present the results using box plots to depict the distribution of query times, as shown in Figure~\ref{fig:query-distribution}.

From Figure~\ref{fig:query-distribution}, we observe the following: (i) \ltwocheb exhibits insensitivity to different source node selections; the query costs for various source nodes are nearly identical. This behavior stems from \ltwocheb being a \textit{global} algorithm, where its runtime remains consistent regardless of the source node. (ii) \chebpush demonstrates greater sensitivity to the choice of source nodes. Generally, the query costs for \chebpush are higher when using high-degree source nodes compared to randomly selected ones. This disparity arises because \chebpush operates as a \textit{local} algorithm, where its time complexity depends on the local graph structure. High-degree nodes involve numerous neighbors contributing significant \ssppr and \hkpr values thus often cannot be pruned in the \chebpush procedure, resulting in higher computational costs. These findings align with the theoretical properties of our algorithms.

\begin{figure}%\vspace{-0.3cm}
    \centering
    \subfigure[\dblp,$\alpha=0.2$,$t=5$]{
		\includegraphics[scale=0.15]{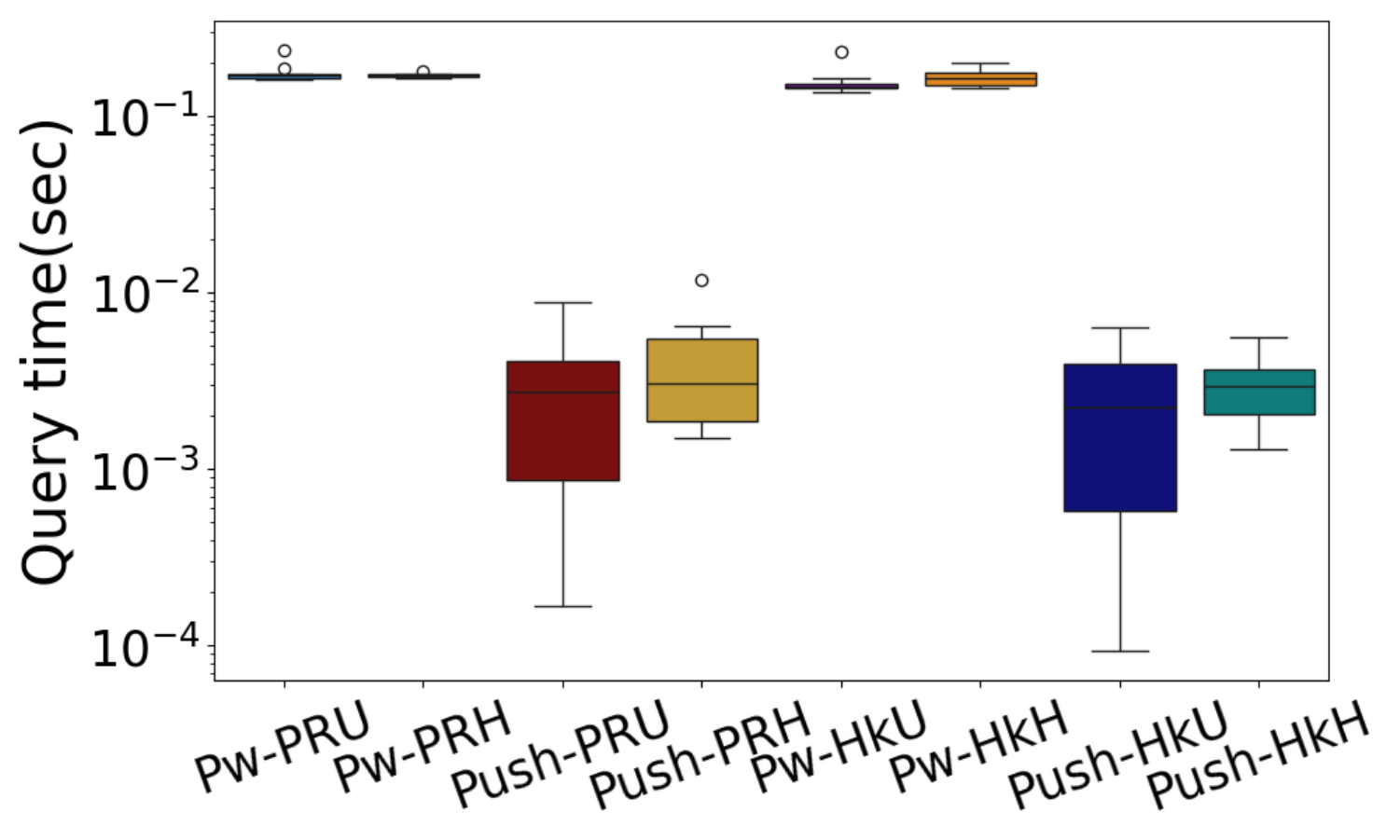}}
     %\subfigure[$\alpha=0.02$,\dblp]{
     %\includegraphics[scale=0.24]{experiment figure/query }}
     \subfigure[\dblp,$\alpha=0.02$,$t=20$]{
		\includegraphics[scale=0.15]{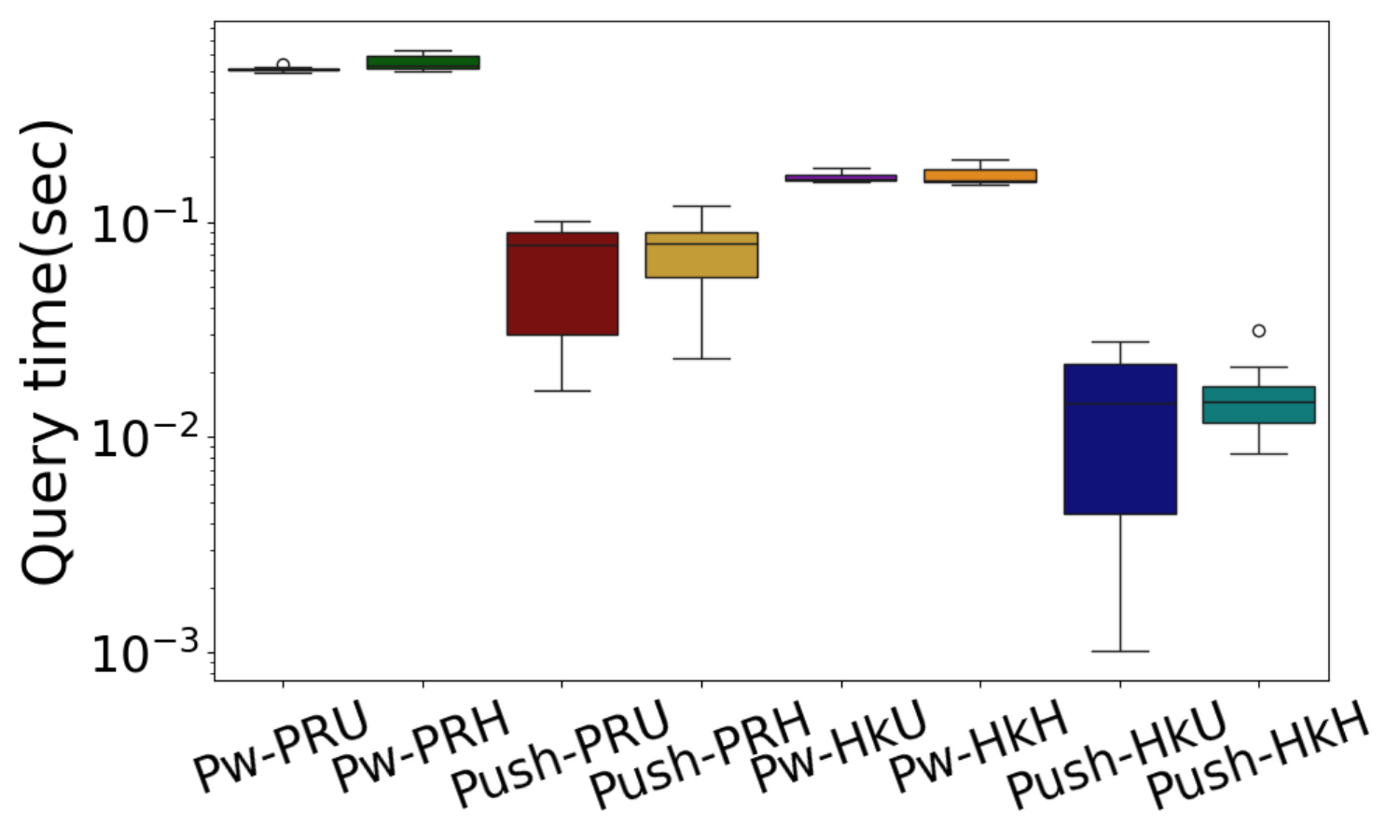}}
     %\subfigure[$\alpha=0.02$,\livejournal]{
     %\includegraphics[scale=0.24]{experiment figure/bidirectional/livejournal0.02runtime.png}}

      \subfigure[\livejournal,$\alpha=0.2$,$t=5$]{
		\includegraphics[scale=0.15]{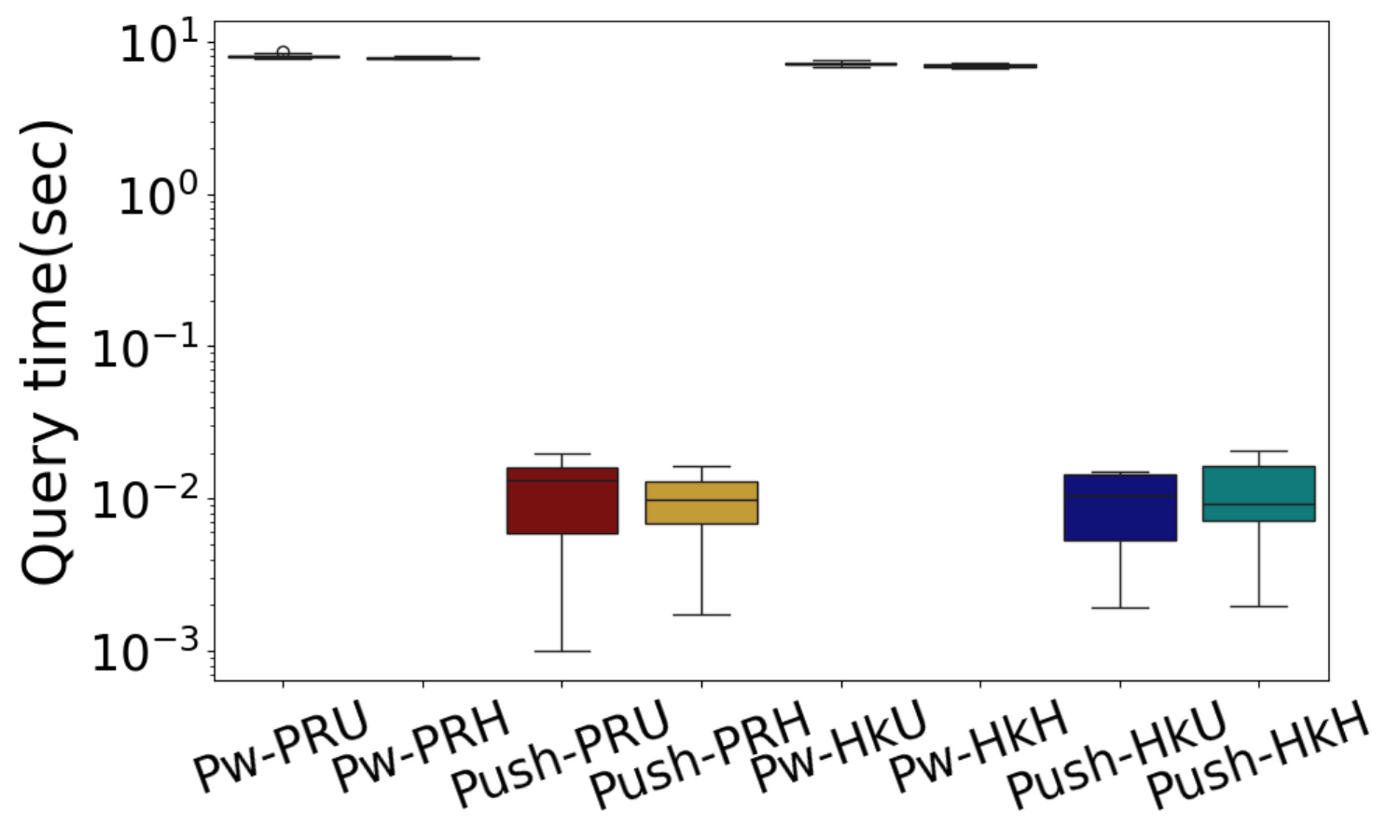}}
     %\subfigure[$\alpha=0.02$,\dblp]{
     %\includegraphics[scale=0.24]{experiment figure/bidirectional/0.02runtime.png}}
     \subfigure[\livejournal,$\alpha=0.02$,$t=20$]{
		\includegraphics[scale=0.15]{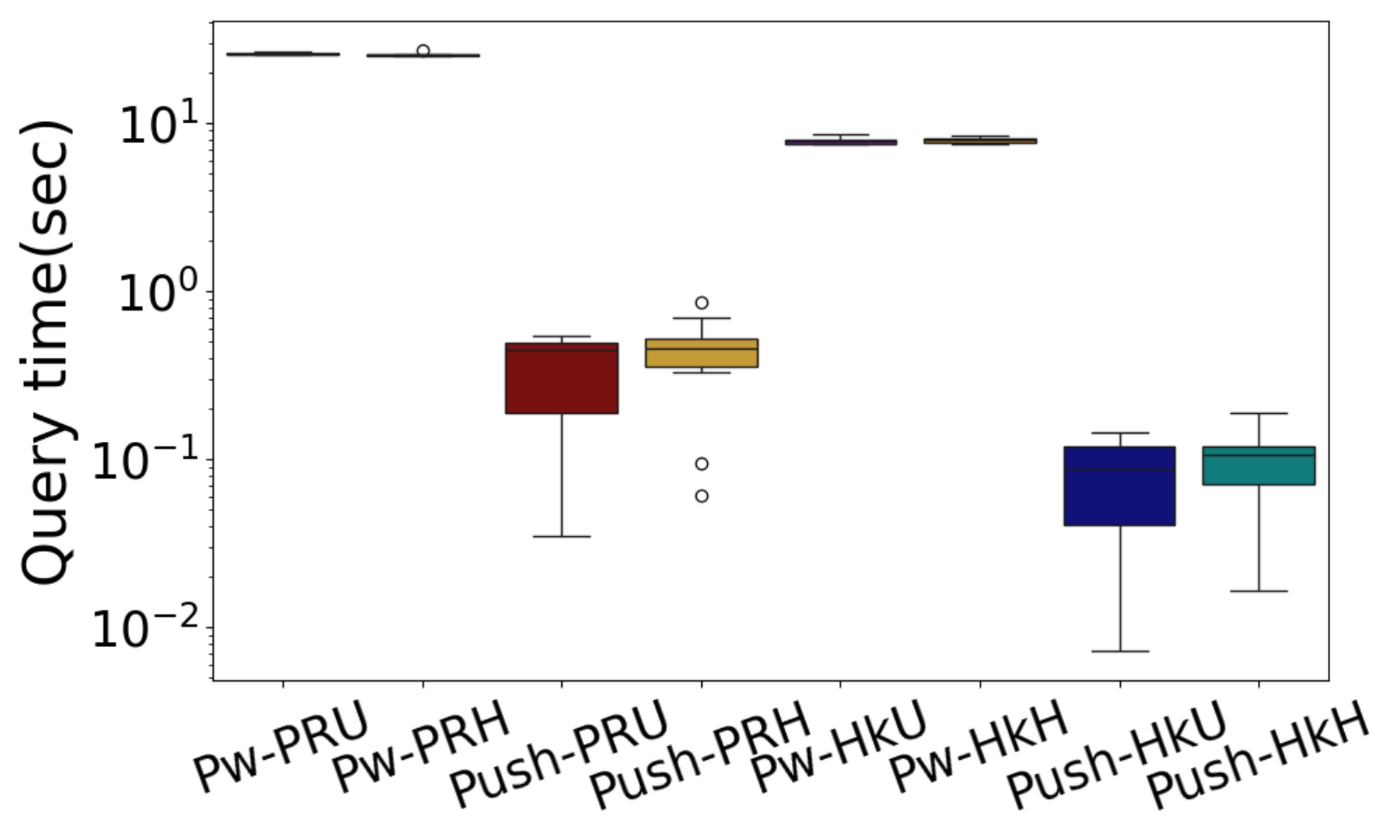}}
     %\subfigure[$\alpha=0.02$,\livejournal]{
     %\includegraphics[scale=0.24]{experiment figure/bidirectional/livejournal0.02runtime.png}}
    \caption{Query time distribution for our \ltwocheb and \chebpush algorithms in computing \ssppr and \hkpr vectors. \kw{Pw-PRU} and \kw{Pw-HkU} denote \ltwocheb selecting source nodes uniformly for \ssppr and \hkpr, respectively. \kw{Pw-PRH} and \kw{Pw-HkH} represent \ltwocheb selecting source nodes with the highest degree for \ssppr and \hkpr, respectively. Similar naming conventions apply to \chebpush.}\label{fig:query-distribution}\vspace{-0.2cm}
\end{figure}
%\kw{Push-PRU}, \kw{Push-HkU} indicates \chebpush chooses source nodes uniformly  for \ssppr and \hkpr computation, respectively. \kw{Push-PRH} and \kw{Push-HkH} indicates \chebpush chooses source nodes with highest degree for \ssppr and \hkpr computation, respectively.

\comment{

\subsection{Results with Various Parameters $\alpha$ and $t$} \label{subsec:exp-parameter-effect}
In this experiment, we explore the effect of parameters $\alpha$ and $t$ on our \ltwocheb and \chebpush algorithms for computing \ssppr and \hkpr vectors respectively. 

\begin{figure}[t!]
    \centering
    \subfigure[\dblp, Normalized RelErr=$10^{-5}$]{
		\includegraphics[scale=0.24]{experiment-figure/ssppr/var-alpha-high-error.png}}
     \subfigure[\dblp, Normalized RelErr=$10^{-10}$]{
		\includegraphics[scale=0.24]{experiment-figure/ssppr/var-alpha-low-error.png}}
\vspace{-0.5cm}
    \subfigure[\dblp, $l_1$-error=$10^{-1}$]{
		\includegraphics[scale=0.24]{experiment-figure/ssppr/var-alpha-high-l1error.png}}
     \subfigure[\dblp, $l_1$-error=$10^{-5}$]{
		\includegraphics[scale=0.24]{experiment-figure/ssppr/var-alpha-low-l1error.png}}
    \caption{The effect of $\alpha$ on our \ssppr algorithms.}\label{fig:ssppr-var-alpha} \vspace{-0.3cm}
\end{figure}

\stitle{The effect of parameter $\alpha$.} For \ssppr vector computation, we vary $\alpha=0.2$ to $\alpha=0.02$ and evaluate the effect of $\alpha$ on the performance of our \ltwocheb and \chebpush algorithm. The results on \dblp are reported in Figure ~\ref{fig:ssppr-var-alpha}. Similar results can also be obtained on the other datasets. As can be seen, the query time of each algorithm increases as $\alpha$ decreases. This is because the time complexity both of \ltwocheb and \chebpush 

This experiment studies the performance of various algorithms with varying $\alpha$. Specifically, we vary $\alpha=0.2,0.1,0.02,0.002,0.0002$ and compare the performance of various algorithm under different $\alpha$ values. The results on \dblp are reported in Figure ~\ref{fig:ssppr-var-alpha}. Similar results can also be obtained on the other datasets. As can be seen, the query time of each algorithm increases as $\alpha$ decreases.

The conclusions are as follows. (i) For Normalized RelErr,  under the low-precision case (i.e., Figure ~\ref{fig:ssppr-var-alpha}(a)), \push and \chebpush are the champion and runners-up, respectively. This is because \push and \chebpush are local algorithms, they perform significantly better than other algorithms.  Note that \push is is last one under the high-precision case by Figure ~\ref{fig:ssppr-var-alpha}(b). Besides, our \ltwocheb  performs consistently better than its counterpart \powermethod on different $\alpha$.  For high-precision case (i.e., Figure ~\ref{fig:ssppr-var-alpha}(b)), the performances of \ltwocheb and \chebpush  are significantly better than all competitors except \powerpushsor (but \powerpushsor  performs mediocrely under low precision). This is because \ltwocheb and \chebpush are acceleration algorithms for computing \ssppr, thus they perform  better than their counterparts. (ii) For the $l_1$-error (i.e., Figure ~\ref{fig:ssppr-var-alpha}(c)(d)), the results are similar to Figure ~\ref{fig:ssppr-var-alpha}(b) for both low precision \& high precision cases. Thus, for the sake of brevity, we omit explanations for the $l_1$-error. (iii) The query time of each algorithm increases as $\alpha$ decreases. Besides, \chebpush, \ltwocheb, and \powerpushsor are less sensitive to $\alpha$ compared to other power or push-based algorithms under high precision case (i.e., Figure ~\ref{fig:ssppr-var-alpha}(b)(d)), as they are accelerating algorithms and converge faster than \powermethod (\ltwocheb is asymptotic $(\frac{1}{\sqrt{\alpha}})\times$ faster than \powermethod). These experiments show that, overall, our algorithm outperforms all baselines under different $\alpha$.

\stitle{The effect of parameter $t$.} Similar to \ssppr, we study the performance of various \hkpr algorithms with varying $t$. We vary $t=5,10,20,30,40$ and compare the trade-off curves of $t$ v.s. running time, which is illustrated  in Figure  ~\ref{fig:hkpr-var-t}. For the  low-precision case (Figure ~\ref{fig:hkpr-var-t}(a)(c)), our algorithms \ltwocheb and \chebpush are faster than \agp and \teaplus  in most cases. Besides, they also are less sensitive to parameter $t$ compared to \agp and \teaplus. Moreover, since \chebpush is a local algorithm, it performs consistently better than \ltwocheb. For the high-precision case (Figure ~\ref{fig:hkpr-var-t}(b)(d)), our algorithms botha \ltwocheb and \chebpush  perform $10\times$ better than \agp and \teaplus, and the performance of \ltwocheb and \chebpush are similar. In addition, the query time of each algorithm increases as $t$ increases. However, since \ltwocheb and \chebpush are accelerating algorithms, they are less sensitive to $t$ compared to other algorithms (\ltwocheb is asymptotically $(\sqrt{t})\times$ faster than \powermethod.)
%Again, the conclusion is similar as \ssppr: the performance of \ltwocheb is $5\times$ to $10\times$ faster than \agp and \teaplus for larger $t$. In addition, our algorithms \ltwocheb and \chebpush are not sensitive to parameter $t$, so the performance of our algorithms is significantly better than \agp and \teaplus when $t$ is larger. 
Thus, these experiments show that our algorithms outperform all baselines under
different $t$, which are consistent with our theoretical results (the runtime bound for \chebpush is $\tilde{O}(\frac{t\sqrt{|S|}}{\epsilon})$ and this bound is better than SOTA \agp). % when $|S|<t^4$).

\begin{figure}
    \centering
    \subfigure[\dblp, Normalized RelErr=$10^{-5}$]{
		\includegraphics[scale=0.24]{experiment figure/hkpr/var-t-high-error.png}}
     \subfigure[\dblp, Normalized RelErr=$10^{-10}$]{
		\includegraphics[scale=0.24]{experiment figure/hkpr/var-t-low-error.png}}

    \subfigure[\dblp, $l_1$-error=$10^{-1}$]{
		\includegraphics[scale=0.24]{experiment figure/hkpr/var-t-high-l1error.png}}
     \subfigure[\dblp, $l_1$-error=$10^{-5}$]{
		\includegraphics[scale=0.24]{experiment figure/hkpr/var-t-low-l1error.png}}
    \caption{The effect of $t$ on our \hkpr algorithms.}\label{fig:hkpr-var-t}
\end{figure}
}

\section{Further Related Work}
\stitle{Personalized PageRank (PPR) Computation.} There are roughly  four different PPR computation problems~\cite{yang2024efficient}: single-source PPR, single-target PPR, single-pair PPR, and top-$k$ PPR. The primary techniques for computing PPR include Monte-Carlo sampling (MC), \powermethod (PM), Forward Push (FP), and Reverse Push (RP), which form the foundation of most existing PPR solutions. For single-source PPR, which involves determining the PPR value from a given source vertex $s$ to every target vertex $t\in V$, state-of-the-art practical algorithms~\cite{wang2017fora,wu2021unifying,lin2020index,liao2022efficient,23sigmodepvr} often utilize the FP+MC or PM+MC frameworks. Notably, Wei et al.~\cite{wei2024approximating} theoretically prove that single-source PPR can be computed in sublinear time with a relative error guarantee using the MC+RP+MC framework. Wang et al.~\cite{wang2020personalized,wang2023singlenode} propose the randomized push technique specifically for addressing single-target PPR, which calculates the PPR value from every node $s \in V$ to a given target node $t$. Liao et al.~\cite{liao2022efficient} and Lofgren et al.~\cite{lofgren16bidirection} employed the RP+MC framework to tackle single-pair PPR, determining the PPR value between a specific source node $s$ and a specific target node $t$. For top-$k$ PPR, which identifies the top-$k$ PPR values between a specific source node$s$ and all nodes in $V$, Wei et al.~\cite{wei2018topppr} use FP+BP+MC to achieve a $\rho$-precision guarantee. As our objective is to enhance the efficiency of push operations (FP and BP) and power method (PI), our solutions complement these existing works.

\stitle{Chebyshev Polynomial Methods.} Chebyshev Polynomials are powerful tools that are widely used to accelerate function approximation in numerical analysis. For example, Bautista and Latapy~\cite{bautista2022local} employ Chebyshev polynomials to design distributed Personalized PageRank algorithms. Shuman et al.\ ~\cite{shuman2018distributed,shuman2011chebyshev} apply Chebyshev polynomials to approximate the distributed graph signal processing.  Defferrard et al.\ ~\cite{defferrard2016convolutional} and He et al.\ ~\cite{he2022convolutional}  decompose graph signals into linear combinations of Chebyshev polynomials. Braverman et al.~\cite{braverman2022sublinear} utilize the stability of Chebyshev polynomials ~\cite{clenshaw1955note,musco2018stability} to design  fast algorithms for spectral density approximation of the transition matrix $\mathbf{P}$. However, since their problem differs from ours, their techniques are fundamentally distinct from ours and cannot be directly applied to our GP vectors computation problem.
%Note that studies such as ~\cite{braverman2022sublinear, clenshaw1955note, musco2018stability} are similar to the stability analysis of our subset Chebyshev recurrence (i.e., Lemma ~\ref{lem:residual-propagation}). However, our motivations differ from theirs, leading to distinct theoretical results. Consequently, our solutions are orthogonal to these works.
%However, our bound is better, since we only have $T_k (x)$ other than the second kind of Chebyshev polynomials $U_k(x)$, which makes the residual propagation more stable.

\section{Conclusion} \label{sec:conclusion}
In this paper, we propose two innovative and efficient algorithms  for general GP computation based on Chebyshev polynomials. Specifically, we first propose a novel Chebyshev expansion formula for general GP functions, based on which new power-iteration and push-style algorithms are proposed. We prove that our new power-iteration method, namely \ltwocheb, can achieve roughly $O(\sqrt{N})$ acceleration over the state-of-the-art power iteration methods for two well-studied GP computation problems (\ssppr and \hkpr). For the push-style algorithm, we develop a novel subset Chebyshev recurrence technique to achieve a local algorithm with lower time complexity compared to the state-of-the-art push algorithms for general GP computation. Extensive experimental results demonstrate that our algorithms significantly outperform existing state-of-the-art methods for both \ssppr and \hkpr computations in large graphs.

%In this paper, we focus on accelerating massive graph propagation computation, which is the fundamental operator in graph clustering and graph learning. Building on an in-depth analysis of Chebyshev polynomials, we propose two powerful algorithms: \ltwocheb and \chebpush. Specifically, \ltwocheb is a global algorithm that is asymptotically $O(\sqrt{N})$ ($N$ is the Taylor expansion truncation step) times faster than the \powermethod. \chebpush is a local algorithm that implements the Chebyshev recurrence locally and outperforms \agp (the SOTA \truncatepush algorithm) when $|\bar{S}|\leq N^4$ ($|\bar{S}|$ is the average size of invoking sets of each epoch). Extensive empirical results demonstrate that our solutions are on par with existing \ssppr methods while surpassing current \hkpr methods. Most notably, SOTA \hkpr algorithms remain  inefficient for small errors or large $t$, while our \chebpush algorithm efficiently handles these cases. For instance, our solutions achieve around $3\times$ to $8\times$ improvement in query time to reach the same error for $t=5$, and a $10\times$ improvement for $t=40$ compared to baselines. 

\balance

\bibliographystyle{plain}
\bibliography{ref}

\comment{
\appendix

\section{The generalized subset Chebyshev recurrence}\label{sec:generalize-subset-chebyshev}

 In this section, we introduce the generalized subset Chebyshev recurrence and show that Definition ~\ref{def:purning-chebyshev} is actually the optimal  generalized subset Chebyshev recurrence. These theoretical results may not be helpful for the proposed algorithm, \chebpush, but they are the process by which we get the subset Chebyshev recurrence. For our analysis, we define  three kinds of Chebyshev polynomials.

\begin{definition}{~\cite{mason2002chebyshev}}\label{def:three-kind-chebyshev}
    For any $x\in \mathbb{R}$, the first kind of Chebyshev polynomials $\{T_k(x)\}$, the second kind of Chebyshev polynomials $\{U_k(x)\}$, the third kind of Chebyshev polynomials $\{V_k(x)\}$ are obtained by the following process, respectively: (i) $T_0(x)=U_0(x)=V_0(x)=1$; (ii) $T_1(x)=x, U_1(x)=2x, V_1(x)=2x-1$; (iii) $P_{k+1}(x)=2xP_k(x)-P_{k-1}(x)$ holds for  $k\geq 1$, in which $P_{k+1}$ refers to $T_{k+1}(x)$, or $U_{k+1}(x)$, or $V_{k+1}(x)$.
\end{definition}

The following lemma shows that for any polynomial series $\{P_k(x)\}$ following the three term recurrence $P_{k+1}(x)=2xP_k(x)-P_{k-1}(x)$ can be expressed by the linear combination of the three kinds of Chebyshev polynomials%, which will be later used in the analysis.

\begin{lemma}\label{lem:combination-chebyshev}
    Consider the polynomial series $\{P_k(x)\}$ derived by the following process:
\begin{eqnarray} \label{eq:18}
        \left\{
        \begin{aligned}
        P_0(x)&=c_1, P_1(x)=c_2x+c_3 \\
        P_{k+1}(x)&=2xP_k(x)-P_{k-1}(x) \\
        \end{aligned}
        \right.
    \end{eqnarray}
    where $c_1,c_2,c_3\in \mathbb{R}$ are some constants. Then $\{P_k(x)\}$ can be expressed by the linear combination of the three kinds of Chebyshev polynomials.
\begin{equation}
 P_k(x)=(2c_1-c_2)T_k(x)+(c_2-c_1+c_3)U_k(x)-c_3V_k(x),k\geq 0   
\end{equation}   
\end{lemma}

\begin{proof}
    By Definition ~\ref{def:three-kind-chebyshev}, we can know that the recurrence of three kinds of Chebyshev polynomials same as $P_k(x)$  (i.e., Equation \ref{eq:18}) for $k\geq 1$. Therefore, we suppose $P_k(x)=\lambda_1 T_k(x)+\lambda_2 U_k(x)+\lambda_3V_k(x),k\geq 0$ with parameters $\lambda_1,\lambda_2,\lambda_3$ to be determined.
    Taking this into the first two terms $P_0(x)$ and $P_1(x)$, then $\lambda_1,\lambda_2,\lambda_3$ satisfy the linear equation:
\begin{eqnarray}
        \left\{
        \begin{aligned}
        \lambda_1+\lambda_2+\lambda_3&=c_1\\
        \lambda_1+2\lambda_2+2\lambda_3&=c_2\\
        -\lambda_3&=c_3
        \end{aligned}
        \right.
    \end{eqnarray}

    Solving this linear equation, we have $\lambda_1=2c_1-c_2$, $\lambda_2=c_2-c_1+c_3$, $\lambda_3=-c_3$, which proves the lemma.
\end{proof}

Next, we define the generalized subset Chebyshev recurrence with parameters $\lambda_k,\phi_k$ to be determined.

\begin{definition}{(generalized subset Chebyshev recurrence)}\label{def:generalize-subset-chebyshev}
Given source node $s$, matrix $\mathbf{P}$, a series of sets $\{S_k\}_k$ with $S_k\subset \mathcal{V}$, and parameters $\{\lambda_k,\phi_k\}$ to be determined. The generalized subset Chebyshev recurrence is defined as follows:
\begin{eqnarray}\label{equ:subset-cheby-recurrence}
        \left\{
        \begin{aligned}
  \hat{\mathbf{r}}_0&=\mathbf{e}_s;  \hat{\mathbf{r}}_1=\mathbf{Pe}_s \\
        \hat{\mathbf{r}}_{k+1}&=2\mathbf{P}(\hat{\mathbf{r}}_k|_{S_k})-\hat{\mathbf{r}}_{k-1} +\lambda_k \hat{\mathbf{r}}_{k}|_{\mathcal{V}/S_{k}} +  \phi_{k-1}\hat{\mathbf{r}}_{k-1}|_{\mathcal{V}/S_{k-1}}
        \end{aligned}
        \right.
    \end{eqnarray}

    Where $\hat{\mathbf{r}}_k|_{S_k}\in \mathbb{R}^n$ represents the vector $\hat{\mathbf{r}}_k$ constrained on $S_k$. Namely, for any $u\in S_k$, $\hat{\mathbf{r}}_k|_{S_k}(u)=\hat{\mathbf{r}}_k(u)$; for any $u\in \mathcal{V}/ S_k$, $\hat{\mathbf{r}}_k|_{S_k}(u)=0$.
\end{definition}

For this definition, $2\mathbf{P}(\hat{\mathbf{r}}_k|_{S_k})$ is to partially implement the matrix-vector oracle, and $\lambda_k \hat{\mathbf{r}}_{k}|_{\mathcal{V}/S_{k}} +  \phi_{k-1}\hat{\mathbf{r}}_{k-1}|_{\mathcal{V}/S_{k-1}}$ is the "compensate" operation. The following lemma tells us that the residual propagation of the generalized subset Chebyshev recurrence follows the linear combination of three kinds of Chebyshev polynomials.

\begin{lemma}{(the behavior of residual propagation)}\label{lem:residual-propagation-generalize}
    Let $\{\hat{\mathbf{r}}_k\}$ derived by generalized subset Chebyshev recurrence, $\{\mathbf{r}_k\}$ derived by the Chebyshev recurrence. Then the relationship between $\{\hat{\mathbf{r}}_k\}$ and $\{\mathbf{r}_k\}$ is stated  as follows:
    \begin{equation}
        \footnotesize\mathbf{r}_k=\hat{\mathbf{r}}_k-\sum_{l=1}^{k-1}{\left[(2\phi_l-2)T_{k-l}(\mathbf{P})+(2-\phi_l-\lambda_l)U_{k-l}(\mathbf{P})+\lambda_l V_{k-l}(\mathbf{P})\right]\mathbf{\delta}_l}
    \end{equation}
    Where $\delta_l:=\hat{\mathbf{r}}_l|_{V/S_l}$ is the residual produced at step $l$ with $l<k$, and $\{T_k(x)\}$, $\{U_k(x)\}$, $\{V_k(x)\}$ are three kinds of Chebyshev polynomials.
\end{lemma}

\begin{proof}
    By the definition of $\delta_k$, we write the subset Chebyshev recurrence as follows:
    $$\hat{\mathbf{r}}_{k+1}=2\mathbf{P}(\hat{\mathbf{r}}_k-\delta_k)-\hat{\mathbf{r}}_{k-1}+\lambda_k\delta_k+\phi_{k-1} \delta_{k-1}$$
    
    Let's take a closer look at this equation. Note that for every $l< k$, the $l$-step residual $\delta_l$ will have an impact on $\hat{\mathbf{r}}_k$. Without loss of generality, we assume this impact is $\Psi_{k-l}^{(l)}(\mathbf{P})\mathbf{\delta}_l$, where $\Psi_{k-l}^{(l)}(x)$ is a polynomial to be determined. Equivalently speaking, we assume the following equality holds:
     \begin{equation}\label{equ:poly-to-determine}
\mathbf{r}_k=\hat{\mathbf{r}}_k-\sum_{l=1}^{k-1}{\Psi_{k-l}^{(l)}(\mathbf{P})\mathbf{\delta}_l}
    \end{equation}
    Where $\{\Psi_{k-l}^{(l)}(x)\}_{k:k\geq l}$ is some polynomials to be determined. Now, we recall the exact Chebyshev recurrence: $\mathbf{r}_{k+1}=2\mathbf{P}\mathbf{r}_k-\mathbf{r}_{k-1}$. Using Equation~\ref{equ:poly-to-determine} to replace the two sides of the exact Chebyshev recurrence, we have:
    \begin{equation}\label{equ:take-in}
    \begin{aligned}
        \hat{\mathbf{r}}_{k+1}-\sum_{l=1}^{k}{\Psi_{k+1-l}^{(l)}(\mathbf{P})\mathbf{\delta}_l}=2\mathbf{P}\left[\hat{\mathbf{r}}_k-\sum_{l=1}^{k-1}{\Psi_{k-l}^{(l)}(\mathbf{P})\mathbf{\delta}_l} \right] \\
        - \hat{\mathbf{r}}_{k-1}+\sum_{l=1}^{k-2}{\Psi_{k-l-1}^{(l)}(\mathbf{P})\mathbf{\delta}_l}
    \end{aligned}
    \end{equation}

    Then, we use the subset Chebyshev recurrence: $\hat{\mathbf{r}}_{k+1}=2\mathbf{P}(\hat{\mathbf{r}}_k-\delta_k)-\hat{\mathbf{r}}_{k-1}+\lambda_k\delta_k+\phi_{k-1} \delta_{k-1}$ to take place the term with respect to $\{\hat{\mathbf{r}}_k\}$ in two sides of Equation ~\ref{equ:take-in}, we have:

    \begin{equation*}
    \begin{aligned}
        \sum_{l=1}^{k}{\Psi_{k+1-l}^{(l)}(\mathbf{P})\mathbf{\delta}_l}=2\mathbf{P}\left[\delta_k+ \sum_{l=1}^{k-1}{\Psi_{k-l}^{(l)}(\mathbf{P})\mathbf{\delta}_l}\right]  \\
        -\sum_{l=1}^{k-2}{\Psi_{k-l-1}^{(l)}(\mathbf{P})\mathbf{\delta}_l}-2\delta_{k-1}
    \end{aligned}
    \end{equation*}
    
    We compare the coefficients of $\delta_l$ from the two sides of the equation and get the recurrence of $\Psi_{k-l}^{(l)}(\mathbf{P})\delta_l$ as follows:
    \begin{eqnarray*}
        \left\{
        \begin{aligned}
        \Psi_1^{(k)}(\mathbf{P})\delta_{k}&=2\mathbf{P}\delta_k -\lambda_k \delta_k \\
        \Psi_2^{(k-1)}(\mathbf{P})\delta_{k-1}&=2\mathbf{P}\Psi_1^{(k-1)}(\mathbf{P})\delta_{k-1}-\phi_{k-1} \delta_{k-1} \\
        \Psi_{k+1-l}^{(l)}(\mathbf{P})\delta_l&=2\mathbf{P}\Psi_{k-l}^{(l)}(\mathbf{P})\delta_l-\Psi_{k-1-l}^{(l)}(\mathbf{P})\delta_l, & & 1\leq l \leq k-2 
        \end{aligned}
        \right.
    \end{eqnarray*}

    Therefore, the recurrence of $\Psi_{k-l}^{(l)}(x)$ satisfies:
    \begin{eqnarray*}
        \left\{
        \begin{aligned}
        \Psi_1^{(l)}(x)&=2x-\lambda_l;  \Psi_2^{(l)}(x)=2x\Psi_1^{(l)}(x)-\phi_l \\
        \Psi_{k+1-l}^{(l)}(x)&=2x\Psi_{k-l}^{(l)}(x)-\Psi_{k-1-l}^{(l)}(x), & &  k\geq l
        \end{aligned}
        \right.
    \end{eqnarray*}
    
    By Lemma ~\ref{lem:combination-chebyshev}, the recurrence of $\Psi_{k-l}^{(l)}(x)$ is the linear combination of three kinds of Chebyshev polynomials, and the coefficients as we stated in Lemma ~\ref{lem:combination-chebyshev} with $c_1=\phi_l$, $c_2=2$, $c_3=-\lambda_l$. This completes the proof.
\end{proof}
By the property of Chebyshev polynomials ~\cite{mason2002chebyshev}, we can know that the value bound for three kinds of Chebyshev polynomials $\mathop{\max}\limits_{x\in [-1,1]}{|T_k(x)|}=1$, $\mathop{\max}\limits_{x\in [-1,1]}{|U_k(x)|}=k+1$ and $\mathop{\max}\limits_{x\in [-1,1]}{|V_k(x)|}=2k-1$. Figure ~\ref{fig:chebyshev-polynomials} illustrates the different behavior of these three kinds of Chebyshev polynomials. As can be shown, since the value bound for $U_k(x)$, $V_k(x)$ is worse than $T_k(x)$, they make the residual propagation of subset Chebyshev recurrence more sensitive. Thus, we want to leave only $T_k(x)$ by setting reasonable parameters $\lambda_k$ and $\phi_k$. When setting $\lambda_k=0,\phi_k=2$, Lemma ~\ref{lem:invarient} is actually a corollary of Lemma ~\ref{lem:residual-propagation-generalize}. Based on these theoretical analyses, we can restate Lemma ~\ref{lem:invarient} as the following Corollary ~\ref{coro:residual-propagation}.

\begin{figure}
    \centering
    \subfigure[$k=5$, Chebyshev polynomials]{
		\includegraphics[scale=0.24]{illustrative exp/k=5，chebyshev polynomials.png}}
     \subfigure[$k=11$, Chebyshev polynomials]{
     \includegraphics[scale=0.24]{illustrative exp/k=11，chebyshev polynomials.png}}
    
    \caption{The behavior of three kinds of Chebyshev polynomials $T_k(x)$, $U_k(x)$, $V_k(x)$.}\label{fig:chebyshev-polynomials}
\end{figure}

\begin{corollary}{(The stable residual propagation)}\label{coro:residual-propagation}
    When setting $\lambda_k=0$ and $\phi_k=2$, the relationship between $\mathbf{r}_k$ and $\hat{\mathbf{r}}_k$ is:
    \begin{equation}
        \mathbf{r}_k=\hat{\mathbf{r}}_k-2\sum_{l=1}^{k-1}{T_{k-l}(\mathbf{P})\mathbf{\delta}_l}
    \end{equation}
    Where $\delta_l:=\hat{\mathbf{r}}_l|_{\mathcal{V}-S_l}$ is the residual produced at step $l$ with $l<k$.
\end{corollary}

\section{Missing Discussions}\label{sec:generalization}
%Generalizations of our \chebpush

\subsection{\chebpush for Generalized Graph Propagation}

In the field of graph machine learning, sometimes we prefer the normalized adjacency matrix $\tilde{\mathbf{A}}=\mathbf{D}^{-1/2}\mathbf{A}\mathbf{D}^{-1/2}$ to the walk matrix $\mathbf{P}=\mathbf{AD}^{-1}$. Therefore, the generalized graph propagation is defined as follows ~\cite{wang2021approximate}.

\begin{definition}
    Given a feature $X\in \mathbb{R}^{n\times k}$ and graph propagation function $f$, the generalized graph propagation is defined as: $\mathbf{Z}=f(\mathbf{D}^{-a}\mathbf{A}\mathbf{D}^{-b})\mathbf{X}$ with $a+b=1$.
\end{definition}

This model is more commonly used in machine learning for the node-wise propagation ~\cite{bojchevski2019pagerank,gasteiger2019diffusion,wu2019simplifying}. Now we still focus on the single source version: $\mathbf{z}=f(\mathbf{D}^{-a}\mathbf{A}\mathbf{D}^{-b})\mathbf{e}_s$. Similar to single node queries, we can express this function by a single-source version.

\begin{theorem}
  Given a graph propagation function $f$, the generalized graph propagation $\mathbf{z}=f(\mathbf{D}^{-a}\mathbf{A}\mathbf{D}^{-b})\mathbf{e}_s$ can be expressed as: $\mathbf{z}=d_s^{a}\mathbf{D}^{-a}\mathbf{y}$ with $\mathbf{y}=f(\mathbf{P})\mathbf{e}_s$.
\end{theorem}

\begin{proof}
    The proof is similar to the case of single-node graph centrality. By the definition of walk matrix $\mathbf{P}=\mathbf{AD}^{-1}$ and $a+b=1$, we can know that
$(\mathbf{D}^{-a}\mathbf{A}\mathbf{D}^{-b})^k\mathbf{e}_s=\mathbf{D}^{-a}(\mathbf{AD}^{-1})^k\mathbf{D}^{1-b}\mathbf{e}_s=d_s^{1-b}\mathbf{D}^{-a}(\mathbf{P}^k\mathbf{e}_s)$. 
By the Taylor expansion $\mathbf{z}=f(\mathbf{D}^{-a}\mathbf{A}\mathbf{D}^{-b})\mathbf{e}_s=\sum_{k=0}^{\infty}{\zeta_k(\mathbf{D}^{-a}\mathbf{A}\mathbf{D}^{-b})^k\mathbf{e}_s}$, we have
$\mathbf{z}=\sum_{k=0}^{\infty}{\zeta_k(\mathbf{D}^{-a}\mathbf{A}\mathbf{D}^{-b})^k\mathbf{e}_s}=\sum_{k=0}^{\infty}{\zeta_kd_s^{1-b}\mathbf{D}^{-a}\mathbf{P}^k\mathbf{e}_s }=d_s^{a}\mathbf{D}^{-a}f(\mathbf{P})\mathbf{e}_s$.
\end{proof}

Again, we can use \chebpush to compute the approximate single-source graph propagation $\hat{\mathbf{y}}$ first, and then let $\hat{\mathbf{z}}=d_s^{a}\mathbf{D}^{-a}\hat{\mathbf{y}}$. For the general graph propagation $\mathbf{Z}=f(\mathbf{D}^{-a}\mathbf{A}\mathbf{D}^{-b})\mathbf{X}$, we just run this procedure $O(nnz(\mathbf{X}))$ times and get an approximation $\hat{\mathbf{Z}}$, where $nnz(\mathbf{X})$ is the non-zero entries of $\mathbf{X}$.

\subsection{Future Directions}

Here, we list several possible future directions as follows.

\stitle{An $O\left(\frac{1}{\sqrt{\alpha}\epsilon}\right)$ algorithm for \ssppr. } Whether there exists a $O\left(\frac{1}{\sqrt{\alpha}\epsilon}\right)$ algorithm for \ssppr to satisfy the degree-normalized relative error maintains an open problem in the research community ~\cite{chen2023accelerating,fountoulakis2022open}. Recall that the truncation step $N$ for \powermethod is $\frac{1}{\alpha} \log{\frac{1}{\epsilon}}$, and the truncation step $K$ for \ltwocheb is $\frac{1}{\sqrt{\alpha}} \log{\frac{1}{\epsilon}}$. So the problem is equivalent to asking: whether there exists an algorithm that is implemented locally on the graph while maintaining the $\frac{1}{\sqrt{\alpha}}$ convergence rate. We believe the answer to this problem should connect the structure of three term recurrence with the graph structure, or otherwise, such an algorithm does not exist.

\stitle{More accelerating methods. } There are many other accelerating methods for computing matrix functions in the field of numerical linear algebra, other than Chebyshev polynomials. Among these accelerating methods, the most famous class of algorithms is perhaps the Krylov Subspace-based methods. For a given matrix function $\mathbf{y}=f(\mathbf{A})\mathbf{x}$, the main idea of Krylov Subspace methods is to find the best polynomial solution $\hat{\mathbf{y}}=p(\mathbf{A})\mathbf{x}$ to approximate $\mathbf{y}$ in the subspace $\{\mathbf{x},\mathbf{A}\mathbf{x},\mathbf{A}^2\mathbf{x},...,\mathbf{A}^k\mathbf{x}\}$. We believe that combining the Krylov Subspace methods with graph structure or locally implementing the Krylov Subspace methods is an interesting future direction.
}

\end{document}